\algnewcommand\algorithmicinput{\textbf{Input:}}
\algnewcommand\INPUT{\item[\algorithmicinput]}
\algnewcommand\algorithmicoutput{\textbf{Output:}}
\algnewcommand\OUTPUT{\item[\algorithmicoutput]}
\tikzset{%
  dot/.style n args = {4}{name=#3, circle, draw, inner sep=1pt, minimum size=5pt, fill=black, label={[shift={(#1,#2)}]#4:$#3$}},
  lat/.style n args = {4}{name=#3, circle, draw, inner sep=1pt, minimum size=5pt, label={[shift={(#1,#2)}]#4:$#3$}},
  sb/.style n args = {4}{name=#3, circle, draw, inner sep=1pt, minimum size=7pt, label={[shift={(#1,#2)}]#4:$#3$}},
  dot5/.style n args = {5}{name=#3, circle, draw, inner sep=1pt, minimum size=5pt, fill=black, label={[shift={(#1,#2)}]#4:$#5$}},
  lat5/.style n args = {5}{name=#3, circle, draw, inner sep=1pt, minimum size=5pt, label={[shift={(#1,#2)}]#4:$#5$}},
  sq/.style n args = {4}{name=#3, rectangle, draw, inner sep=1pt, minimum size=5pt, fill=black, label={[shift={(#1,#2)}]#4:$#3$}},
  tr/.style n args = {4}{name=#3, regular polygon,regular polygon sides=4, draw, inner sep=1pt, minimum size=6pt, fill=gray, label={[shift={(#1,#2)}]#4:$#3$}},
  bordered/.style = {draw,outer sep=1, inner sep=2, minimum size=5pt},
  >={Latex[width=1.5mm,length=2mm]},
  every picture/.style={semithick}
}
\newtheorem{theorem}{Theorem}
\newtheorem{definition}{Definition}
\newtheorem{example}{Example}
\newtheorem{lemma}{Lemma}
\newcommand{\+}[1]{\ensuremath{\mathbf{#1}}}
\newcommand{\given}{{ \, | \, }}
\newcommand{\doublebar}{{ \, || \, }}
\newcommand{\NA}{\textrm{NA}}
\newcommand\independent{\protect\mathpalette{\protect\independenT}{\perp}} 
\def\independenT#1#2{\mathrel{\rlap{$#1#2$}\mkern2mu{#1#2}}}
\newcommand{\doo}{\textrm{do}}
\newcommand{\true}{{t}}
\newcommand{\proxpar}{{(* \to \true)}}
\newcommand{\parprox}{{(\true \to *)}}
\newcommand{\dosearch}{\textsf{do-search}}
\author{Santtu Tikka\\University of Jyvaskyla
   \And Antti Hyttinen\\University of Helsinki
   \And Juha Karvanen\\University of Jyvaskyla}
\title{Causal Effect Identification from\\ Multiple Incomplete Data
Sources:\\ A General Search-based Approach}
\email{santtu.tikka@jyu.fi}\\
\begin{document}

\section{Introduction}

In many fields of science, a primary interest is determining causal effects,
that is, distributions $P(\+Y \mid \doo(\+X),\+Z)$, where variables $\+ Y$
are observed, variables $\+ X$ are intervened upon (forced to values
irrespective of their natural causes) and variables $\+ Z$ are conditioned
on \citep{Pearl:book2009}.  In this paper, instead of placing various
parametric restrictions based on background knowledge, we are interested in
the question of identifiability: can the causal effect be uniquely
determined from the distributions (data) we have and a graph representing
our structural knowledge on the generating causal system.

In the most basic setting we are identifying causal effects from a single
observational input distribution, corresponding to passively observed data. 
To solve such problems more generally than what is possible with the
back-door adjustment \citep{SGS,Pearl:book2009,greenland1999},
\citet{pearl1995causal} introduced \emph{do-calculus}, a set of three rules
that together with probability theory enable the manipulation of
interventional distributions.  \citet{Shpitser} and
\citet{huangvaltorta:complete} showed that do-calculus is complete by
presenting polynomial-time algorithms whose each step can be seen as a rule
of do-calculus or as an operation based on basic probability theory.  The
algorithms have a high practical value because the rules of do-calculus do
not by themselves provide an indication on the order in which the rules
should be applied.  The algorithms save us from manual application of
do-calculus, which is a tedious task in all but the simplest problems.

Since then many extensions of the basic identifiability problem have
appeared.  In identifiability using surrogate experiments
\citep{Bareinboim:zidentifiability}, or $z$-identifiability, an experimental
distribution is available in addition to the observed probability
distribution.  For data observed in the presence of selection bias, both
algorithmic and graphical identifiability results have been derived
\citep{bareinboim2015recovering, Correa2018}.  More generally, the presence
of missing data necessitates the representation of the missingness
mechanism, which poses additional challenges \citep{Mohan2013,
Shpitser2015,bhattacharya2019}.  Another dimension of complexity is the
number of available data sources.  Identification from a mixture of
observational and interventional distributions that originate from multiple
conceptual domains is known as transportability for which complete solutions
exist in a specific setting \citep{bareinboim2014transportability}.

While completeness has been accomplished for a number of basic
identifiability problems, there are still many challenging but important
extensions to the identifiability problem that have not been studied so far. 
Table~\ref{tab:stateofart} recaps the current state of the art
identifiability results; it also describes generalizations that we aim to
investigate in this paper.  To find solutions to the more complicated
identifiability problems, we present a unified approach to the
identification of observational and interventional causal queries by
constructing a search algorithm that directly applies the rules of
do-calculus.  We impose no restrictions on the number or type of known input
distributions: we thus provide a solution to problems for which no other
algorithmic solutions exist (Row~8 in Table~\ref{tab:stateofart}).  We also
extend to identifiability under missing data together with mechanisms
related to selection bias and transportability (Row~11 in
Table~\ref{tab:stateofart}).

The following introductory example does not fall under any of the previously
solved special cases of Table~\ref{tab:stateofart}, thus necessitating our
approach.  Consider human resource management analyzing the remuneration
policy in a company.  The graph of Figure~\ref{fig:hr} shows the key
variables.  The salary ($Y$) of an employee consist of a base salary (not
modeled explicitly) and a bonus ($B$).  The base salary depends on education
($E$) and performance ($X$) of the employee.  The level of performance is
evaluated by the supervisor of the employee and it is one of the factors
that affects the level of bonus.  The performance depends on the education
as well as the attitude ($A$) of the employee and the team.  The attitude
may also have a direct effect on the bonus.

We are interested in estimating the total causal effect of performance on
the salary, i.e.,~quantifying how changes in performance affect the salary. 
The salary ($Y$), bonus ($B$), education ($E$) and performance ($X$) of the
employees are available from the registry of the human resource department. 
Data on attitude ($A$), bonus ($B$) and performance ($X$) have been
collected in an anonymized survey that cannot be linked to the registry on
the personal level.  The question of interest is the identifiability of $P(Y
\mid \doo(X))$ from the data sources $P(Y,B,E,X)$ and $P(A,B,X)$.  Using
the machinery developed in this paper, we can identify the causal effect
with the formula
\[
  P(Y \mid \doo(X)) = \sum_{B,A}P(A)P(B \mid X,A)\sum_{E} P(E)P(Y \mid X,B,E),
\]
where the conditional distributions can be directly determined from the
available data sources (see Section~\ref{sect:new_results} for further
details).

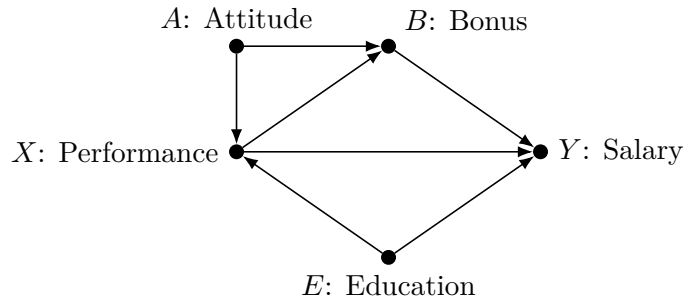
\begin{figure}[t!]
  \centering
\begin{tikzpicture}[scale=2.0]
\node [dot5 = {0}{0}{X}{left}{X \textrm{: Performance}}] at (0,0) {};
\node [dot5 = {0}{0}{E}{below}{E \textrm{: Education}}] at (1,-0.7) {};
\node [dot5 = {0}{0}{Y}{right}{Y \textrm{: Salary}}] at (2,0) {};
\node [dot5 = {0}{0}{B}{above right}{B \textrm{: Bonus}}] at (1,0.7) {};
\node [dot5 = {0}{0}{A}{above}{A \textrm{: Attitude}}] at (0,0.7) {};

\draw [->] (X) -- (Y);
\draw [->] (X) -- (B);
\draw [->] (E) -- (Y);
\draw [->] (E) -- (X);
\draw [->] (B) -- (Y);
\draw [->] (A) -- (X);
\draw [->] (A) -- (B);
\end{tikzpicture}
  \caption{Graph for the example on human resource management in a company. }
  \label{fig:hr}
\end{figure}
To combat the inherent computational complexity of the search-based
approach, we derive rules and techniques that avoid unnecessary
computational steps.  We are able to detect trivial queries where
non-identifiability can be determined directly from the inputs.  We also
present a search heuristic that considerably speeds up the search in cases
where the effect is indeed identifiable.  The approach, called \dosearch{},
is provably sound and it retains the completeness in the cases previously
proven to be solved by the rules of do-calculus.  We can easily scale up to
the problem sizes commonly reported in the literature.  The \proglang{R}~package
\pkg{dosearch} \citep{rsoft,dosearch} provides an implementation of
\dosearch{} and is available from the Comprehensive \proglang{R} Archive
Network (CRAN) at \url{https://CRAN.R-project.org/package=dosearch}.

\begin{table}[t!]
\centering
\begin{footnotesize}
\begin{tabular}{@{\hspace{4pt}} p{0.15cm} @{\hspace{9pt}} p{4.1cm} @{\hspace{4pt}} p{2cm} p{4.2cm} @{\hspace{4pt}} p{1.2cm} @{\hspace{12pt}} p{2.1cm} @{}}
\hline
& & & & \textbf{Missing} &  \\
& \textbf{Problem} & & \textbf{Input} & \textbf{data} &
\textbf{Method}
 \\
 & \textbf{(Reference)}  & \textbf{Target} & \textbf{(assumptions)} & \textbf{pattern} & \textbf{(complete)}  \\ 
\hline
1  & Causal effect identifiability & $P(\+Y \given \doo(\+ X))$ & $P(\+V)$ & None & ID  \\
   & {\scriptsize \citep{Shpitser}} & & & & {\scriptsize (Yes)} \\ [0.1cm] 
2  & Causal effect identifiability  & $P(\+Y \given \doo(\+ X), \+ Z)$ & $P(\+V)$ & None & IDC \\ 
   & {\scriptsize \citep{Shpitser_conditional}} & & & & {\scriptsize (Yes)} \\  [0.1cm]
3  & $z$-identifiability & $P(\+Y \given \doo(\+ X), \+ Z)$    & $P(\+V)$, $P(\+V \setminus \+B \given \doo(\+B))$ & None & zID   \\ 
   & {\scriptsize \citep{Bareinboim:zidentifiability}} & & (NE, ED) & & {\scriptsize (Yes)} \\ [0.1cm]
4 & $g$-identifiability & $P(\+Y \given \doo(\+ X))$     & $\{P(\+V \setminus \+B_i \given \doo(\+ B_i) \}$  & None & gID \\  
    &  {\scriptsize \citep{lee2019general}} & & (ED) & & {\scriptsize (Yes)} \\  [0.1cm]
5  & Surrogate outcome  & $P(\+Y \given \doo(\+ X), \+ Z)$     & $\{P(\+ A_i \given \doo(\+ B_i), \+ C_i) \}$  & None & TRSO  \\ 
   & identifiability & & (NE, SO) & & {\scriptsize (No)}  \\ 
   & {\scriptsize \citep{Tikka:surrogate}} & & & & \\ [0.1cm]
6  & $mz$-transportability & $P(\+Y \given \doo(\+ X), \+ Z)$ & $\{P(\+V \setminus (\+ B_i \cup \+ T_i) \given \doo(\+B_i), \+ T_i)\}$  & None & TR$^{\textrm{mz}}$  \\
   & {\scriptsize \citep{bareinboim2014transportability}} & & (NEDD, ED)& & {\scriptsize (Yes)}  \\ [0.1cm]    
7  & Selection bias recoverability & $P(\+Y \given \doo(\+ X), \+ Z)$ & $P(\+V \setminus S \given S)$ & Selection & RC \\ 
   &  {\scriptsize \citep{bareinboim2015recovering}} & & & & {\scriptsize (Unknown)}   \\  [0.1cm]
\emph{\textbf{8}} & \emph{\textbf{Generalized identifiability}}  & $P(\+Y \given \doo(\+ X), \+ Z)$     & $\{ P(\+ A_i \given \doo(\+ B_i), \+ C_i) \}$  & \emph{\textbf{None}} & \emph{\textbf{do-search}} \\
   & & & & & {\scriptsize (Unknown)} \\ [0.1cm]
9  & Missing data recoverability & $P(\+V)$ & $P(\+ V^*)$ & Restricted & -- \\ 
   & {\scriptsize \citep{Mohan2013}} & & & & {\scriptsize (Yes)}  \\ [0.1cm]
%
%
10  & Missing data recoverability & $P(\+V)$ & $P(\+ V^*)$ & Arbitrary & --  \\ [0.1cm]
   & {\scriptsize \citep{bhattacharya2019}} & & & & {\scriptsize (Unknown)}   \\ [0.1cm]
\emph{\textbf{11}} & \emph{\textbf{Generalized identifiability}}  & $P(\+ Y \given \doo(\+ X), \+ Z)$     & $\{ P(\+ A_i^* \given \doo(\+ B_i), \+ C_i^*) \}$      & \emph{\textbf{Arbitrary}} &  \emph{\textbf{do-search}}   \\ 
& \emph{\textbf{with missing data}} & &  &  & {\scriptsize (No)} \\
\hline
\end{tabular}
\end{footnotesize}
\caption{Solved and unsolved problems in causal effect identification. 
Bold-italic denotes the previously unsolved problems for which \dosearch{}
can now be used.  Input $P(\+ V)$ stands for passively observed joint
distribution of all variables.  Input $P(\+V^*)$ is the joint distribution
with missing data (see Section~\ref{sect:missingness}).  The variable sets
present in the same distribution are disjoint.  Input $P(\+V \setminus \+ B
\mid \doo(\+B))$ stands for an experiment where all variables are measured
and input $P(\+ A \mid \doo(\+ B))$ stands for an experiment where only a
subset $\+ A \subset \+V$ of the variables is measured.  Notation
$\{\cdot\}$ denotes a set of inputs enumerated by the index $i$.  The
assumptions of nested experiments (NE), entire distributions (ED) and nested
experiments in different domains (NEDD) are explained in
Section~\ref{sect:problem}.  Assumptions related to surrogate outcomes (SO)
can be found in \citep{Tikka:surrogate}.  Input $P(\+V \mid S)$ means the
joint distribution under selection bias.  The last column gives the name of
an algorithm that can be used to solve the problem if one exists and whether
it (or a theorem when no algorithm is provided) provides a complete solution
to the problem, or whether the completeness status is not known.  An
algorithm is complete if it returns a correct formula precisely when the
target query is identifiable.  Problems~1--7 are special cases of Problem~8
and Problems~1--10 are special cases of Problem~11.  }
\label{tab:stateofart}
 \end{table}
Other available software for causal effect identifiability problems are only
applicable to a subset of the problems presented in
Table~\ref{tab:stateofart}.  The \pkg{causaleffect} \proglang{R}~package
\citep{Tikka:identifying} can only be used for problems on Rows~1--3 and
5--7 of the table, providing implementations of the relevant algorithms. 
For the problem on Row 1, the generalized adjustment criterion
\citep{perkovic2015} can be applied using the \pkg{dagitty} package
\citep{dagitty} or the \pkg{pcalg} package \citep{pcalg} in \proglang{R}. 
The generalized back-door criterion \citep{MAATHUIS_GBD} is also available
in the \pkg{pcalg} \proglang{R}~package.  The standard back-door criterion
is available in the \proglang{Python} package \pkg{DoWhy} \citep{dowhy}. 
Algorithms implemented in \pkg{causaleffect} run in polynomial time and can
outperform \pkg{dosearch} in their respective \emph{restricted} problem
settings especially with larger graphs.  For a comprehensive performance
comparison between \pkg{causaleffect} and various adjustment criteria, see
\citep{VANDERZANDER20191}.

The paper is structured as follows.  Section~\ref{sect:problem} formulates
our general identification problem and explains the scenarios in
Table~\ref{tab:stateofart} and previous research in detail. 
Section~\ref{sect:search} presents the search algorithm, including the rules
we use, search space reduction techniques, heuristics and theoretical
properties.  Section~\ref{sect:missingness} shows how the search can be
extended to problems that involve missing data.  Section~\ref{sect:package}
demonstrates how the search can be used in \proglang{R} via the
\pkg{dosearch} package.  Efficacy of the search is assessed via simulations. 
Section~\ref{sect:new_results} shows a number of new problems for which we
can find solutions by using the search including a real-world application. 
These problems include combined transportability and selection bias,
multiple sources of selection bias, and causal effect identification from
arbitrary (experimental) distributions.  This section also includes a
systematic analysis of missing data problems and case-control designs. 
Section~\ref{sect:discussion} discusses the merits and limitations of the
approach.  Section~\ref{sect:conclusion} offers concluding remarks.

\section{The general causal effect identification problem} \label{sect:problem}

Our presentation is based on Structural Causal Models (SCM) and the language
of directed graphs.  We assume the reader to be familiar with these concepts
and refer them to detailed works on these topics for extended discussion and
descriptions, such as \citep{Pearl:book2009} and \citep{Koller09}.

Following the standard set-up of do-calculus \citep{pearl1995causal}, we
assume that the causal structure can be represented by a
\emph{semi-Markovian causal graph} $G$ over a set of vertices $\+ V$ (see
Fig~\ref{fig:simple_search}(\subref{fig:intro_backdoor}) for example).  The
directed edges correspond to direct causal relations between the variables
(relative to $\+ V$); directed edges do not form any cycles.  Confounding of
any two observed variables in $\+ V$ by some unobserved common cause is
represented by a bidirected edge between the variables.  This graphical
representation allows us to deal with any causal structures where some
variables are unmeasured.  We assume a positive distribution over the
variables \citep{HUANGVALTORTA} ensuring that all considered causal effects
and conditional distributions are well-defined.

In a non-parametric setting, the problem of expressing a causal quantity of
interest in terms of available information has been described in various
ways depending on the context.  When available data are affected by
selection bias or missing data, a typical goal is to ``recover'' a joint or
marginal distribution.  If data are available from multiple conceptual
domains, a distribution is ``transported'' from the source domains, from
which a combination of both observational and experimental data are
available, to a target domain.  The aforementioned settings can be expressed
in the SCM framework by equipping the graph of the model with special
vertices.  However, on a fundamental level these problems are simply
variations of the original identifiability problem of causal effects and as
such, our goal is to represent them as a single generalized identifiability
problem.  Formally, identifiability can be defined as follows
\citep{Pearl:book2009,shpitser2008}.

\begin{definition}[Identifiability] \label{def:identifiability}
Let $\+ M$ be a set of models with a description $T$ and two objects $\phi$
and $\theta$ computable from each model.  Then $\phi$ is \emph{identifiable}
from $\theta$ in $T$ if $\phi$ is uniquely computable from $\theta$ in any
model $M \in \+ M$.  In other words, all models in $\+ M$ which agree on
$\theta$ also agree on $\phi$.
\end{definition}
In the simplest case, the description $T$ refers to the graph induced by
causal model, $\theta$ is the joint distribution of the observed variables
$P(\+ V)$ and the query $\phi$ is a causal effect $P(Y \mid \doo(X))$.  On
the other hand, proving non-identifiability of  $\phi$ from $\theta$ can be
obtained by describing two models $M^1, M^2 \in \+ M$ such that $\theta$ is
the same in $M^1$ and $M^2$, but object $\phi$ in $M^1$ is different from
$\phi$ in $M^2$.

The general form for a causal identifiability problem that we consider in
this paper is formulated as follows.
\begin{description}
\item[Input:]
A set of input distributions of the form $P(\+ A_i \mid \doo(\+ B_i),\+
C_i)$, a query $P(\+ Y \mid \doo(\+ X), \+ Z)$  and a semi-Markovian
causal graph $G$ over $\+ V$.  \item[Task:]

Output a formula  for the query  $P(\+ Y \mid \doo(\+ X),\+ Z)$ over the
input distributions, or decide that it is not identifiable.

\end{description}
Here  $\+ A_i,\+ B_i, \+ C_i$ are disjoint subsets of $\+ V$ for all $i$,
and $\+ X,\+ Y,\+ Z$ are disjoint subsets of $\+ V$.  The causal graph $G$
may contain vertices which describe mechanisms related to transportability
and selection bias.  In the following sections we explain several important
special cases of this problem definition, some that have been considered in
the literature and some which have not been.

\subsection{Previously considered scenarios as special cases}
\label{sect:prev_scenarios}

We restate the concepts of transportability and selection bias under the
causal inference framework, and show that identifiability in the scenarios
of Rows~1--7 of Table~\ref{tab:stateofart} falls under the general form on
Row~8.  We return to problems that involve missing data on Rows~9--11 later
in Section~\ref{sect:missingness}.

\subsubsection{Causal Effect Identifiability} Input is restricted to a passive
observational distribution $P(\+ V)$.  The target is either a causal effect
$P(\+ Y \mid \doo ( \+ X) )$ for Row~1 of Table~\ref{tab:stateofart} or a
conditional causal effect $P(\+ Y \mid \doo ( \+ X), \+ Z )$ for Row~2 of
Table~\ref{tab:stateofart} \citep{Shpitser,Shpitser_conditional}.

\subsubsection{$z$-identifiability} Similarly to ordinary causal effect
identification, the input consists of the passive observational distribution
$P(\+ V)$ but also of experimental distributions known as surrogate
experiments intervening on a set $\+ B$ \citep{Bareinboim:zidentifiability}. 
Two restricting assumptions, which we call nested experiments and entire
distributions, apply to surrogate experiments.  Experiments are called
nested experiments (NE) when for each experiment intervening a set of
variables $\+B$, experiments intervening on all subsets of $\+B$ are
available as well.  Entire distributions (ED) denote the assumption that the
union of observed and intervened variables is always the set of all
variables $\+V$.

\subsubsection{$g$-identifiability} Unlike $z$-identifiability, the input does
not contain the passive observational distribution $P(\+ V)$ but consists
instead of surrogate experiments on the sets $\{ \+ B_i \}$
\citep{lee2019general} without the assumption of nested experiments.  The
assumption of entire distributions holds.

\subsubsection{Surrogate Outcome Identifiability} Surrogate outcomes generalize
the notion of surrogate experiments from $z$-identifiability.  For surrogate
outcomes, the assumption of nested experiments still holds, but the
assumption of entire distributions can be dropped.  Some less strict
assumptions (SO) still apply \citep{Tikka:surrogate}.  The idea of surrogate
outcomes is that data from previous experiments are available, but the
target $\+ Y$ was at most only partially measured in these experiments and
the experiments do not have to be disjoint from $\+ X$.

\subsubsection{Transportability} The problem of incorporating data from multiple
causal domains is known as transportability \citep{bareinboim2013general}. 
Formally, the goal is to identify a query in a target domain $\pi^*$ using
data from source domains $\pi_1, \ldots ,\pi_n$.  The domains are
represented in the causal graph using a special set of transportability
nodes $\+ T$ which is partitioned into disjoint subsets $\+ T_1,\ldots,\+
T_n$ corresponding to each domain $\pi_i$.  The causal graph contains an
extra edge $T_{ij} \rightarrow V_j$ whenever a functional discrepancy in
$f_{V_j}$ or in $P(u_{V_j})$ exists between the target domain $\pi^*$ and
the source domain $\pi_i$.  The discrepancy is active if $T_{ij} = 1$ and
inactive otherwise.  A distribution associated with a domain $\pi_i$ is of
the form $P(\+ A \mid \doo(\+ B),\+ C,\+ T_i = 1,\+ T_{-i} = 0)$, where
$\+ T_{-i}$ denotes the other subsets of the partition of $\+ T$ except $\+
T_i$.  In other words, only the discrepancies between the $\pi_i$ and
$\pi^*$ are active.  A distribution corresponding to the target domain has
no active discrepancies meaning that it is of the form $P(\+ A \mid
\doo(\+ B),\+ C,\+ T = 0$).  Any variable is conditionally independent from
inactive transportability nodes since their respective edges vanish. 
Furthermore, since transportability nodes set to $0$ vanish, we can assume
any present transportability node to have the value $1$.  Thus an input
distribution from a domain $\pi_i$ takes the form $P(\+ A \mid \doo(\+
B),\+ C,\+ T_i)$.  In the specific case of $mz$-transportability, the
assumptions of entire distributions (ED) and nested experiments in different
domains (NEDD) apply, which means that $P(\+ V \setminus (\+ B_i^\prime \cup
\+ T_i) \mid \doo(\+ B_i^\prime), \+ T_i)$ is available for every subset
$\+ B_i^\prime$ of $\+ B_i$ in each domain $\pi_i$.

\subsubsection{Selection Bias Recoverability} Selection bias can be seen as a
special case of missing data, where the mechanism responsible for the
preferential selection is represented in the causal graph by a special sink
vertex $S$ \citep{Bareinboim2012:selectionbias}.  Typical input for the
recoverability problem is $P(\+ V \mid S = 1)$, the joint distribution
observed under selection bias.  Just as in the case of transportability
nodes, selection bias nodes only appear when the mechanism has been enabled. 
Thus we may assume that the input is of form $P(\+ V \mid S)$.  More
generally, we can consider input distributions of the form $P(\+ A \mid
\doo(\+ B),\+ C, S)$.

\subsection{New scenarios as special cases} \label{sect:new_scenarios}

The following settings are special cases of the general identifiability
problem of Row~8 in Table~\ref{tab:stateofart} that do not fall under any of
the problems of Rows~1--7.  They serve as interesting additions to the cases
considered in the literature.  Concrete examples on these new scenarios are
presented in Section~\ref{sect:new_results}.  Section~\ref{sect:missingness}
extends the general problem of Row~8 in Table~\ref{tab:stateofart} to the
general problem with missing data on Row~11 while also showcasing the
special cases of Rows~9 and 10.

\subsubsection{Multiple Data Sources with Partially Overlapping Variable Sets}

The scenario where only subsets of variables are ever observed together has
been extensively considered in the causal discovery literature
\citep{danks2009integrating,tillman2011learning,triantafillou2010learning},
but not in the context of causal effect identification.  In the basic
setting the input consists of passively observed distributions $P(\+ A_i)$
such that $\+ A_i \subset \+ V$.  We may also observe experimental
distributions $P(\+ A_i \mid \doo (\+ B_i))$
\citep{heh2012uai,overlapping_sofia} or even conditionals $P(\+ A_i \mid
\doo(\+ B_i), \+ C_i)$.  Our approach sets no limitations for the number or
types of input distributions.

\subsubsection{Combining Transportability and Selection Bias} To the best of our
knowledge, the frameworks of transportability and selection bias have not
been considered simultaneously.  The combination of these scenarios fits
into the general problem formulation.  For example, we may have access to
two observational distributions originating from different source domains,
but affected by the same biasing mechanism: $P(\+ A_1 \mid \+ C_1, T_1,
S)$ and $P(\+ A_2 \mid \+ C_2, T_2, S)$, where $T_1$ and $T_2$ are the
transportability nodes corresponding to the two source domains and $S$ is
the selection bias node.

\subsubsection{Recovering from Multiple Sources of Selection Bias} In recent
literature on selection bias as a causal inference problem, the focus has
been on settings where only a single selection bias node is present
\citep[e.g.,][]{Bareinboim2014:selectionbias,Correa2017,Correa2018}. 
However, multiple sources of selection bias are typical in longitudinal
studies where dropout occurs at different stages of the study.  Our approach
is applicable for an arbitrary number of selection bias mechanisms and input
distributions affected by arbitrary combinations of these mechanisms.  In
other words, if $\+ S$ is the set of all selection bias nodes present in the
graph, the inputs can take the form $P(\+ A \mid \doo(\+ B), \+ C, \+
S^\prime)$, where $\+ S^\prime$ is an arbitrary subset of $\+ S$.

\section{A search based approach for causal effect identification}
\label{sect:search}

The key to identification of causal effects is that interventional
expressions can be manipulated using the rules of do-calculus.  We present
these rules for augmented graphs where an additional intervention variable
$I_X$ such that $I_X \rightarrow X$ is added to the induced graph for each
variable $X$ \citep{SGS,Pearl:book2009,lauritzen2000causal} (see
Figure~\ref{fig:simple_search}(\subref{fig:augmented})).
%
%
Now a $d$-separation condition (or $m$-separation~\citep{msep}) of the form $\+
Y \independent{} \+ I_{\+ Z} \mid \+ X, \+ Z, \+W \doublebar \+X$ means
that nodes $\+ Y$ and intervention nodes  $\+ I_{\+ Z}$ of $\+ Z$ are
$d$-separated given $\+ X$, $\+ Z$ and $\+ W$ in a graph where edges incoming
to (intervened) $\+ X$ have been removed
\citep{hyttinen2015,dawid2002influence}.  The three rules of do-calculus
\citep{pearl1995causal} can be expressed as follows:
\begin{equation} \label{eq:docalc}
\begin{aligned}
P(\+ Y \mid \doo(\+ X), \+ Z, \+ W) &= P(\+ Y \mid \doo(\+ X), \+ W),
\text{ if } \+ Y \independent{} \+ Z \mid \+ X, \+ W \doublebar \+ X \\
P(\+ Y \mid \doo(\+ X, \+ Z), \+ W) &= P(\+ Y \mid \doo(\+ X), \+ Z, \+
W), \text{ if } \+ Y \independent{} \+ I_{\+ Z} \mid \+ X, \+ Z, \+W
\doublebar \+X \\
P(\+ Y \mid \doo(\+ X, \+ Z), \+ W) &= P(\+ Y \mid \doo(\+ X), \+ W),
\text{ if } \+ Y \independent{}\+  I_{\+ Z} \mid \+ X, \+W \doublebar \+X
\end{aligned}
\end{equation}
The rules are often referred to as insertion/deletion of observations,
exchange of actions and observations, and insertion/deletion of actions
respectively.  Each rule of do-calculus is only applicable if the
accompanying $d$-separation criterion (on the right-hand side) holds in the
underlying graph.  In addition to these rules, most derivations require
basic probability calculus.
\begin{algorithm}[!t]
  \begin{algorithmic}[1]
  \INPUT{Target $Q = P(\+ Y \mid \doo(\+ X),\+ W)$, a semi-Markovian graph
  $G$ and a set of known input distributions $\+ P=\{P_1,\ldots,P_n\}$}.
  \OUTPUT{A formula for $Q$ 
  or NA if the effect is not identifiable.}
  \State \textbf{for each} $P_i \in \+ P$ \textbf{do}
    \State \quad Derive new distributions 
     from $P_i$ 
     such that:
    \Statex \quad \quad \textbullet~The required $d$-separation criteria are satisfied by $G$.
    \Statex \quad \quad \textbullet~Any possible additional input required must also be in $\+ P$.
  \State \quad Add the new identified distributions to $\+ P$.
  \State \quad If $Q$ was derived, return a formula for it.
  \State Return NA.
  \end{algorithmic}
  \caption{An outline of a search for causal effect identification. }
  \label{prog:naive}
\end{algorithm}
Do-calculus directly motivates a forwards search over its rules.  The
outline of this type of search is given in Algorithm~\ref{prog:naive}.  The
algorithm derives new identifiable distributions based on what has been
given as the input or identified in the previous steps.  For each identified
distribution every rule of do-calculus and standard probability
manipulations of marginalization and conditioning are applied in succession,
until the target distribution is found, or no new distributions can be found
to be identifiable.  A preliminary version of this kind of search is used by
\citet{hyttinen2015} as a part of an algorithmic solution to causal effect
identifiability when the underlying graph is unavailable.

The formulas produced by Algorithm~\ref{prog:naive} correspond to short
derivations and unnecessarily complicated expressions are avoided.  Also,
only distributions guaranteed to be identifiable are derived and used during
the search.  Formulas for intermediary queries that were identified during
the search are also available as a result.  Alternatively, one could also
start with the target and search towards the input distributions; a search
in this direction will spend time deriving a number expressions that are
inevitably non-identifiable based on the input.  A depth-first search would
produce unnecessarily complicated expressions.
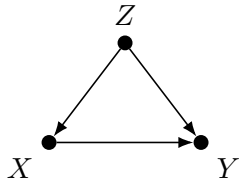
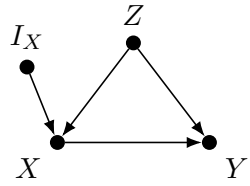
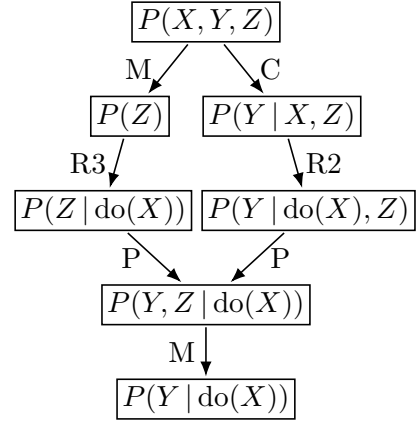
\begin{figure}[t!]
  \centering
  \begin{subfigure}[t]{0.3\textwidth}
    \centering
    \begin{tikzpicture}[scale=2]
    \node [dot = {0}{0}{X}{below left}] at (0,0) {};
    \node [dot = {0}{0}{Y}{below right}] at (1,0) {};
    \node [dot = {0}{0}{Z}{above}] at (0.5,0.66) {};

    \draw [->] (X) -- (Y);
    \draw [->] (Z) -- (Y);
    \draw [->] (Z) -- (X);
    \end{tikzpicture}
    \caption{Example graph.}
    \label{fig:intro_backdoor}
  \end{subfigure}
  \hfill
    \begin{subfigure}[t]{0.25\textwidth}
    \centering
    \begin{tikzpicture}[scale=2]
    \node [dot = {0}{0}{I_X}{above}] at (-0.2,0.5) {};
    \node [dot = {0}{0}{X}{below left}] at (0,0) {};
    \node [dot = {0}{0}{Y}{below right}] at (1,0) {};
    \node [dot = {0}{0}{Z}{above}] at (0.5,0.66) {};

    \draw [->] (I_X) -- (X);
    \draw [->] (X) -- (Y);
    \draw [->] (Z) -- (Y);
    \draw [->] (Z) -- (X);
    \end{tikzpicture}
    \caption{Augmented graph.}
    \label{fig:augmented}
  \end{subfigure}
  \hfill
  \begin{subfigure}[t]{0.4\textwidth}
    \centering
    \begin{tikzpicture}{scale=0.82}
    \node[bordered] (v1) at (0,4) {$P(X,Y,Z)$};
    \node[bordered] (v2) at (-1,2.75) {$P(Z)$};
    \node[bordered] (v3) at (1,2.75) {$P(Y \mid X,Z)$};
    \node[bordered] (v4) at (-1.35,1.5) {$P(Z \mid \doo(X))$};
    \node[bordered] (v5) at (1.35,1.5) {$P(Y \mid \doo(X), Z)$};
    \node[bordered] (v6) at (0,0.25) {$P(Y,Z \mid \doo(X))$};
    \node[bordered] (v7) at (0,-1) {$P(Y \mid \doo(X))$};
    \draw[->] (v1) edge node[left,xshift=-2] {M} (v2);
    \draw[->] (v1) edge node[right,xshift=2] {C} (v3);
    \draw[->] (v2) edge node[left] {R3} (v4);
    \draw[->] (v3) edge node[right] {R2} (v5);
    \draw[->] (v4) edge node[left,xshift=-1] {P} (v6);
    \draw[->] (v5) edge node[right,xshift=1] {P} (v6);
    \draw[->] (v6) edge node[left] {M}  (v7);
    \end{tikzpicture}
    \caption{A derivation for $P(Y \mid \doo(X))$.}
    \label{fig:derivation}
  \end{subfigure}
  \caption{
  The back-door criterion holds in the example graph
   (\subref{fig:intro_backdoor}) for $Z$.  The augmented graph
   (\subref{fig:augmented}) includes the intervention node $I_X$ for $X$
   explicitly.  The labels M, C, P, R2 and R3 in the derivation of
   (\subref{fig:derivation}) refer to marginalization, conditioning, product
   rule and Rules~2 and 3 of do-calculus respectively (see
   Table~\ref{tab:rules}).  The required $d$-separation conditions $Y
   \independent I_X \mid Z,X$ for R2 and $Z \independent I_X$ for R3 hold
   in the augmented graph (\subref{fig:augmented}).  }
  \label{fig:simple_search}
\end{figure}
The search can easily derive for example the back-door criterion in the
graph of Figure~\ref{fig:simple_search}(\subref{fig:intro_backdoor}) as
shown by the derivation in
Figure~\ref{fig:simple_search}(\subref{fig:derivation}).  The target is $Q =
P(Y \mid \doo(X))$ and input is $\+ P = \{P(X,Y,Z)\}$.  From $P(X,Y,Z)$
the search first derives the marginal $P(Z)$ and the conditional $P(Y \mid
X,Z)$.  Then $P(Z \mid \doo(X))$ is derived by the third rule of
do-calculus because $Z \independent I_X$.  The second rule derives $P(Y
\mid \doo(X), Z)$  from $P(Y \mid X,Z)$ as $Y \independent I_X \mid
Z,X$.  The two terms can be combined via the product rule of probability
calculus to get $P(Y, Z \mid \doo(X))$ and finally the target is $P(Y
\mid \doo(X))$ is just a marginalization of this.  The familiar formula
$\sum_{Z} P(Y \mid X,Z)P(Z)$ is thus obtained.

However, it is not straightforward to make a search over do-calculus
computationally feasible.  The search space in
Figure~\ref{fig:simple_search}(\subref{fig:derivation}) shows only the parts
that resulted in the identifying formula: for example all passively observed
marginals and conditionals over $\+ V$ can be derived from the input $P(\+
V)$.  Especially in a non-identifiable case a naive search may go through a
huge space before it can return the non-identifiable verdict.  The choice of
rules is also not obvious: a redundant rule may make the search faster or
slower; false non-identifiability may be concluded if a necessary rule is
missing.  Also the order in which the rules are applied can have a large
impact on the performance of the search.  In the following sections we will
provide non-trivial solutions to these challenges.

\subsection{Rules} \label{sect:rules}

Table~\ref{tab:rules} lists the full set of rules used to manipulate
distributions during the search, generalizing the work by
\citet{hyttinen2015}.
\begin{table}[t!]
\centering
\begin{small}
\begin{tabular}{lllll}
\hline
 Rule & Additional Input & Output & Description \\
\hline
 $1+$ && $P(\+ Y \mid \doo(\+ X), \+ Z, \+ W)$ & Insertion of observations \\
 $1-$ && $P(\+ Y \mid \doo(\+ X), \+ W \setminus \+ Z)$ & Deletion of observations \\
 $2+$ && $P(\+ Y \mid \doo(\+ X,\+ Z), \+ W \setminus \+ Z)$ & Observation to action exchange \\
 $2-$ && $P(\+ Y \mid \doo(\+ X \setminus \+ Z),\+ Z, \+ W)$ & Action to observation exchange \\
 $3+$ && $P(\+ Y \mid \doo(\+ X,\+ Z), \+ W)$ & Insertion of actions \\
 $3-$ && $P(\+ Y \mid \doo(\+ X \setminus \+ Z), \+ W)$ & Deletion of actions \\
\hline
 $4$ && $P(\+ Y \setminus \+ Z \mid \doo(\+ X),\+ W)$ & Marginalization \\
 $5$ && $P(\+ Y \setminus \+ Z \mid \doo(\+ X),\+ Z,\+ W)$ & Conditioning \\
 $6+$ & $P(\+ Z \mid \doo(\+X),\+ W \setminus \+ Z)$ & $P(\+Y, \+Z \mid \doo(\+ X),\+ W \setminus \+ Z)$ & Chain rule multiplication \\
 $6-$ & $P(\+ Z \mid \doo(\+ X), \+ Y,\+ W)$  & $P(\+Y, \+Z \mid \doo(\+ X),\+W)$ & Chain rule multiplication \\
\hline
\end{tabular}
\end{small}
\caption{The rules used to manipulate input distributions of the form $P(\+
Y \mid \doo(\+ X), \+ W)$.  The output distribution is identified if the
input has been previously identified and if the corresponding $d$-separation
Criteria~\ref{eq:docalc} hold in the graph (for Rules~$1\pm,2\pm$ and
$3\pm$) or if the additional input has also been identified (Rules~$6\pm$). 
The sets $\+ Y, \+ X$ and $\+ W$ are disjoint.  The role of the set $\+ Z$
depends on the rule being applied (see Table~\ref{tab:validsets}).}
\label{tab:rules}
\end{table}
\noindent
\subsubsection{Do-calculus} Rules $1\pm, 2\pm$ and $3\pm$ correspond to the
rules of do-calculus such that Rules~$1+$, $2+$ and $3+$ are used to add
conditional variables and interventions and Rules~$1-, 2-, 3-$ are used to
remove them.  Each rule is only valid if the corresponding $d$-separation
criterion given in the beginning of Section~\ref{sect:search} holds.

\subsubsection{Probability theory} Rule~$4$ performs marginalization over $\+ Z
\subset \+ Y$, and produces a summation at the formula level:
\[
 P(\+Y \setminus \+ Z \mid \doo(\+ X),\+ W) = \sum_{\+ Z} P(\+ Y \mid
 \doo(\+ X),\+ W).
\]
Similarly, Rule~$5$ conditions on a subset $\+ Z \subset \+ Y$ to obtain the
following formula:
\[
  P(\+ Y \setminus \+ Z \mid \doo(\+ X),\+Z, \+ W) = \frac{P(\+ Y \mid
  \doo(\+ X),\+ W)}{\sum_{\+ Y \setminus \+ Z} P(\+ Y \mid \doo(\+ X),\+
  W)}.
\]
Rules $6+$ and $6-$ perform multiplication using the chain rule of
probability which requires two known distributions.  When Rule~$6+$ is
applied, the distribution $P(\+ Y \mid \doo(\+ X),\+ W)$ is known and we
check whether $P(\+ Z \mid \doo(\+ X),\+ W \setminus \+ Z)$ is known as
well.  For Rule~$6-$, the roles of the distributions are reversed.  In the
case of Rule~$6+$, $\+ Z$ is a subset of $\+ W$ and we obtain
\[
  P(\+ Y,\+ Z \mid \doo(\+ X), \+ W \setminus \+ Z) = P(\+ Y \mid
  \doo(\+ X),\+ W) P(\+ Z \mid \doo(\+ X),\+ W \setminus \+ Z).
\]
The two version of the chain rule are needed: it may be the case that when
expanding $P(\+ Y \mid \doo(\+ X), \+ W)$ with Rule~$6+$ the additional
input $P(\+ Z \mid \doo(\+X),\+ W \setminus \+ Z)$ is only identified
later in the search.  Then, $P(\+Y, \+Z \mid \doo(\+ X),\+W)$ is
identified when Rule~$6-$ is applied to $P(\+ Y \mid \doo(\+X),\+ W)$.

\subsection{Improving the efficacy of the search} \label{sect:improvements}

In this section, we present various techniques that improved the efficiency
of the search.  These findings are implemented in a search algorithm in
Section~\ref{sect:algo}.

\subsubsection{Term expansion}

Term expansion refers to the process of deriving new distributions from an
input distribution using the rules of Table~\ref{tab:rules}.  By \emph{term}
we mean a single identified distribution.  A term is considered
\emph{expanded} if the rules of Table~\ref{tab:rules} have been applied to
it in every possible way when the term is in the role of the input.  Note
that an expanded distribution may still take the role of an additional input
when another term is being expanded.  Consider the step of expanding the
input term in Table~\ref{tab:rules} to all possible outputs with any rule. 
This can be done by enumerating every non-empty subset $\+ Z$ of $\+ V$, and
applying the rule with regard to it.
\begin{table}[t!]
\centering
\begin{small}
\begin{tabular}{llll}
\hline
Rule & Validity condition & Termination condition \\
\hline
$1+$  & $\+ Z \cap (\+ Y \cup \+ X \cup \+ W) = \emptyset$  & \\
$1-$  & $\+ Z \subseteq \+ W$                               & $\+ W = \emptyset$ \\
$2+$  & $\+ Z \subseteq \+ W$                               & $\+ W = \emptyset$ \\
$2-$  & $\+ Z \subseteq \+ X$                               & $\+ X = \emptyset$ \\
$3+$  & $\+ Z \cap (\+ Y \cup \+ X \cup \+ W) = \emptyset$  & \\
$3-$  & $\+ Z \subseteq \+ X$                               & $\+ X = \emptyset$ \\ \hline
$4$   & $\+ Z \subset \+ Y$                                 & $\lvert \+ Y\rvert = 1$ \\
$5$   & $\+ Z \subset \+ Y$                                 & $\lvert\+ Y\rvert = 1$ \\
$6+$  & $\+ Z \subseteq \+ W$                               & $\+ W = \emptyset$ \\
$6-$  & $\+ Z \cap (\+ Y \cup \+ X \cup \+ W) = \emptyset$  & \\ \hline
\end{tabular}
\end{small}
\caption{The conditions for the enumerated subset $\+ Z$ for applying the
rules of Table~\ref{tab:rules} to a term $P(\+ Y \mid \doo(\+ X),\+ W)$. 
For Rules~$6+$ and $6-$, the conditions specify valid variables of the
second required term.}
\label{tab:validsets}
\end{table}
Table~\ref{tab:validsets} outlines the requirements for $\+ Z$ for each rule
of the search.  Table~\ref{tab:validsets} tells us that when an observation
$\+ Z$ is added using Rule~$1+$, it cannot be contained in any of the sets
$\+ Y, \+ X$ or $\+ W$ since they are already present in the term.  Only
observations that are present can be removed, which is why $\+ Z$ has to a
subset of $\+ W$ when applying Rule~$1-$.  We may skip the application of
this rule if the set of observations is empty for the current term.  The
exchange of observations to experiments using Rule~$2+$ has similar
requirements for set $\+ Z$ as Rule~$1-$.  Exchanging experiments to
observations using Rule~$2-$ works in a similar fashion.  Only experiments
that are present can be exchanged which means that $\+ Z \subseteq \+ X$. 
This rule can be skipped if the set of experiments is empty.  New
experiments are added using Rule~$3+$ with similar requirements as Rule~$1+$.
Well-defined subsets for using Rule~$3-$ are the same as for rule~$2-$.
For Rules~$4$ and $5$, the only requirement is that $\+ Z$ is a
proper subset of $\+ Y$.  When the chain rule is applied with Rule~$6+$, we
require that the variables of the second product term is observed in the
first term.  When applied in reverse with Rule~$6-$, the variables of the
second term must not be present in the first term.

\subsubsection{Termination conditions}

Additionally, Table~\ref{tab:validsets} lists the termination condition for
each rule: if it is satisfied by the current term to be expanded we know
that the rule cannot be applied to it.  The following simple lemma shows
that when any of the termination conditions hold, no new distributions can
be derived from it using the respective rule, which allows the search to
directly proceed to the next rule.
\begin{lemma} \label{lem:termination} Let $G$ be a semi-Markovian graph and
let $\+ Y, \+ X$ and $\+ W$ be disjoint subsets of $\+ V$.  Then all of the
following are true:
\begin{itemize}
\item[(i)]{If $\+ W = \emptyset$, then Rule~$1-$ of Table~\ref{tab:rules} cannot be used.} 
\item[(ii)]{If $\+ W = \emptyset$, then Rule~$2+$ of Table~\ref{tab:rules} cannot be used.}
\item[(iii)]{If $\+ X = \emptyset$, then Rule~$2-$ of Table~\ref{tab:rules} cannot be used.}
\item[(iv)]{If $\+ X = \emptyset$, then Rule~$3-$ of Table~\ref{tab:rules} cannot be used.}
\item[(v)]{If $\lvert\+ Y\rvert = 1$, then Rule~$4$ of Table~\ref{tab:rules} cannot be used.}
\item[(vi)]{If $\lvert\+ Y\rvert = 1$, then Rule~$5$ of Table~\ref{tab:rules} cannot be used.}
\item[(vii)]{If $\+ W = \emptyset$, then Rule~$6+$ of Table~\ref{tab:rules} cannot be used. }
\end{itemize}
\end{lemma}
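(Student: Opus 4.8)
The plan is to reduce all seven statements to a single observation: each rule in Table~\ref{tab:rules} is applied by enumerating a \emph{non-empty} subset $\+ Z$ of $\+ V$ that satisfies the corresponding validity condition from Table~\ref{tab:validsets}, and if the termination condition holds then no such $\+ Z$ exists, so the rule produces no new term. (An empty $\+ Z$ is never considered because it leaves the term unchanged; likewise $\+ Z = \+ Y$ is excluded for the probabilistic rules $4$ and $5$, since marginalizing or conditioning away all of $\+ Y$ does not yield a proper conditional distribution.) Thus the proof is a short case check against Table~\ref{tab:validsets}.

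Concretely, I would group the cases according to which slot of the term the manipulated set $\+ Z$ must occupy. For (i), (ii) and (vii) --- rules $1-$, $2+$ and $6+$ --- deleting an observation, turning an observation into an action, and supplying the second chain-rule factor $P(\+ Z \given \doo(\+ X), \+ W \setminus \+ Z)$ all act on variables currently appearing among the conditioning variables, so they require $\emptyset \neq \+ Z \subseteq \+ W$; when $\+ W = \emptyset$ this is impossible. For (iii) and (iv) --- rules $2-$ and $3-$ --- turning an action into an observation and deleting an action both require $\emptyset \neq \+ Z \subseteq \+ X$; when $\+ X = \emptyset$ this is impossible. For (v) and (vi) --- rules $4$ and $5$ --- marginalizing over or conditioning on $\+ Z$ requires $\+ Z$ to be a non-empty proper subset of $\+ Y$, i.e. $0 < |\+ Z| < |\+ Y|$; when $|\+ Y| = 1$ this fails, since the only subsets of $\+ Y$ are $\emptyset$ and $\+ Y$ itself. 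In each case the absence of an admissible $\+ Z$ means the rule cannot be used, which is exactly what the lemma asserts.

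There is no genuine obstacle here: the lemma is essentially a formal record of the ``Termination condition'' column of Table~\ref{tab:validsets}, and the only substantive point is justifying each validity condition in the first place --- namely, observing that a rule can only act on variables lying in the slot it is designed to modify ($\+ W$ for $1-$, $2+$, $6+$; $\+ X$ for $2-$, $3-$; $\+ Y$ for $4$, $5$), and that the degenerate choices of $\+ Z$ (empty, or equal to $\+ Y$) produce nothing new. I would therefore present the argument as one or two sentences per item, each pointing back to the definition of the relevant rule in Table~\ref{tab:rules}.
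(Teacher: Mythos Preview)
Your proposal is correct and follows essentially the same approach as the paper: both arguments reduce each item to the observation that the validity condition in Table~\ref{tab:validsets} forces $\+ Z$ to be a non-empty subset of the relevant slot ($\+ W$, $\+ X$, or a proper subset of $\+ Y$), which is impossible under the stated termination condition. Your grouping of the seven cases by slot is slightly tidier than the paper's item-by-item treatment, but the content is the same.
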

\begin{proof} For (i), the set $\+ W$ is empty so the application of Rule~$1-$
using any subset $\+ Z$ would result in $P(\+ Y \mid \doo(\+ X), \+ W
\setminus \+ Z) = P(\+ Y \mid \doo(\+ X), \+ W)$ which is already
identified.  For (ii), the set $\+ W$ is empty so no observation can be
exchanged for an action using the second rule of do-calculus.  For (iii),
the set $\+ X$ is empty so no action can be exchanged for an observation
using the second rule of do-calculus.  For (iv), the set $\+ X$ is empty so
the application of Rule~$3-$ using any subset $\+ Z$ would result in $P(\+ Y
\mid \doo(\+ X \setminus \+ Z), \+ W) = P(\+ Y \mid \doo(\+ X), \+ W)$
which is already identified.  For (v) and (vi), the set $\+ Y$ only has a
single vertex, so it cannot have a non-empty subset.  For (vii), the set $\+
W$ is empty so no subset $\+ Z \subset \+ W$ can exist for the second input.
\end{proof}

\subsubsection{Rule necessity}

The Rule~1 of do-calculus can be omitted as shown by \citet[Lemma
4]{huangvaltorta:complete}.  Instead of inserting an observation using Rule~1,
we can insert an intervention and then exchange it for an observation. 
Similarly, an observation can be removed by first exchanging it for an
intervention and then deleting the intervention.  It follows that Rules~$1+$
and $1-$ of Table~\ref{tab:rules} are unnecessary for the search.
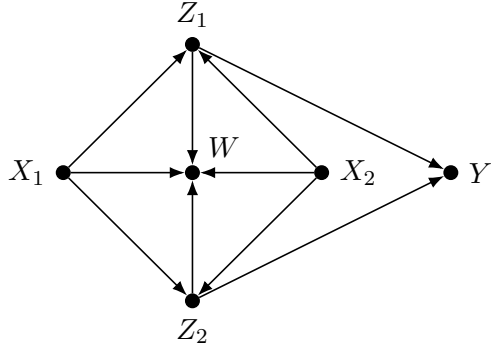
\begin{figure}[t!]
  \centering
  \begin{tikzpicture}[scale=1.7]
  \node [dot = {0}{0}{X_1}{left}] at (0,0) {};
  \node [dot = {0}{0}{X_2}{right}] at (2,0) {};
  \node [dot = {0}{0}{Z_1}{above}] at (1,1) {};
  \node [dot = {0}{0}{Z_2}{below}] at (1,-1) {};
  \node [dot = {0}{0}{W}{above right}] at (1,0) {};
  \node [dot = {0}{0}{Y}{right}] at (3,0) {};

  \draw [->] (X_1) -- (Z_1);
  \draw [->] (X_1) -- (Z_2);
  \draw [->] (X_2) -- (Z_1);
  \draw [->] (X_2) -- (Z_2);
  \draw [->] (Z_1) -- (Y);
  \draw [->] (Z_2) -- (Y);
  \draw [->] (Z_1) -- (W);
  \draw [->] (Z_2) -- (W);
  \draw [->] (X_1) -- (W);
  \draw [->] (X_2) -- (W);
  \end{tikzpicture}
  \caption{A graph for the example where all rules of Table~\ref{tab:rules}
  are required for identifying the target quantity.}
  \label{fig:necessary_rules}
\end{figure}
The following example shows that the remaining rules of
Table~\ref{tab:rules} are all necessary.  In the graph of
Figure~\ref{fig:necessary_rules}, the causal effect $P(Y, X_1 \mid
\doo(X_2), W)$ can be identified from the inputs $P(W \mid \doo(X_2), Y,
X_1)$,  $P(Y \mid \doo(X_2), Z_1, Z_2, X_1)$, $P(X_1 \mid \doo(X_2),
W)$, $P(Z_2, X_2 \mid \doo(X_1))$ and $P(Z_1 \mid \doo(X_1, Y), X_2)$
when all rules are available, but not when any individual rule is omitted. 
This can be verified by running the search algorithm presented at the
beginning of Section~\ref{sect:search} or the more advanced algorithm of
Section~\ref{sect:algo} with each rule switched off individually.

\subsubsection{Early detection of non-identifiable instances}

Worst-case performance of the search can be improved by detecting
non-identifiable quantities directly based on the set of inputs before
launching the search.  The following theorem provides a sufficient criterion
for non-identifiability.
\begin{theorem} \label{thm:nonidsimple} Let $G$ be a semi-Markovian graph,
let $Q = P(\+ Y \mid \doo(\+ X ), \+ W)$ and let
\[
 \+ P = \{P(\+ A_1 \mid \doo(\+ B_1), \+ C_1), \ldots, P(\+ A_n \mid
 \doo(\+ B_n), \+ C_n) \}.
\]
Then $Q$ is not identifiable from $\+ P$ in $G$ via rules of
Table~\ref{tab:rules} if
 \[
 \+ Y \not\subseteq \bigcup_{i = 1}^n \+ A_i,
 \]
\end{theorem}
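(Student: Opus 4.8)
The plan is to track which variables can ever appear on the left-hand side of an identified distribution during the search, and show this set is an invariant that never grows beyond $\bigcup_i \+ A_i$. Concretely, I would define, for any distribution $P(\+ A \given \doo(\+ B), \+ C)$, its \emph{response set} to be $\+ A$, the set of variables appearing before the conditioning bar that are not intervened upon. The claim is that if $\+ P$ is closed under the rules of Table~\ref{tab:rules} (i.e.\ we take the set of all distributions derivable from $\+ P$ via repeated rule application), then every derivable distribution $P(\+ A \given \doo(\+ B),\+ C)$ satisfies $\+ A \subseteq \bigcup_{i=1}^n \+ A_i =: \+ R$. Since the target $Q = P(\+ Y \given \doo(\+ X),\+ W)$ has response set $\+ Y$, if $\+ Y \not\subseteq \+ R$ then $Q$ can never be derived, hence is not identifiable via these rules.

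The key step is a case analysis over the ten rules, checking that each one, applied to input(s) whose response sets lie in $\+ R$, produces an output whose response set also lies in $\+ R$. I would proceed rule by rule: for $1\pm$ (insertion/deletion of observations) the response set $\+ Y$ is literally unchanged, since only $\+ W$ is modified. For $2+$ (observation to action) a variable moves from $\+ W$ into the intervention set, so $\+ Y$ is unchanged. For $2-$ (action to observation) a variable $\+ Z \subseteq \+ X$ moves out of the do-operator and becomes a \emph{conditioning} variable: the output is $P(\+ Y \given \doo(\+ X \setminus \+ Z), \+ Z, \+ W)$, whose response set is still $\+ Y$ — crucially $\+ Z$ joins $\+ W$, not $\+ Y$. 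Rules $3\pm$ only touch the intervention set, leaving $\+ Y$ fixed. Rule $4$ (marginalization) shrinks the response set to $\+ Y \setminus \+ Z \subseteq \+ Y$. Rule $5$ (conditioning) likewise yields response set $\+ Y \setminus \+ Z$. The only rules that could enlarge a response set are $6+$ and $6-$ (chain rule multiplication): here the output is $P(\+ Y, \+ Z \given \doo(\+ X), \cdot)$. But in rule $6+$, $\+ Z \subseteq \+ W$ is a subset of the conditioning set of the first term, and the additional input required is $P(\+ Z \given \doo(\+ X), \+ W \setminus \+ Z)$, whose response set is $\+ Z$; if that input is derivable then $\+ Z \subseteq \+ R$ by the inductive hypothesis, so $\+ Y \cup \+ Z \subseteq \+ R$. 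Symmetrically for $6-$, where $\+ Z$ is the response set of the additional input $P(\+ Z \given \doo(\+ X), \+ Y, \+ W)$. So in every case the response set of the output is contained in $\+ R$, completing the induction.

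Formally I would state this as: induct on the length of the derivation. The base case is that every $P_i \in \+ P$ has response set $\+ A_i \subseteq \+ R$ by definition of $\+ R$. The inductive step is the rule-by-rule check above; the subtle point to get right is that rules $6\pm$ genuinely need the response set of the \emph{second} term to already be controlled, which is exactly why the induction must be on all derivable terms simultaneously rather than on a single chain. One should also note that probability-theoretic rules cannot "invent" a variable into the response position: marginalization and conditioning only ever remove variables from $\+ Y$, and the chain rule only merges variables that were already response variables of some derivable term. Since by hypothesis rules $1\pm$ are anyway redundant (Section on rule necessity), one could even omit them, but including them costs nothing as they leave $\+ Y$ untouched.

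The main obstacle is simply being careful and exhaustive with the bookkeeping of which set each variable belongs to after each rule — in particular not conflating the conditioning set $\+ W$ with the response set $\+ Y$ in rules $2-$ and $5$, and correctly identifying that the "additional input" distributions in rules $6\pm$ must themselves be derivable (hence subject to the inductive hypothesis) before the rule can fire. There is no deep difficulty: the theorem is essentially the observation that no rule of do-calculus or elementary probability can produce a distribution whose left-hand (non-intervened) variables were not already left-hand variables of something available. Once the invariant $\+ A \subseteq \+ R$ is established for all derivable terms, the conclusion that $\+ Y \not\subseteq \+ R$ forces non-identifiability is immediate.
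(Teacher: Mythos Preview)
Your proposal is correct and follows essentially the same approach as the paper: the paper's proof asserts in one sentence the invariant you establish rule by rule, namely that no rule of Table~\ref{tab:rules} can place a variable into the response set of its output that was not already in the response set of one of its inputs. Your version is simply a more explicit and carefully quantified rendering of that same argument, with the induction on derivation length and the case analysis spelled out where the paper leaves them implicit.
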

\begin{proof}
Since $ \+ Y \not\subseteq \bigcup_{i = 1}^n \+ A_i$, there exists a
variable $Y_j \in \+ Y$ such that none of the sets $\+ A_i$ contain it.  No
rule of Table~\ref{tab:rules} outputs a distribution $P(\+ Y^\prime \mid
\doo(\+ X^\prime), \+ W^\prime)$ such that some member of $\+ Y^\prime$
would not already exist on the left-hand side of the input or additional
input of the rule.  Thus there is no sequence of rules that when applied to
the available inputs $\+ P$ would result in a distribution of the form
$P(Y_j, \cdot \mid \doo(\cdot), \cdot)$.  Thus there is no such sequence
for $P(\+ Y \mid \doo(\+ X), \+ W)$.
\end{proof}
In other words, Theorem~\ref{thm:nonidsimple} can be used to verify that the
entire set $\+ Y$ of a target distribution $P(\+ Y \mid \cdot)$ cannot be
constructed from the inputs.  If this is the case, the target quantity is
not identifiable.

\subsubsection{Heuristics}

During the search, we always expand one term at a time through the rules and
store the newly identified distributions.  In order for the search to
perform fast, we need to decide which branches are the most promising and
should therefore be expanded first.  We can do this by defining a proximity
function relating the source terms and the target query, and by always
expanding the closest term first.

Our suggestion here is motivated by the way an educated person might apply
do-calculus in a manual derivation.  Our chosen proximity function $h$ links
the target distribution $P^{t} = P(\+ A_t \mid \doo(\+ B_t),\+C_t)$ and a
source distribution $P^{s} = P(\+ A_s \mid \doo(\+B_s),\+ C_s)$ in the
following way:
\begin{align*}
h(P^t,P^s) &= 10 \lvert \+ A_t \cap \+ A_s \rvert +
              5 \lvert \+ B_t \cap \+ B_s \rvert +
              3 \lvert \+ C_t \cap \+ C_s \rvert -
              2 \lvert \+ A_t \setminus \+ A_s \rvert -
              2 \lvert \+ B_t \setminus \+ B_s \rvert \\
             &\quad - 2
             \lvert\+ B_s \setminus \+ B_t \rvert -
             \lvert\+ C_t \setminus \+ C_s\rvert -
             \lvert\+ C_s \setminus \+ C_t\rvert.
\end{align*}
Each input distribution and terms derived using the search are assigned into
a priority queue, where the priority is determined by the value given by
$h$.  Distributions closer to the target are prioritized over other terms.

The weight 10 for the term $|\+ A_t \cap \+ A_s|$ indicates that having the
correct response variables is considered as the first priority.  Having the
correct intervention is considered as the second priority (weight 5) and
having the correct condition as the third priority (weight 3).  The
remaining terms in $h$ penalize variables that are in the target
distribution but not in the source distribution or vice versa.  Again,
variables that are intervened on are considered to be more important than
conditioning variables.

\subsection{The search algorithm}\label{sect:algo}

We take Algorithm~\ref{prog:naive} as our starting point and compile the
results of Section~\ref{sect:improvements} into a new search algorithm
called \dosearch{}.  This algorithm is capable of solving generalized
identifiability problems (Row~8 in Table~\ref{tab:stateofart}) while
streamlining the search process through a heuristic search order and
elimination of redundant rules and subsets.  The pseudo-code for \dosearch{}
is shown in Algorithm~\ref{prog:do-search}.

The algorithm begins by checking whether the query can be solved trivially
without performing the search.  This can happen if the target $Q$ is a
member of the set of inputs or if Theorem~\ref{thm:nonidsimple} applies. 
Next, we note that each input distribution in the set $\+ P$ is marked as
unexpanded at the beginning of the search.  Distributions in $\+ P$ are
expanded one at a time by applying every rule of Table~\ref{tab:rules} in
every possible way.
\begin{algorithm}[!t]
  \begin{algorithmic}[1]
  \INPUT{Target $Q = P(\+ Y \mid \doo(\+ X),\+ W)$, a semi-Markovian graph $G$ and a set of known distributions $\+ P=\{P_1,\ldots,P_n\}$. }
  \OUTPUT{A formula $F$ for $Q$ in terms of $\+ P$ or NA}
  \State \textbf{if} $Q \in \+ P$, \textbf{return} $Q$
  \State \textbf{if} target is non-identifiable by Theorem~\ref{thm:nonidsimple}, \textbf{then return} NA
  \State \textbf{let} $\+ U$ be the set of unexpanded distributions, 
  initially $\+ U := \+ P$
  \State \textbf{while} $\+ U \neq \emptyset$, \textbf{do}
    \State \quad \textbf{let} $P^\prime$ be the unexpanded distribution closest to the target: $P^\prime = \underset{P_i \in \+ U}{\mathrm{argmax}}\;h(Q, P_i)$
    \State \quad \textbf{let} $\+ M$ be the set of rules of Table~\ref{tab:rules}, without Rules~$1\pm$, such that the termination 
    \Statex \quad \quad conditions of Table~\ref{tab:validsets} do not hold with respect to $P^\prime$.
    \State \quad \textbf{let} $\+ P^*$ be the set of all distributions derived from $P^\prime$ using the rules in $\+ M$
    \State \quad \textbf{for} each new candidate distribution $P^* \in \+ P^*$, \textbf{do}
      \State \quad \quad \textbf{if} $P^*$ is already in $\+ P$, \textbf{then continue}
      \State \quad \quad \textbf{if} the validity conditions of Table~\ref{tab:validsets} are not satisfied by $P^*$, \textbf{then continue}
      \State \quad \quad \textbf{if} an additional input is required that is not in $\+ P$, \textbf{then continue}  
      \State \quad \quad \textbf{if} Rule~$2\pm$ or $3\pm$ of Table~\ref{tab:rules}
      is applied and the corresponding $d$-separation Criterion~\ref{eq:docalc} 
      \Statex \quad \quad \quad is not satisfied by $G$, \textbf{then continue}
      \State \quad \quad \textbf{if} $P^* = Q$, \textbf{then}
      \State \quad \quad \quad Derive a formula $F$ for $Q$ by backtracking.
      \State \quad \quad \quad \textbf{return} $F$
      \State \quad \quad Add $P^*$ to $\+ P$, add $P^*$ to $\+ U$
    \State \quad Mark $P^\prime$ as expanded: remove $P^\prime$ from $\+ U$
  \State \textbf{return} NA
  \end{algorithmic}
  \caption{\dosearch{}}
  \label{prog:do-search}
\end{algorithm}

The iteration over the unexpanded distributions $\+ U$ proceeds as follows
(Lines~4--5).  Each input distribution and terms derived from it are
assigned into a priority queue, where the priority is determined by the
value given by the proximity function $h$.  Distributions closest to the
target are expanded first.  In the implementation, only the actual memory
addresses of the distribution objects are placed into the queue.  The set
$\+ P$ is implemented as a hash table that serves as a container for all
input distributions and those derived from them.  Each new distribution is
assigned a unique index that also serves as the hash function for this
table.  The distribution objects contained in the table are represented
uniquely by three integers corresponding to the sets $\+ A, \+ B$ and $\+ C$
of the general form $P(\+ A \mid \doo(\+ B), \+ C)$.  A distribution
object also contains additional auxiliary information such as which rule was
used to derive it, whether it is expanded or not and from which distribution
it was obtained.  This information is used to construct the derivation if
the target is found to be identifiable.

Multiple distributions can share the same value of the proximity function
$h$.  In the case that multiple candidates share the maximal value, the one
that was derived the earliest takes precedence.  When the unexpanded
distribution currently closest to the target is determined, the rules of
Table~\ref{tab:rules} are applied sequentially for all valid subsets
dictated by Table~\ref{tab:validsets}.  When rules two and three of
do-calculus are considered, the necessary $d$-separation criteria is checked
from $G$ (Line~12).  For the chain rule, the presence of the required second
input is also verified.  The reverse lookup is implemented by using another
hash table, where the hash function is based on the unique representation of
each distribution object.  The values contained in the table are the indices
of the derived distributions.  The same hash table is also used to ensure
that we do not attempt to derive distributions again that have been
previously found to be identifiable from the inputs.

We construct a set $\+ M$ of applicable rules for each unexpanded
distribution $P^\prime$ using the termination conditions of
Table~\ref{tab:validsets} (Line~6).  If all the necessary conditions have
been found to hold for an applicable rule and a subset, the newly derived
distribution $P^*$ is added to the set of known distributions and placed
into the priority queue as an unexpanded distribution.  When the applicable
rules and subsets have been exhausted for the current distribution
$P^\prime$, the term is marked as expanded and removed from the queue (Line~17).
If the target distribution is found at any point (Line~13), a formula
is returned for it in terms of the original inputs.  Alternatively, we can
also continue deriving distributions to obtain different search paths to the
target that can possibly produce different formulas for it.  If instead we
exhaust the set of unexpanded distributions by emptying the queue, the
target is deemed non-identifiable by the search (Line~18).

We keep track of the rules that were used to derive each new distribution in
the search.  This allows us to construct a directed graph of the derivation
where each root node is a member of the original input set $\+ P$ and their
descendants are the distributions derived from them during the search.  Each
edge represents a manipulation of the parent node(s) to obtain the child
node.  For an identifiable target quantity, the formula $F$ is obtained by
backtracking the chain of manipulations recursively until the roots are
reached (Line~14).  The derivation of the example in the beginning of
Section~\ref{sect:search} depicted in
Figure~\ref{fig:simple_search}(\subref{fig:derivation}) can be efficiently
found by applying this procedure.

%
We assess the worst case complexity of \dosearch{} in terms of the input
graph size, which is the primary determining factor of the search time. 
Checks for separation and termination and validation conditions all run in
polynomial time with respect to the number of vertices, but the number of
distributions grows very rapidly.  In a hypothetical absolute worst case
scenario, the target is not identifiable, but every other distribution is. 
Supposing a graph with $d$~vertices, we can determine how many distributions
would have to be derived by the search in order to exhaust the search space. 
For a distribution of the form $P(\+ A \mid \doo(\+ B), \+ C)$, any
variable in $\+ V$ can either belong to the sets $\+ A, \+ B$ or $\+ C$ or
not be present in the distribution, meaning that there are $4^d$ ways to
categorize every variable.  However, we must account for the fact that there
must always be at least one variable in the set $\+ A$.  There are $3^d$
ways to assign all the variables in such a way that no variable is a member
of set $\+ A$.
%
%
Thus the total number of distributions considered by \dosearch{} grows as
$O(4^d)$.  See the simulations of Section~\ref{sec:simu} for running time
performance in more realistic settings.

\subsection{Soundness and completeness properties}

We are ready to establish some key theoretical properties of \dosearch{}. 
The first theorem considers the correctness of the search.
\begin{theorem}[Soundness] \label{thm:soundness} \emph{\dosearch{}} always
terminates: if it returns an expression for the target $Q$, it is correct,
if it returns NA then $Q$ is not identifiable with respect to the rules of
do-calculus and standard probability manipulations (in
Table~\ref{tab:rules}).
\end{theorem}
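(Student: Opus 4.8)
\emph{The plan is to} prove the three assertions — termination, correctness of a positive output, and correctness of a negative output — separately, in that order, using throughout that for a fixed vertex set $\+ V$ there are only finitely many distributions of the form $P(\+ A \given \doo(\+ B), \+ C)$.

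\textbf{Termination.} First I would observe that every object the search ever handles is such a distribution with $\+ A, \+ B, \+ C$ pairwise disjoint and $\+ A \neq \emptyset$, of which there are at most $4^{|\+ V|} - 3^{|\+ V|}$. The test on line~9 of Algorithm~\ref{prog:do-search} prevents a distribution from entering $\+ P$ (hence $\+ U$) a second time, and line~17 removes a distribution from $\+ U$ the single time it is expanded. Thus the \textbf{while} loop runs at most $4^{|\+ V|} - 3^{|\+ V|}$ times, and each pass applies finitely many rules to finitely many subsets $\+ Z$, with polynomial-time separation, validity and termination checks; so the search always halts.

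\textbf{Positive output.} The heart of this part is the invariant: at every moment, each $P^* \in \+ P$ is identifiable from the original input set $\+ P_0$, and the derivation DAG recorded by the algorithm (which rule produced $P^*$ and from which parent term(s)) backtracks to a correct formula for $P^*$ in terms of $\+ P_0$. I would prove it by induction on the order of insertion into $\+ P$. The DAG roots are exactly the members of $\+ P_0$, where the claim is immediate. For the step, a candidate $P^*$ is inserted only after the validity conditions of Table~\ref{tab:validsets} (line~10), the presence in $\+ P$ of any required additional input (line~11), and — for rules $2\pm, 3\pm$ — the relevant d-separation statement in $G$ (line~12) have all been verified. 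Soundness of the three rules of do-calculus \citep{pearl1995causal} together with the elementary identities behind rules $4$--$6$ (rule~$5$ being well defined because $P$ is positive) then gives that $P^*$ equals the indicated function of its parent term(s), which are identifiable with known formulas by the induction hypothesis; composing yields a correct formula for $P^*$, and since parents are always inserted before children the backtracking is acyclic and bottoms out at $\+ P_0$. Instantiating the invariant when $P^* = Q$ is produced (line~13) establishes correctness of the returned $F$.

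\textbf{Negative output.} NA is returned on line~2 or line~18. Line~2 is immediate from Theorem~\ref{thm:nonidsimple}. For line~18, let $\mathcal{D}$ be the closure of $\+ P_0$ under the rules of Table~\ref{tab:rules} in $G$ — the least set containing $\+ P_0$ and stable under every rule whose graphical and validity side-conditions hold — so that $Q$ is identifiable via those rules exactly when $Q \in \mathcal{D}$. Writing $\+ P_\infty$ for the contents of $\+ P$ when the loop exits with $\+ U = \emptyset$, I would show $\mathcal{D} \subseteq \+ P_\infty$ by induction on the least number of rule applications needed to produce a member of $\mathcal{D}$; since in this case every distribution ever placed in $\+ P$ is also placed in $\+ U$ and is therefore eventually expanded, while $Q$ never equals a derived term (else line~13 fires) and $Q \notin \+ P_0$ (else line~1 fires), this gives $Q \notin \+ P_\infty \supseteq \mathcal{D}$, i.e.\ non-identifiability via the rules of Table~\ref{tab:rules}. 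In the inductive step, $D \in \mathcal{D}$ arises by one rule from $D_1$ (and possibly a second term $D_2$), both of lower rank and hence in $\+ P_\infty$; when $D_1$ is expanded, the rule in question lies in the set $\+ M$ of line~6 (its termination condition fails on $D_1$ by Lemma~\ref{lem:termination}, since $D \neq D_1$), $D$ is enumerated among the outputs for the appropriate subset $\+ Z$, and the validity/d-separation checks pass because $D \in \mathcal{D}$ presupposes them. The delicate case is the chain rule: whichever of $D_1, D_2$ is expanded later already has the other sitting in $\+ P$, and — using that the output $P(\+ Y, \+ Z \given \doo(\+ X), \+ W \setminus \+ Z)$ of rule $6+$ applied to $(D_1, D_2)$ is also the output of rule $6-$ applied with $D_2$ as the main term and $D_1$ as the additional input — the term $D$ is produced at that expansion. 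Finally, because the search deliberately omits rules $1\pm$, I would close the gap with the rule-necessity argument (following \citet[Lemma~4]{huangvaltorta:complete}): $1+$ is simulated by $3+$ then $2-$, and $1-$ by $2+$ then $3-$, so $\mathcal{D}$ already coincides with the closure under the reduced rule set actually searched. The heuristic $h$ only permutes the order of expansion and so plays no role. I expect the closure step for the negative output — and within it the chain-rule case, which is precisely why both $6+$ and $6-$ are retained — together with the reconciliation with the omitted rules $1\pm$, to be the main obstacle.
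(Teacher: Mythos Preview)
Your proposal is correct and follows essentially the same approach as the paper's own proof: termination by finiteness of the space of candidate distributions and the no-duplicate check, positive correctness by soundness of each individual rule, and negative correctness by exhaustiveness of the enumeration together with Lemma~\ref{lem:termination} and the simulation of rules $1\pm$ via $2\pm,3\pm$. The paper compresses all of this into a single paragraph, whereas you make the closure/induction argument explicit and spell out the chain-rule timing issue (why both $6+$ and $6-$ are retained), which the paper only mentions informally.
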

\begin{proof}
Each new distribution is derived by using only well-defined manipulations as
outlined by Table~\ref{tab:validsets} and by ensuring that the required
separation criteria hold in $G$ when rules of do-calculus are concerned.  It
follows that if the search terminates and returns a formula for the target
distribution, it was reached from the set input distributions through a
sequence of valid manipulations.  If \dosearch{} terminates as a result of
Theorem~\ref{thm:nonidsimple}, we are done.  Suppose now that
Theorem~\ref{thm:nonidsimple} does not apply.  By definition, \dosearch{}
enumerates every rule of Table~\ref{tab:rules} for every well-defined subset
of Table~\ref{tab:validsets}.  By Lemma~\ref{lem:termination}, no
distributions are left out by applying the termination criteria of
Table~\ref{tab:validsets}.  We know that if Rules~$1\pm$ of
Table~\ref{tab:validsets} are omitted, the distributions generated by these
rules can be obtained by a combination of Rules~$2\pm$ and $3\pm$. 
Furthermore, the order in which the distributions are expanded does not
matter, as every possible manipulation is carried out nonetheless.  The
search will eventually terminate, since distributions that have already been
derived are not added again to the set of unexpanded distributions and there
are only finitely many ways to apply the rules of Table~\ref{tab:rules}.
\end{proof}
The following theorem provides a completeness result in connection to
existing identifiability results.  Since do-calculus has been shown to be
complete with respect to (conditional) causal effect identifiability,
$z$-identifiability, $g$-identifiability and transportability, it follows
that \dosearch{} is complete for these problems as well.
\begin{theorem}[Completeness] \label{thm:completeness2} If
\emph{\dosearch{}} returns NA in the settings in Rows~1--4 and 6 in
Table~\ref{tab:stateofart}, then the query is non-identifiable.
\end{theorem}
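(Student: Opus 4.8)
The plan is to argue by contraposition, leaning on two ingredients already in place: the soundness result just proved (Theorem~\ref{thm:soundness}) and the completeness results available in the literature for each of the five problem classes. The first observation I would make is that the rules of Table~\ref{tab:rules} are nothing more than the three rules of do-calculus together with the textbook probability manipulations --- marginalization (rule~$4$), conditioning (rule~$5$) and the chain rule (rules~$6\pm$) --- with rules~$1\pm$ being redundant by \citet[Lemma~4]{huangvaltorta:complete}. Hence the phrase ``identifiable with respect to the rules of Table~\ref{tab:rules}'' appearing in Theorem~\ref{thm:soundness} is synonymous with ``derivable by do-calculus together with standard probability calculus''. So the whole theorem reduces to the following claim: in each of the settings of rows~1--4 and~6, if the query is identifiable then it is derivable by do-calculus and probability calculus --- because then Theorem~\ref{thm:soundness} forbids \dosearch{} from returning NA.

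Next I would take the five settings one at a time and pair each with its complete algorithm: ID \citep{Shpitser} for row~1, IDC \citep{Shpitser_conditional} for row~2, zID \citep{Bareinboim:zidentifiability} for row~3, gID \citep{lee2019general} for row~4 and TR$^{\textrm{mz}}$ \citep{bareinboim2014transportability} for row~6. For each I would recall two facts established in the cited work: the algorithm is complete, so identifiability of the query implies that the algorithm succeeds and returns a formula; and the algorithm is sound in a constructive way, meaning every line of the returned formula is either a chain-rule factorization (e.g.\ into C-components, realized by rules~$6\pm$), a summation over a subset of the response variables (rule~$4$), a conditioning step (rule~$5$), or an application of rule~2 or rule~3 of do-calculus in the relevant augmented graph (rules~$2\pm$, $3\pm$). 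I would also note that the structural assumptions attached to rows~3, 4 and~6 --- nested experiments, entire distributions, and nested experiments in different domains --- are exactly what guarantees that every interventional distribution used as a factor along such a derivation is itself a member of the input set $\+ P$, so the derivation can be reproduced using only the available inputs, which is precisely the restriction built into the rules $6\pm$ and the do-calculus rules of Table~\ref{tab:rules}. Assembling these observations yields, for each of the five rows, the implication ``identifiable $\Rightarrow$ identifiable with respect to the rules of Table~\ref{tab:rules}''.

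Finally I would close the argument: if \dosearch{} returns NA in one of these settings, then by Theorem~\ref{thm:soundness} (which also guarantees termination) the query is not identifiable with respect to the rules of Table~\ref{tab:rules}, and by the previous paragraph this forces the query to be non-identifiable in the setting at hand. I expect the main obstacle to be the bookkeeping in the middle step: one has to trace the soundness proofs of ID, IDC, zID, gID and TR$^{\textrm{mz}}$ carefully enough to be certain that every step they take stays inside the repertoire of Table~\ref{tab:rules} --- in particular that the d-separation conditions those proofs invoke coincide with the ones \dosearch{} checks in the augmented graph carrying the intervention nodes $I_{\+ X}$ (and, for transportability, the domain-indicator nodes), and that no interventional factor outside the input set is ever required. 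Everything else is an immediate consequence of Theorem~\ref{thm:soundness}.
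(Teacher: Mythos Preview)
Your proposal is correct and follows essentially the same line of reasoning as the paper, which simply notes that do-calculus has been shown complete in each of these settings and that the probability-calculus manipulations used by the corresponding complete algorithms are exactly the rules of Table~\ref{tab:rules} (citing, e.g., the proofs of Theorem~7 and Lemmas~4--8 of \citet{Shpitser}). Your version is considerably more detailed than the paper's two-sentence proof --- in particular your explicit invocation of Theorem~\ref{thm:soundness} for the contrapositive and your row-by-row pairing with ID, IDC, zID, gID and TR$^{\textrm{mz}}$ --- but the underlying argument is the same.
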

\begin{proof}
Do-calculus has been shown to be complete in these settings.  The rules of
probability calculus encode what is used in the algorithms as can be seen
for example from the proofs of Theorem~7 and Lemmas~4--8 of
\citet{Shpitser}.
\end{proof}
It is not known whether the rules implemented in \dosearch{} are sufficient
for other more general identifiability problems since it is conceivable that
some additional rules might exist that would be required to achieve
completeness.  One such generalization is the inclusion of missing data in
the causal model, which we present in Section~\ref{sect:missingness}. 
However, if one were to show that do-calculus (or any other set of rules
included in \dosearch{}) is complete for some special case of the
generalized identifiability problem, then \dosearch{} would be complete for
this problem as well.  In the following sections we will use the term
``identifiable by \dosearch{}'' to refer to causal queries that can be
identified by \dosearch{}.

\section{Extension to missing data problems} \label{sect:missingness}

The SCM framework can be extended to describe missing data mechanisms.  For
each variable $V_i$, two special vertices are added to the causal graph. 
The vertex $V_i^*$ is the observed proxy variable which is linked to the
true variable $V_i$ via the missingness mechanism \citep{missing,Mohan2013}:
\begin{equation} \label{eq:missingness}
V_i^* = \begin{cases}
  V_i, & \mathrm{if}\; R_{V_i} = 1, \\
  \NA,  & \mathrm{if}\; R_{V_i} = 0,
  \end{cases}
\end{equation}
where $\NA$ denotes a missing value and $R_{V_i}$ is called the response
indicator (of $V_i$).  In other words, the variable $V_i^*$ that is actually
observed matches the true value $V_i$ if it is not missing ($R_{V_i} = 1$). 
We note that in this formulation, each true variable has its own response
indicator, meaning that we do not consider shared indicators between
variables or multiple indicators for a single variable. 
Figure~\ref{fig:m2examples} in Section~\ref{sect:systematic} depicts some
examples of graphs containing missing data mechanisms.  Furthermore, if
there is no missingness associated with a given variable $V_i$ meaning that
it is fully observed, the corresponding response indicator $R_{V_i}$ always
has the value $1$.  The omission of a proxy variable and a response
indicators of a specific variable from a graph encodes the assumption that
the variable in question if fully observed.  Note that intervention nodes
are added for true variables and response indicators but not for proxy
variables.  On a symbolic level one could intervene on proxy variables,
however we are only interested in interventions that keep
Equation~\ref{eq:missingness} intact.

The observed vertices of the causal diagram can be partitioned into three
categories
\[
\+ V = \+ V^\true \cup \+ V^* \cup \+ V^r,
\]
where $\+ V^\true$ is the set of true variables, $\+ V^*$ is the set of
proxy variables and $\+ V^r$ is the set of response indicators.  For any
subset $\+ Z \subset \+ V$ we define the same partition via $\+ Z^\true = \+
Z \cap \+ V^\true$, $\+ Z^* = \+ Z \cap \+ V^*$ and $\+ Z^r = \+ Z \cap \+
V^r$.

The definition of the response indicator connects a proxy variable and a
true variable.  Typically this connection is only of interest for those
variables $V_i$ that have missing data, meaning that $P(R_{V_i} = 1) < 1$. 
Furthermore, we often utilize a proxy variable corresponding to a specific
true variable and conversely, a true variable corresponding to a specific
proxy variable.  We define this correspondence explicitly in the following
way
\begin{align*}
  \+ Z^\parprox &= \{V_i^* \in \+ V^* \mid V_i \in \+ Z^\true, P(R_{V_i} = 1) < 1\}, \\
  \+ Z^\proxpar &= \{V_i \in \+ V^\true \mid V_i^* \in \+ Z^*, P(R_{V_i} = 1) < 1\}.
\end{align*}
Similarly, given a set $\+ Z$ we often require the set of the response
indicators that define the missingness mechanism for the true variables that
are member of $\+ Z$.  This set is defined as follows
\[
 \+ R_{\+Z} = \{R_{V_i} \in \+ V^r \mid V_i \in \+ Z^\true, P(R_{V_i} = 1) < 1 \}.
\]
It is important to note the difference between the sets $\+ Z^r$ and $\+
R_{\+Z}$; the first set denotes the set of response indicators that are
members of $\+ Z$ while the second gives the corresponding response
indicators for the true variables that are member of $\+ Z$.

Our method is also capable of processing queries when the causal graph
contains missing data mechanisms where the sets $\+ A_i, \+ B_i$ and $\+
C_i$ of some of the input distributions may be restricted to contain
observed variables in $\+ V^* \cup \+ V^r$.  An active response indicator
$R_{V_i} = 1$ is denoted by $R_{V_i}^1$.  Similarly, for sets of response
indicators $\+ R_{\+Z}^1$ denotes that all indicators in the set are active. 
Proxy variables are not explicitly shown in graphs for clarity.

Determining identifiability is challenging under missing data.  As evidence
of this, even some non-interventional queries require the application of
do-calculus \citep{mohan2018}.  Furthermore, the rules used in the search of
Table~\ref{tab:rules} are no longer sufficient and deriving the desired
quantity necessitates the use of additional rules that stem from the
definition of the proxy variables and the response indicator.  Each new true
variable also has a higher impact on computational complexity, since the
corresponding response indicator and proxy variable are always added to the
graph as well.
\begin{table}[t!]
\centering
\resizebox{1.00\textwidth}{!}{%
\begin{small}
\begin{tabular}{llll}
\hline
 Rule & Additional Input & Output & Description\\
\hline
  $1+$ && $P(\+ Y \mid \doo(\+ X), \+ Z, \+ W)$ & Insertion of observations \\
  $1-$ && $P(\+ Y \mid \doo(\+ X), \+ W \setminus \+ Z)$ & Deletion of observations \\
  $2+$ && $P(\+ Y \mid \doo(\+ X,\+ Z), \+ W \setminus \+ Z)$ & Obs. to action exchange \\
  $2-$ && $P(\+ Y \mid \doo(\+ X \setminus \+ Z),\+ Z, \+ W)$ & Action to obs. exchange \\
  $3+$ && $P(\+ Y \mid \doo(\+ X,\+ Z), \+ W)$ & Insertion of actions \\
  $3-$ && $P(\+ Y \mid \doo(\+ X \setminus \+ Z), \+ W)$ & Deletion of actions \\
\hline
  $ 4$ && $P(\+ Y \setminus \+ Z \mid \doo(\+ X),\+ W)$ & Marginalization \\
  $ 5$ && $P(\+ Y \setminus \+ Z \mid \doo(\+ X),\+ Z,\+ W)$ & Conditioning \\
  $6+$ & $P(\+ Z \mid \doo(\+X),\+ W \setminus \+ Z)$ & $P(\+Y, \+Z \mid \doo(\+ X), \+ W \setminus \+ Z)$ & Chain rule multiplication \\
  $6-$ & $P(\+ Z \mid \doo(\+ X), \+ Y,\+ W)$  & $P(\+Y, \+Z \mid \doo(\+ X),\+W)$ & Chain rule multiplication \\
\hline
  $7+$  & $P(\+Z \mid \doo(\+X),\+W)$                      & $P(\+ Y \setminus \+ Z \mid \doo(\+X),\+Z,\+W)$ & Chain rule conditioning (numerator)\\
  $7-$  & $P(\+Z \mid \doo(\+X),\+W, \+ Y \setminus \+ Z)$ & $P(\+ Y \setminus \+ Z \mid \doo(\+X), \+W)$ & Chain rule conditioning (numerator)\\
  $8+$  & $P(\+Y, \+ Z \mid \doo(\+X),\+W)$                & $P(\+ Z \mid \doo(\+X),\+W, \+ Y)$ & Chain rule conditioning (denominator) \\
  $8-$  & $P(\+Y, \+ Z \mid \doo(\+X),\+W \setminus \+ Z)$ & $P(\+ Z \mid \doo(\+X),\+W)$ & Chain rule conditioning (denominator) \\
  $9+$  && $P(\+ Y \mid \doo(\+X),\+W \setminus \+ R_{\+ Z}, \+ R_{\+ Z}^1)$ & Enable response indicators \\
  $9-$  && $P(\+ Y \setminus \+ R_{\+ Z}, \+ R_{\+ Z}^1 \mid \doo(\+ X),\+ W)$ & Enable response indicators  \\
  $10+$ && $P(\+ Y \mid \doo(\+X),\+ W \setminus \+ Z^*,\+ Z^\proxpar)$ & Proxy variable exchange \\
  $10-$ && $P(\+ Y \setminus \+ Z^*,\+Z^\proxpar \mid \doo(\+X),\+W)$ & Proxy variable exchange \\
\hline
\end{tabular}
\end{small}}
\caption{Extended set of rules for missing data problems used to manipulate
input distributions of the form $P(\+ Y \mid \doo(\+ X), \+ W)$.  Rules
$1\pm, 2\pm, 3\pm, 4, 5$ and $6\pm$ are the same as in
Table~\ref{tab:rules}.  For Rules~$7\pm$ and $8\pm$, the additional input
has also been identified.  The sets $\+ Y, \+ X$ and $\+ W$ are disjoint. 
Sets $\+ Y$ and $\+ W$ may contain true variables, proxy variables and
response indicators.  Set $\+ X$ may only contain true variables and
response indicators.  The roles of the sets $\+ Z$ and $\+ R_{\+ Z}$ depend
on the rule being applied (see Table~\ref{tab:md_validsets}).}
\label{tab:md_rules}
\end{table}
\noindent
Table~\ref{tab:md_rules} extends the set of rules of Table~\ref{tab:rules}
to missing data problems by providing manipulations related to the
missingness mechanism.  Rules $7\pm$ and $8\pm$ perform conditioning using
the chain rule.  These rules are necessary in the case that set $\+ Y$
contains missing data mechanisms that have been enabled and thus cannot be
marginalized over by using Rule~$5$.

Rules $9\pm$ are used to enable response indicators, which then facilitates
the use of Rules~$10\pm$.  These last two rules exchange proxy variables to
their true counterparts when the corresponding response indicators are
enabled.  For example, under the conditions specified in
Table~\ref{tab:md_validsets}, Rule~$9+$ can be applied on $P(Y,X^*\mid
R_X)$ to first obtain $P(Y,X^* \mid R^1_X)$ by enabling $R_X$.  Then,
Rule~$10+$ can applied to this distribution to obtain $P(Y,X \mid R^1_X)$ by
exchanging $X^*$ for $X$.

Similarly to Table~\ref{tab:validsets}, Table~\ref{tab:md_validsets}
outlines the valid subsets $\+ Z$ for applying the extended rules of
Table~\ref{tab:md_rules}.  A major difference to the original validity and
termination conditions is the addition of the missing data condition that
outlines the additional requirements that must be satisfied when missingness
mechanisms are present.  For the rules that are shared by
Tables~\ref{tab:rules} and~\ref{tab:md_rules}, the missing data condition
ensures that a true variable and its proxy counterpart never appear in the
same term at the same time.  For example, we cannot add an intervention on
$X$ to $P(X^*)$.  It also ensures that we do not carry out summation over
enabled response indicators in the case of rules 4 and 5.  When applying
Rules~$9\pm$, the condition also ensures that we do not attempt to enable a
response indicator that is already enabled.  For Rules~$10\pm$, the
conditions guarantee that a proxy can only be exchanged to its true
counterpart if its corresponding response indicator is enabled and present
in the input term.
\begin{table}[t!]
\centering
\begin{small}
\begin{tabular}{l@{\hspace{6pt}}l@{\hspace{6pt}}ll}
\hline
Rule & Validity cond. & Missing data condition & Term. cond. \\
\hline
$1+$ &   $\+ Z \cap \+ T = \emptyset$                              & $\+ Z
\cap \left(\+ T^\proxpar \cup \+ T^\parprox \cup \+ Z^\proxpar \cup \+
Z^\parprox\right) = \emptyset$ & \\
$1-$ &   $\+ Z \subseteq \+ W$
&         & $\+ W = \emptyset$ \\
$2+$ &   $\+ Z \subseteq \+ W$                               & $\+ Z \cap \+
W^* = \emptyset$      & $\+ W = \emptyset$ \\
$2-$ &   $\+ Z \subseteq \+ X$
&         & $\+ X = \emptyset$ \\
$3+$ &   $\+ Z \cap \+ T = \emptyset$   & $\+ Z \cap \left(\+ T^\proxpar
\cup \+ T^\parprox \cup \+ Z^\proxpar \cup \+ Z^\parprox\right) = \emptyset$
& \\
$3-$ &   $\+ Z \subseteq \+ X$
&         & $\+ X = \emptyset$ \\ \hline
$4$  &   $\+ Z \subset \+ Y$
& $\+ Z \cap (\+ R^a \cap \+ Y) = \emptyset$        & $|\+ Y| = 1$ \\
$5$  &   $\+ Z \subset \+ Y$
& $ (\+ Y \setminus \+ Z) \cap (\+ R^a \cap \+ Y) = \emptyset$ & $|\+ Y| =
1$ \\
$6+$ &   $\+ Z \subseteq \+ W$                               &         & $\+
W = \emptyset$ \\
$6-$ &   $\+ Z \cap \+ T= \emptyset$  & $\+ Z \cap \left(\+ T^\proxpar \cup
\+ T^\parprox \cup \+ Z^\proxpar \cup \+ Z^\parprox\right) = \emptyset$ & \\
\hline
$7+$ &&  $\+ Z \subset \+ Y$                                          & $|\+
Y| = 1$ \\
$7-$ &&  $\+ Z \subset \+ Y$                                          & $|\+
Y| = 1$ \\
$8+$ &&  $\+ Z \cap \+ T = \emptyset$                                 & \\
$8-$ &&  $\+ Z \subseteq \+ W$                                        & $\+
W = \emptyset$\\
$9+$ &&  $\+ R_{\+ Z} \subseteq \+ W^r, \+ R_{\+ Z} \cap \+ R^a = \emptyset$
& $\+ W^r = \emptyset$ \\
$9-$ &&  $\+ R_{\+ Z} \subseteq \+ Y^r,\; \+ R_{\+ Z} \cap \+ R^a =
\emptyset$                            & $\+ Y^r = \emptyset$ \\
$10+$ && $\+ Z^* \subseteq \+ W^*, \+ R_{\+ Z^\proxpar} \subseteq \+ R^a, \+
R_{\+ Z^\proxpar} \subseteq \+ W^r$                 & $\+ R^a = \emptyset$
\\
$10-$ && $\+ Z^* \subseteq \+ Y^*,\; \+ R_{\+ Z^\proxpar} \subseteq \+ R^a,
(\+ R_{\+ Z^\proxpar} \subseteq \+ W^r \textrm{ or } \+ R_{\+ Z^\proxpar}
\subseteq\+ Y^r) $                 & $\+ R^a = \emptyset$ \\
\hline
\end{tabular}
\end{small}

\caption{The conditions for the enumerated subset $\+ Z$ for applying
  the rules of Table~\ref{tab:md_rules} to a term in the input column.  Here
  $\+ T = \+ Y \cup \+ X \cup \+ W$ and the sets $\+ Y, \+ X$ and $\+ W$ are
  those present in the input term $P(\+ Y \mid \doo(\+ X), \+ W)$.  Active
  response indicators of the input are denoted by $\+ R^a$.  For Rules~$6\pm,
  7\pm$ and $8\pm$, the conditions specify valid variables of the second
  required term.  Validity conditions for Rules~$1\pm,2\pm,3\pm,4,5$ and
  $6\pm$ are the same as in Table~\ref{tab:validsets}.}

\label{tab:md_validsets}
\end{table}

Additional termination conditions also apply to the new rules and their
correctness is easily verified.
\begin{lemma} \label{lem:md_termination} Let $G$ be a semi-Markovian graph
and let $\+ Y, \+ X$ and $\+ W$ be disjoint subsets of $\+ V$.  Then all of
the following are true:
\begin{itemize}
\item[(i)]{If $\+ |\+ Y| = 1$, then Rules~$7\pm$ of Table~\ref{tab:md_rules} cannot be used. }
\item[(ii)]{If $\+ W = \emptyset$, then Rule~$8-$ of Table~\ref{tab:md_rules} cannot be used. }
\item[(iii)]{If $\+ W^r = \emptyset$ then Rule~$9+$ of Table~\ref{tab:md_rules} cannot be used.}
\item[(iv)]{If $\+ Y^r = \emptyset$ then Rule~$9-$ of Table~\ref{tab:md_rules} cannot be used.}
\item[(v)]{If $\+ R^a = \emptyset$, then Rules~$10\pm$ of Table~\ref{tab:md_rules} cannot be used. }
\end{itemize}
\end{lemma}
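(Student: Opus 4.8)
The plan is to handle all five items by the same mechanism used in the proof of Lemma~\ref{lem:termination}: for each listed rule I will show that the stated termination condition, combined with the matching validity condition of Table~\ref{tab:md_validsets}, forces the enumerated set (the subset $\+ Z$, the induced set of response indicators $\+ R_{\+ Z}$, or the proxy set $\+ Z^*$) to be empty; and once that set is empty, the rule applied to a term $P(\+ Y \given \doo(\+ X), \+ W)$ can only return $P(\+ Y \given \doo(\+ X), \+ W)$ itself (or a trivial distribution over an empty variable set), so nothing new is derived and the rule may be skipped during term expansion.

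Concretely, for (i), rules $7\pm$ require $\+ Z \subset \+ Y$ to be a proper subset, so when $|\+ Y| = 1$ there is no non-empty candidate and the only admissible choice $\+ Z = \emptyset$ leaves the term unchanged. For (ii), rule $8-$ requires $\+ Z \subseteq \+ W$ for its second required term, so $\+ W = \emptyset$ forces $\+ Z = \emptyset$ and the rule produces nothing new. For (iii) and (iv), the validity conditions of rules $9+$ and $9-$ impose $\+ R_{\+ Z} \subseteq \+ W^r$ and $\+ R_{\+ Z} \subseteq \+ Y^r$ respectively; hence $\+ W^r = \emptyset$ (resp. $\+ Y^r = \emptyset$) forces $\+ R_{\+ Z} = \emptyset$, and the outputs $P(\+ Y \given \doo(\+ X), \+ W \setminus \+ R_{\+ Z}, \+ R_{\+ Z}^1)$ and $P(\+ Y \setminus \+ R_{\+ Z}, \+ R_{\+ Z}^1 \given \doo(\+ X), \+ W)$ both reduce to $P(\+ Y \given \doo(\+ X), \+ W)$, which is already identified.

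Item (v) is the only step that needs a brief extra observation, so I expect it to be the ``hard'' (though still elementary) part. I would first record the structural fact that, by the modelling convention that a proxy variable $V_i^*$ is included in the graph only when $P(R_{V_i} = 1) < 1$, the correspondence $\+ Z^* \mapsto \+ Z^\proxpar \mapsto \+ R_{\+ Z^\proxpar}$ consists of bijections, so that $\+ R_{\+ Z^\proxpar} = \emptyset \iff \+ Z^\proxpar = \emptyset \iff \+ Z^* = \emptyset$. Then, when $\+ R^a = \emptyset$, the common validity condition $\+ R_{\+ Z^\proxpar} \subseteq \+ R^a$ of rules $10\pm$ forces $\+ R_{\+ Z^\proxpar} = \emptyset$, hence $\+ Z^* = \emptyset$ and $\+ Z^\proxpar = \emptyset$, so the outputs $P(\+ Y \given \doo(\+ X), \+ W \setminus \+ Z^*, \+ Z^\proxpar)$ and $P(\+ Y \setminus \+ Z^*, \+ Z^\proxpar \given \doo(\+ X), \+ W)$ collapse to $P(\+ Y \given \doo(\+ X), \+ W)$ and nothing new is obtained. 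I would close, as after Lemma~\ref{lem:termination}, by noting that these termination conditions are precisely what justify pruning the set of applicable rules in the missing-data version of Algorithm~\ref{prog:do-search}.
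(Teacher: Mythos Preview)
Your proposal is correct and follows essentially the same approach as the paper's proof: each item is dispatched by noting that the termination condition forces the enumerated set to be empty, so the rule outputs the input term unchanged. Your treatment is more explicit than the paper's---particularly for (v), where you spell out the bijection $\+ Z^* \leftrightarrow \+ Z^\proxpar \leftrightarrow \+ R_{\+ Z^\proxpar}$ that the paper leaves implicit in the phrase ``no transformation from proxy variables to true variables via the missingness mechanism \ldots\ can take place''---but the underlying argument is identical.
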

\begin{proof}
For (i), the set $\+ Y$ only has a single vertex, so it cannot have a
non-empty subset.  For (ii), the set $\+ W$ is empty so no subset $\+ Z
\subset \+ W$ can exist for the second input.  For (iii), and the set $\+
W^r$ is empty so no assignment to value $1$ can be performed.  Similarly for
(iv), the set $\+ Y^r$ is empty so no assignment to value $1$ can be
performed.  For (v) the set of active response indicators $\+ R^a$ is empty,
so no transformation from proxy variables to true variables via the
missingness mechanism in Equation~\ref{eq:missingness} can take place.
\end{proof}
The task of selecting a suitable heuristic becomes more difficult when
missing data are involved with the identifiability problem.  The heuristic
approach of Section~\ref{sect:improvements} is no longer directly applicable
due to the relation between proxy variables, response indicators and true
variables.  The proximity function considers $X$ and $X^*$ as entirely
different variables despite their connection and does not prefer the
inclusion of response indicators.  If the heuristic is applied as such, the
search path will often involve a large number of manipulations which in turn
leads to complicated expressions.  For these reasons we do not apply a
heuristic to missing data problems, but expand terms in the order in which
they were identified.  The improvements described in
Section~\ref{sect:improvements} still apply.

It is straightforward to adapt \dosearch{} to the new extended set of rules. 
In the pseudo-code shown in Algorithm~\ref{prog:do-search}, we simply
replace all references to Tables~\ref{tab:rules} and \ref{tab:validsets} by
references to Tables~\ref{tab:md_rules} and Tables~\ref{tab:md_validsets},
respectively.  When the validity condition is checked, we also verify that
the missing data condition holds.  Lemma~\ref{lem:md_termination} guarantees
the correctness of the new termination criteria. 
Theorem~\ref{thm:nonidsimple} is also valid when the sets $\+ A_i$ are
replaced by $\+ A_i \cup \+ A_i^\proxpar$, since it may be possible to
exchange some proxy variable to a true variable that is present in the set
$\+ Y$ of the target $P(\+ Y \mid \doo(\+ X), \+ W)$.

\section[The dosearch package]{The \pkg{dosearch} package} \label{sect:package}

We implemented \dosearch{} (Algorithm~\ref{prog:do-search}) in
\proglang{C++} and constructed an \proglang{R} interface using the
\pkg{Rcpp} package \citep{Rcpp}.  This interface is provided by the
\proglang{R}~package \pkg{dosearch}.  Calling the search from \proglang{R}
is straightforward via the primary function that carries the name of
package.
\begin{Code}
dosearch(data, query, graph, 
  transportability, selection_bias, missing_data,
  control)
\end{Code}
The required inputs of the function are \code{data}, \code{query} and
\code{graph}.  Parameter \code{data} is used to encode the set $\+ P$ of
known input distributions of Algorithm~\ref{prog:do-search} as a character
string, where each distribution is separated by a new line.  For example, if
we have access to a set of  distributions $\+ P = \{P(W), P(Y \mid X), P(Z
\mid \doo(X), W)\}$, we would write
\begin{CodeChunk}
\begin{CodeInput}
R> data <- "
+    P(W)
+    P(Y|X)
+    P(Z|do(X),W)
+  "
\end{CodeInput}
\end{CodeChunk}
The $\doo(\cdot)$-operator can either precede or succeed conditioning
variables, but it must appear only once in a given term, meaning that
expressions such as \code{P(Y|do(A),B,do(C))} are not allowed, but should
instead be given as \code{P(Y|B,do(A,C))} or \code{P(Y|do(A,C),B)}.  If
variable sets are desired, each member of the set has to be included
explicitly.

Parameter \code{query} is used to describe the target $Q$ of
Algorithm~\ref{prog:do-search} as a character string, similarly as the
\code{data}.  If we are interested in identifying $P(Y \mid \doo(X), W)$
we would write
\begin{CodeChunk}
\begin{CodeInput}
R> query <- "P(Y|do(X),W)"
\end{CodeInput}
\end{CodeChunk}
Instead of describing distributions via text, it is also possible to use the
following structure that encodes the role of each variable via a numeric
vector:
\begin{CodeChunk}
\begin{CodeInput}
R> query <- c(Y = 0, X = 1, W = 2)
\end{CodeInput}
\end{CodeChunk}
Given a distribution of the form $P(\+ A \mid \doo(\+B),\+ C)$ and a
variable $V$, a value 0 means that $V \in \+ A$, value 1 means that $V \in
\+ B$ and value 2 means that $V \in \+ C$.  This format can also be used to
input \code{data} as a list of numeric vectors:
\begin{CodeChunk}
\begin{CodeInput}
R> data <- list(
+    c(W = 0),
+    c(Y = 0, X = 2),
+    c(Z = 0, X = 1, W = 2)
+  )
\end{CodeInput}
\end{CodeChunk}
Finally, \code{graph} encodes the semi-Markovian graph $G$ of the causal
model as a character string with each edge on its own line.  A directed edge
from $X$ to $Y$ is given as \code{X -> Y} and a bidirected edge between $X$
and $Y$ is given as \code{X <-> Y}.  Intervention nodes should not be given
explicitly, since they are added automatically after calling
\code{dosearch}.  Furthermore, only vertices with incoming or outgoing edges
should be included in \code{graph}.  A variable with no connected edges can
still appear in the input distributions and is automatically added to the
graph as well.  As an example, we can encode the graph of
Figure~\ref{fig:simple_search}(\subref{fig:intro_backdoor}) with an added
bidirected edge between $X$ and $Y$ as follows
\begin{CodeChunk}
\begin{CodeInput}
R> graph <- "
+    X -> Y
+    Z -> X
+    Z -> Y
+    X <-> Y
+  "
\end{CodeInput}
\end{CodeChunk}
Alternatively, one may use \pkg{igraph} graphs \citep{igraph} in the syntax
of the \pkg{causaleffect} package or DAGs created using the \pkg{dagitty}
package.
\begin{CodeChunk}
\begin{CodeInput}
R> library("igraph")
R> graph <- graph.formula(X -+ Y, Z -+ X, Z -+ Y, X -+ Y, Y -+ X)
R> graph <- set.edge.attribute(graph, "description", 4:5, "U")
\end{CodeInput}
\end{CodeChunk}
\begin{CodeChunk}
\begin{CodeInput}
R> library("dagitty")
R> graph <- dagitty("dag{X -> Y; Z -> X; Z -> Y; X <-> Y}")
\end{CodeInput}
\end{CodeChunk}
The next two optional parameters of \code{dosearch}, \code{transportability}
and \code{selection_bias}, are used to denote those vertices of $G$ that
should be understood as either transportability nodes or selection bias
nodes, respectively.  Providing these parameters may increase search
performance in relevant problems.  Both of these parameters should be given
as character strings, where individual variables are separated by a comma,
for example \code{transportability = "S,T"}.  Parameter \code{missing_data},
as the name suggests, is used to define missingness
mechanisms~\ref{eq:missingness} as a character string, where individual mechanisms
are separated by a comma.  In order to describe that $R_X$ is the response
indicator of $X$ we would write \code{R_X : X}, which also implicitly
defines that \code{X*} is the proxy variable of \code{X}.  Proxy variables
do not need to be manually described in \code{graph} as they are
automatically constructed based on the \code{missing_data} argument.

The list \code{control} can be used to set various additional parameters
that are not directly related to the identifiability problem itself, but
more so to the output of the search and other auxiliary details, such as
benchmarking and obtaining derivations such as
Figure~\ref{fig:simple_search}(\subref{fig:derivation}).  One such control
parameter determines whether to use the search heuristic or not
(\code{heuristic = FALSE} by default). 
Documentation of the \pkg{dosearch} package contains detailed information on
the full list of control parameters.  

The return object of \code{dosearch} is a list with three components by
default.  The first component, \code{identifiability}, is a logical value
that takes the value \code{TRUE} when the target distribution described by
\code{query} is identifiable from the inputs of \code{data}.  The second
component, \code{formula}, is a character string describing the target
distribution in terms of the inputs in \LaTeX~syntax if the target is
identifiable.  Otherwise this component is just an empty character string. 
The third component \code{call} contains the arguments of the original
function call.

\subsection{Simulations} \label{sec:simu}

Here we report the results of a simulation study to assess the running time
performance of \dosearch{} and the impact of the search space reduction
techniques as well as the search heuristic outlined in
Section~\ref{sect:improvements}.

Our synthetic simulation scenario consisted of $1000$ semi-Markovian causal
graphs of $10$~vertices that were generated at random by first generating a
random topological order of the vertices followed by a random lower
triangular adjacency matrices for both directed and bidirected
edges. Graphs without a directed path from $X$ to $Y$ were discarded. 
We sampled sequentially input distributions of the form $P(\+ A \mid
\doo(\+ B), \+ C)$ at random by generating disjoint subsets such that $\+ A$
is always non-empty.  This was continued until the target quantity $P(Y
\mid \doo(X))$ was found to be identifiable by the search.  Then for each
graph, we recorded the search times for the specific set of inputs that
first resulted in the query to be identified and for the last set such that
the target was non-identifiable.  In other words, each graph generates two
simulation instances, one for an identifiable query and one for a
non-identifiable query.  This setting directly corresponds to the setting of
partially overlapping experimental data sets discussed in
Section~\ref{sect:new_scenarios} for which no other algorithmic solutions
exist.

To understand the impact of the search heuristic and the various
improvements, we compare four different search configurations: the basic
\dosearch{} without the search heuristic or improvements\footnote{In this
configuration, terms are expanded in the order they were identified; the
conditions in Table~\ref{tab:validsets} are not checked.}, one that only
uses the search heuristic, one that only uses the improvements of
Section~\ref{sect:improvements} and one that uses them both.
\begin{figure}[t!]
  \centering
  \begin{subfigure}[t]{0.325\textwidth}
    \centering
    \includegraphics[width=1.0\textwidth]{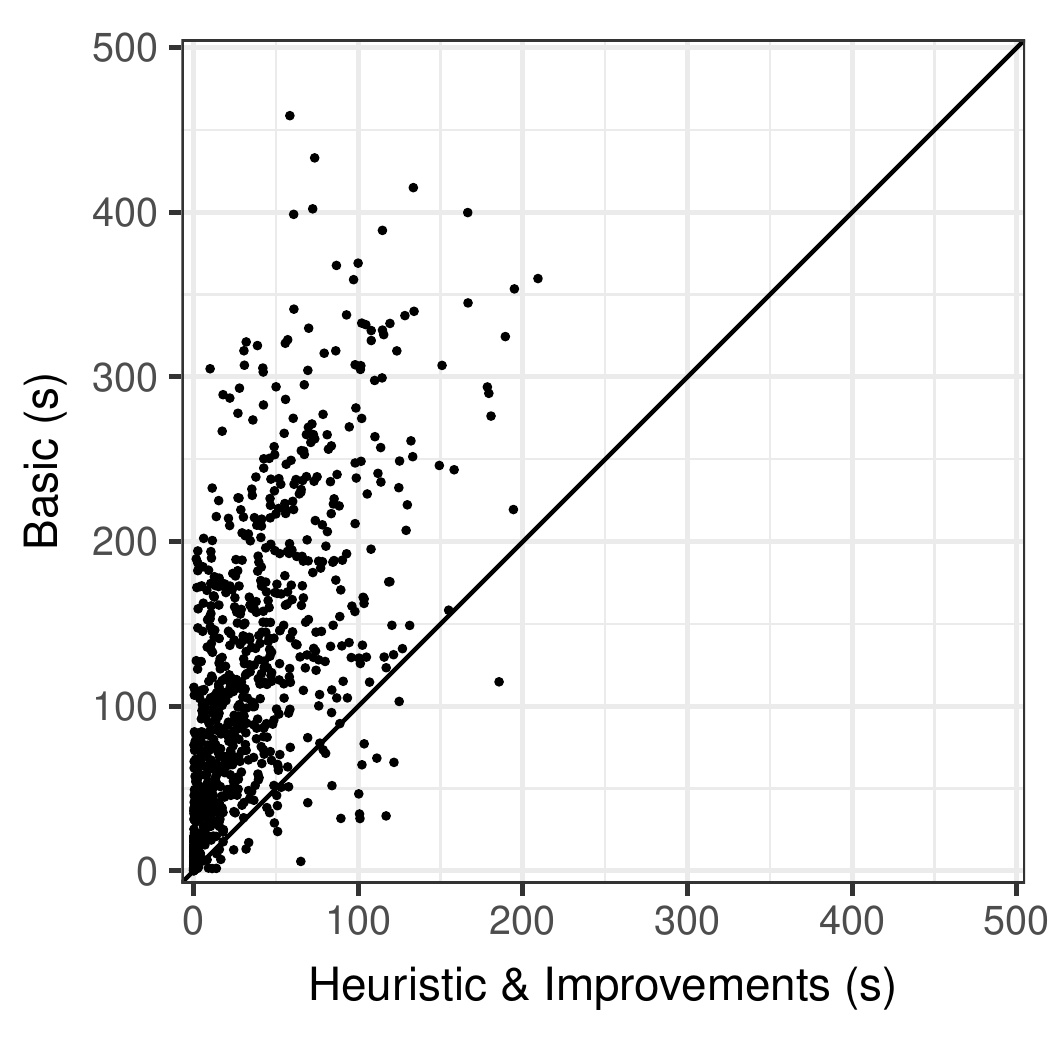}
    \caption{}
    \label{fig:scatter_id_hi}
  \end{subfigure}
  \hfill
  \begin{subfigure}[t]{0.325\textwidth}
    \centering
    \includegraphics[width=1.0\textwidth]{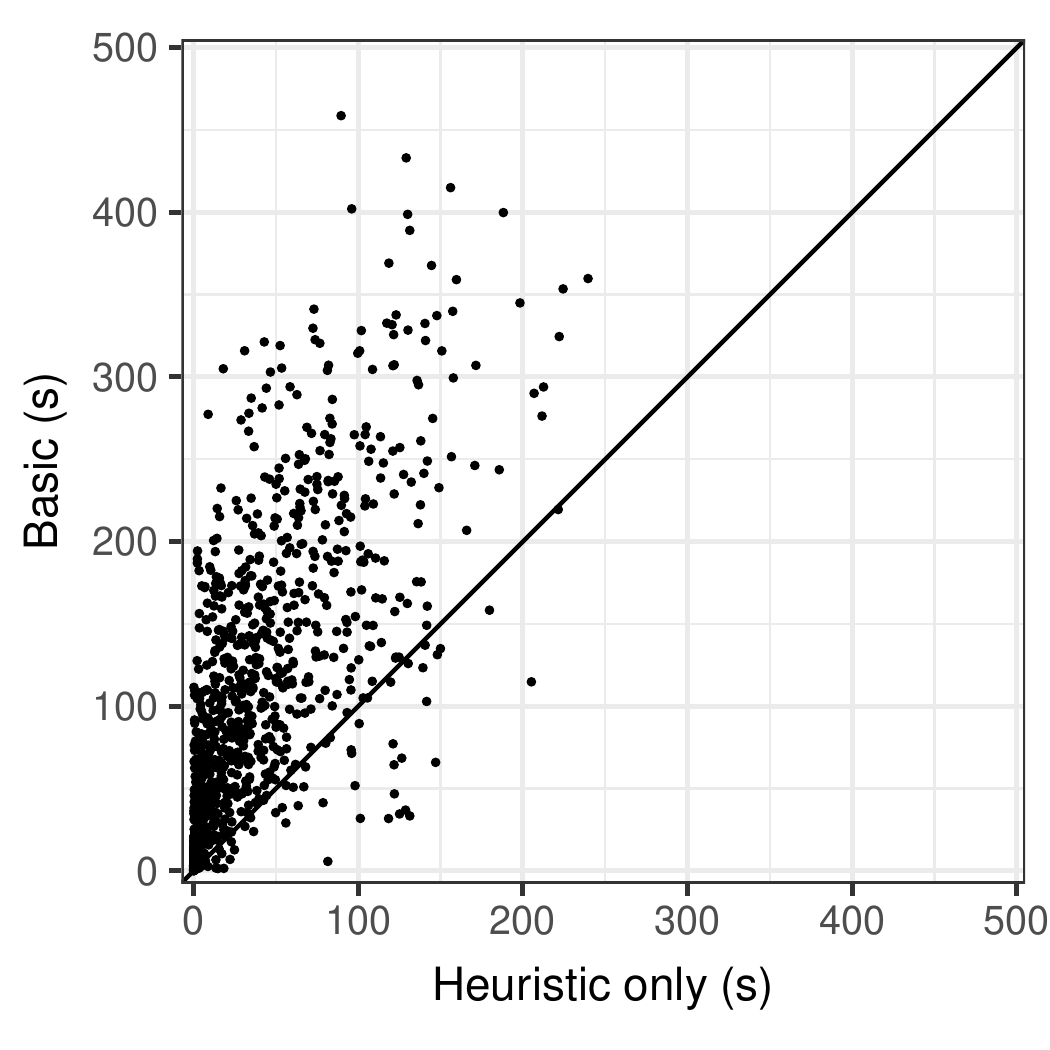}
    \caption{}
    \label{fig:scatter_id_h}
  \end{subfigure}
  \hfill
  \begin{subfigure}[t]{0.325\textwidth}
    \centering
    \includegraphics[width=1.0\textwidth]{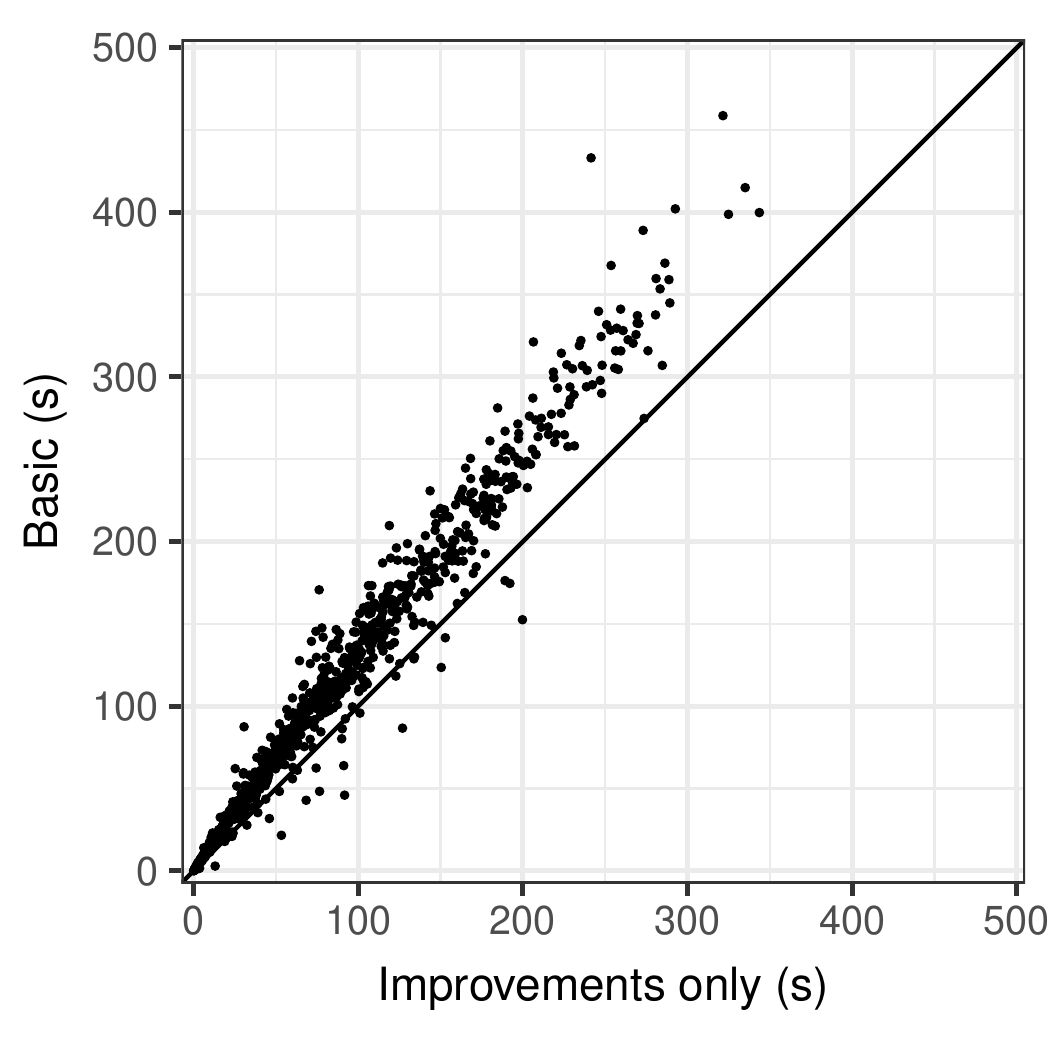}
    \caption{}
    \label{fig:scatter_id_i}
  \end{subfigure}
  \caption{Scatter plots of the search times from identifiable instances
    under different search configurations compared to the basic \dosearch{}
    without a heuristic or improvements.}
  \label{fig:scatter_id}
\end{figure}

Figure~\ref{fig:scatter_id} shows the search times of the configurations
compared to the basic configuration for identifiable instances.  Most
importantly, a vast majority of instances (96\%) are solved faster than the
basic configuration when both heuristics and improvements are used.  The
average search time with both heuristics and improvements enabled was 
$31.5$~seconds and $80.2$~seconds for the basic configuration.  The search 
heuristic provides the greatest benefit for these instances as can be seen from
Figure~\ref{fig:scatter_id}(\subref{fig:scatter_id_h}).  Using a heuristic
can sometimes hinder performance by leading the search astray and by causing
additional computational steps through the evaluation of the proximity
function.  For example, there is a small number of instances where the search
is over ten times slower than the basic configuration when using a heuristic.
Fortunately, there are several instances in the opposite direction, where the
heuristic provides over one hundred fold reduction in search time. Curiously, 
even using the improvements sometimes results in slower search times. 
This is most likely due to the elimination of Rule~1 of do-calculus, since it 
may be the case that the basic search is able to use this rule to reach the 
target distribution faster.  More importantly, 
Figure~\ref{fig:scatter_id}(\subref{fig:scatter_id_i}) shows that the
improvements clearly benefit the search.  Furthermore, the benefit tends to
increase as the instances get harder.

Figure~\ref{fig:scatter_nonid} shows the search times of the configurations
for non-identifiable instances.  Relying only on a search heuristic provides
no benefit here, as expected.  The improvements to the search are most
valuable for these instances, and in this scenario every non-identifiable
instance was solved faster than baseline using the improvements, and when
applied with the heuristic only one non-identifiable instance was slower 
than baseline. The average search time with both heuristic and improvements 
enabled was $134.6$~seconds and $182.5$~seconds for the basic configuration. 
The almost zero second instances are a result of Theorem~\ref{thm:nonidsimple} 
when no search has to be performed in order to determine the instance to be
non-identifiable.  The benefit of the improvements tends to increase as the
instances get harder also for these instances.
\begin{figure}[t!]
  \centering
  \begin{subfigure}[t]{0.325\textwidth}
    \centering
    \includegraphics[width=1.0\textwidth]{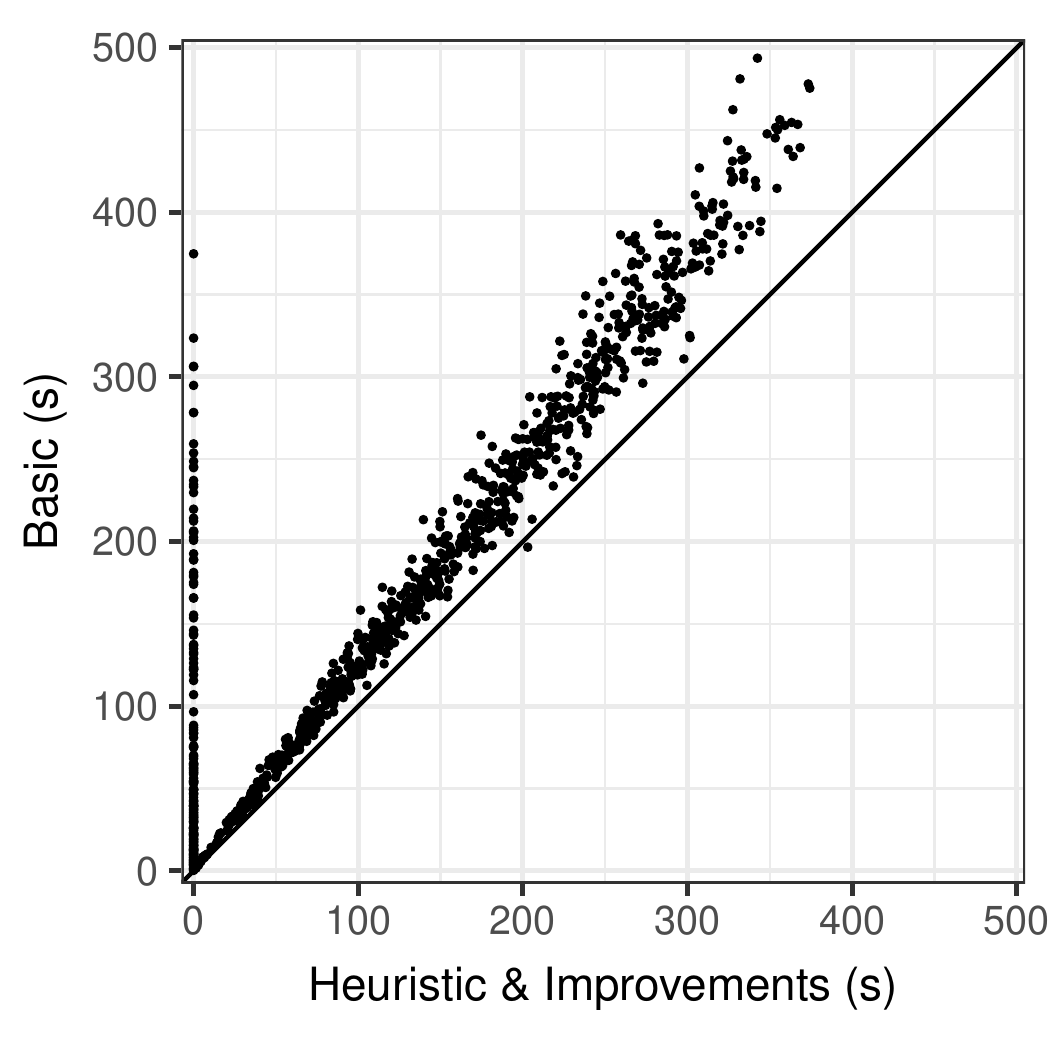}
    \caption{}
    \label{fig:scatter_nonid_hi}
  \end{subfigure}
  \hfill
  \begin{subfigure}[t]{0.325\textwidth}
    \centering
    \includegraphics[width=1.0\textwidth]{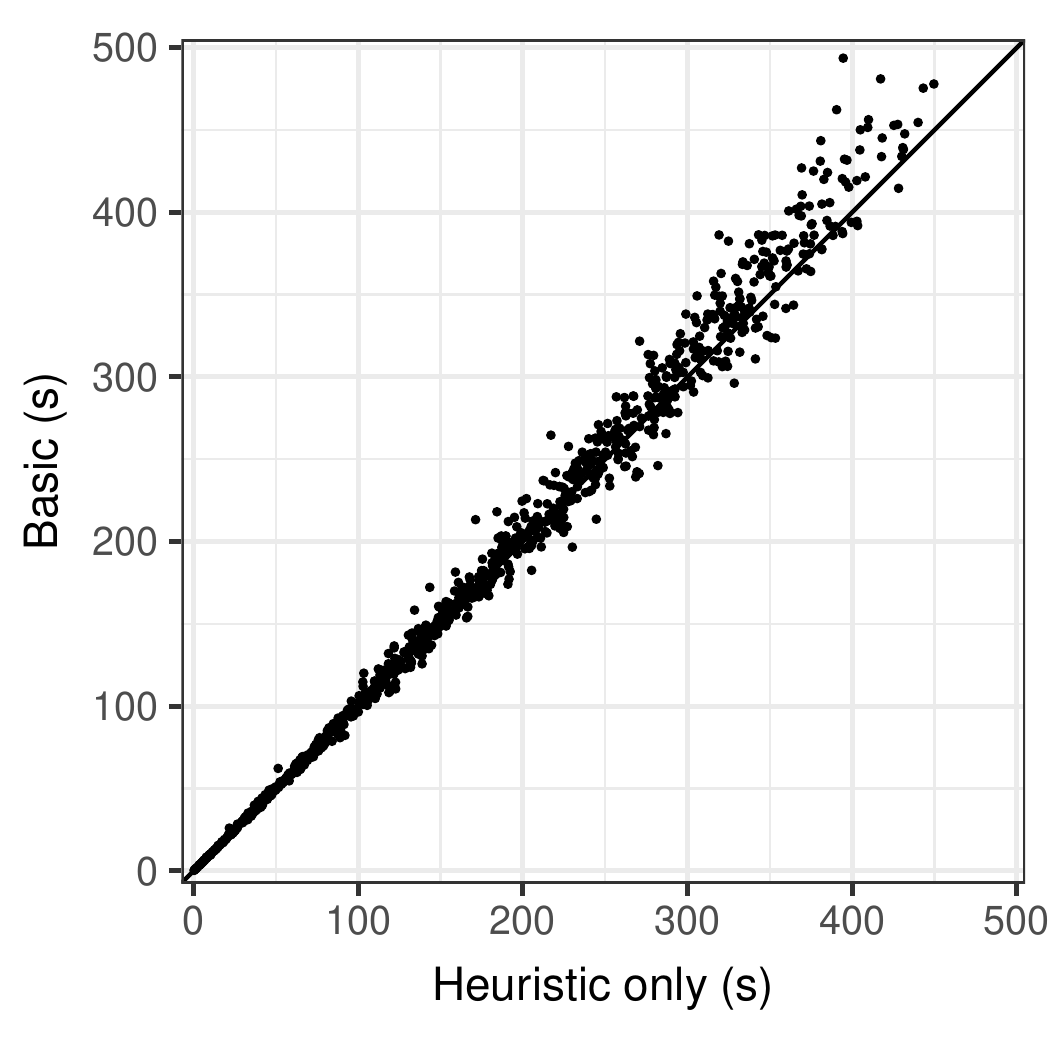}
    \caption{}
    \label{fig:scatter_nonid_h}
  \end{subfigure}
  \hfill
  \begin{subfigure}[t]{0.325\textwidth}
    \centering
    \includegraphics[width=1.0\textwidth]{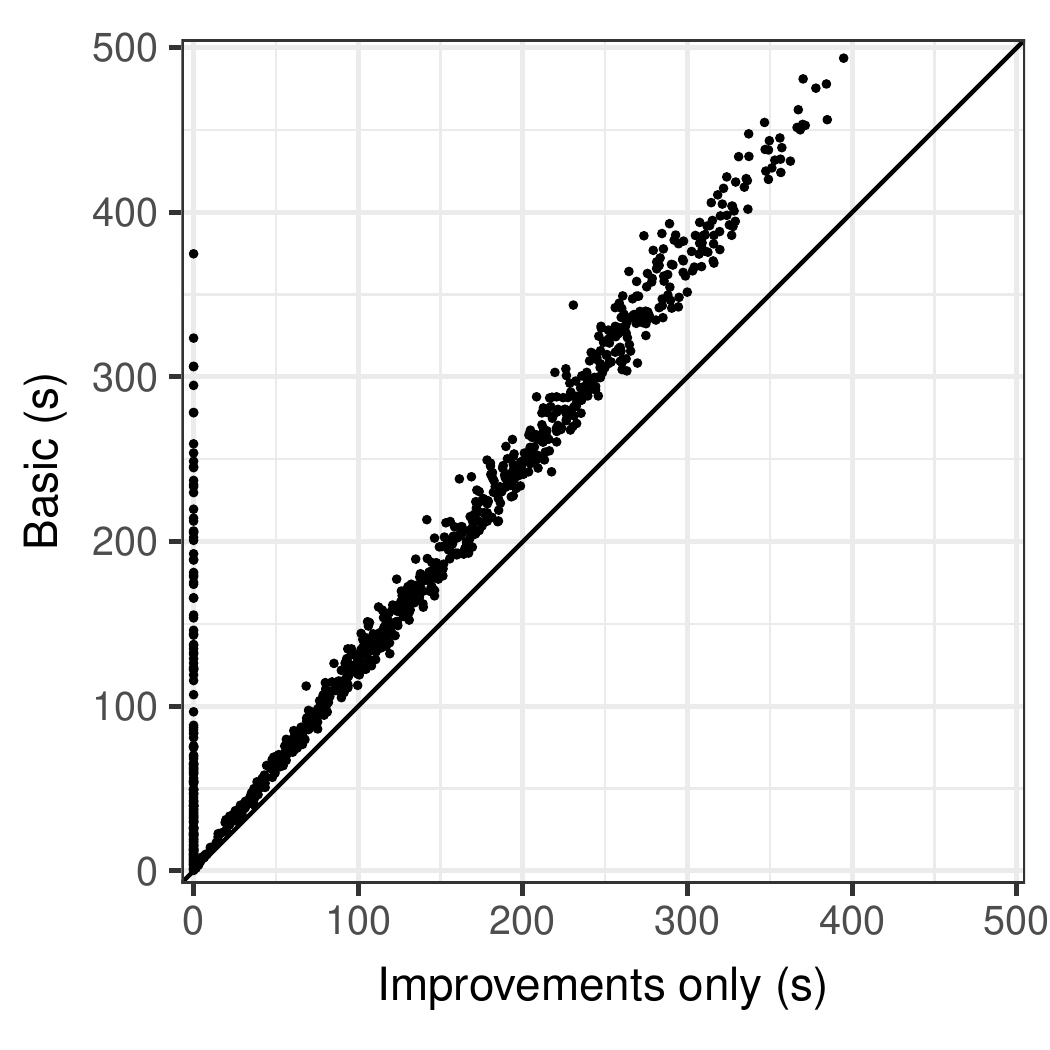}
    \caption{}
    \label{fig:scatter_nonid_i}
  \end{subfigure}
  \caption{Scatter plots of the search times from non-identifiable instances
    under different search configurations compared to the baseline
    configuration.  }
  \label{fig:scatter_nonid}
\end{figure}
\begin{figure}[t!]
  \centering
  \includegraphics[width=0.46\textwidth]{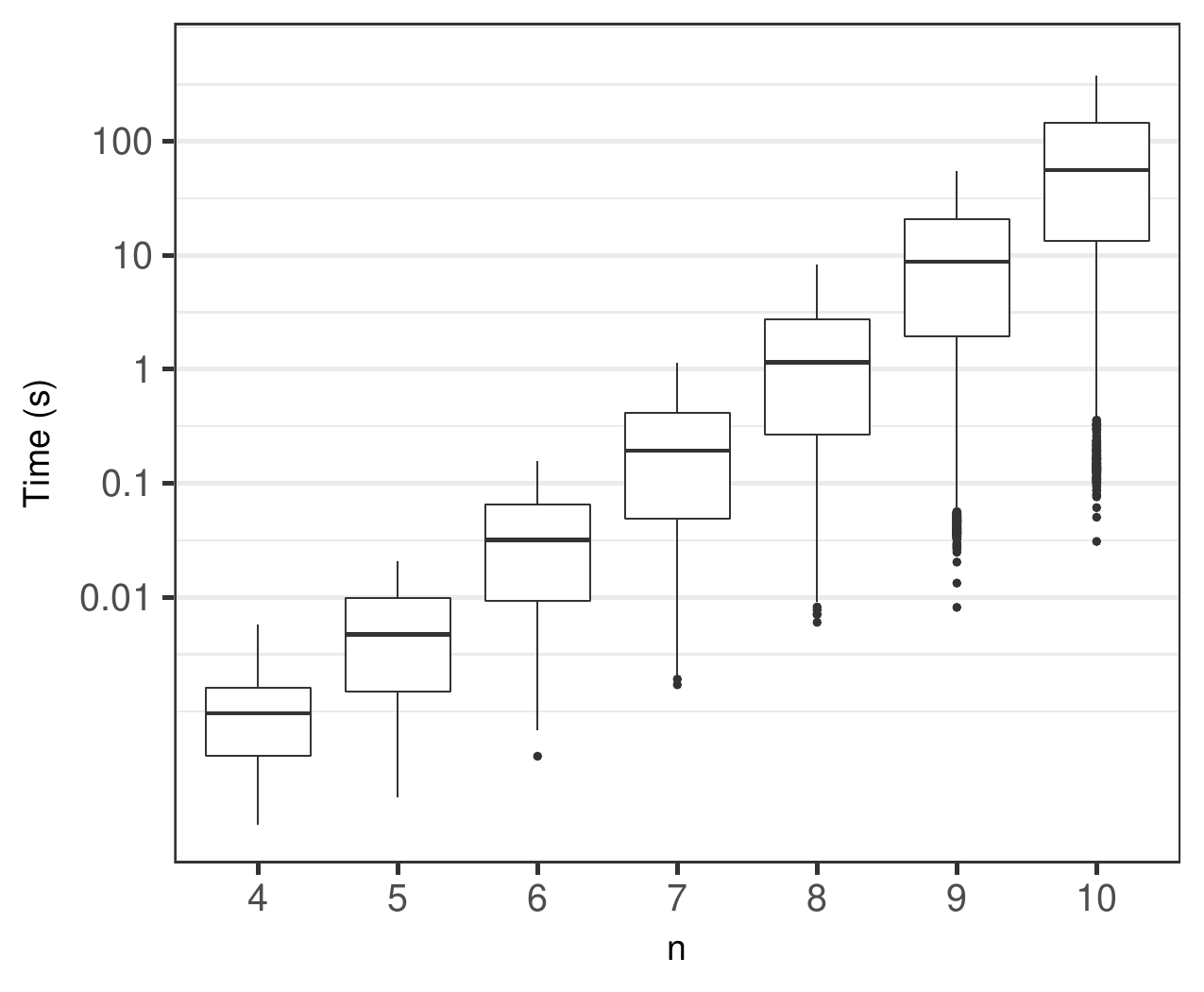}
  \caption{Boxplots of search times for both identifiable and
    non-identifiable instances in graphs of $n = 4,\ldots,10$~vertices.  The
    vertical axis uses a logarithmic scaling.  Instances where the search
    time was less than $10^{-4}$~seconds were omitted for clarity.}
  \label{fig:time_by_n}
\end{figure}

Finally we examined the average run time performance of \dosearch{}, with
all improvements and heuristics enabled.  We replicated the previously
described simulation scenario with the same number of instances ($1000$) for
graphs with up to $10$~vertices.  Figure~\ref{fig:time_by_n} shows the boxplots of
search times on a log-scale for graphs of different size, including both
identifiable and non-identifiable instances.  Note that for every graph size
there are a number of easily solvable instances that show up as outliers
in this plot. Instances with graphs of $10$~vertices  are solved routinely
in under $100$~seconds.  In this plot, the running times 
increase exponentially with increasing graph size (or number of variables).

\section{New causal effect identification results} \label{sect:new_results}

We present a number of results for various identifiability problems to
showcase the versatility of \dosearch{} with the accompanying \proglang{R}
code for some specific examples.

\subsection{Multiple data sources with partially overlapping variable sets}

Earlier generalizations of the identifiability problem assume nested
experiments or entire distributions with the exception of surrogate outcome
identifiability \citep{Tikka:surrogate} which also has its own intricate set
of assumptions regarding the available distributions.  None of these
assumptions are needed in \dosearch{} and it can be used to solve
identifiability problems from completely arbitrary collections of input
distributions.

We showcase identifiability from multiple data sources by three examples. 
The first example is the human resource management problem presented in the
introduction and shown in Figure~\ref{fig:hr}.  The question of interest was
the identifiability of $P(Y \mid \doo(X))$ from the data sources
$P(Y,B,E,X)$ and $P(A,B,X)$.  The answer can be provided with the following
lines of \proglang{R} code:
\begin{CodeChunk}
\begin{CodeInput}
R> library("dosearch")
R> data <- "
+    p(y,b,e,x)
+    p(a,b,x)
+  "
R> query <- "p(y|do(x))"
R> graph <-  "
+    e -> x
+    e -> y
+    a -> b
+    a -> x
+    x -> b
+    x -> y
+    b -> y
+  "
R> dosearch(data, query, graph, control = list(heuristic = TRUE))
\end{CodeInput}
\begin{CodeOutput}
\sum_{b,a}\left(p(a)\left(p(b|x,a)\sum_{e}
  \left(p(e)p(y|x,b,e)\right)\right)\right)
\end{CodeOutput}
\end{CodeChunk}
The result means that the causal effect is identifiable and the returned
formula is
\[
  \sum_{B,A}P(A)P(B \mid X,A)\sum_{E} P(E)P(Y|X,B,E).
\]
By running an additional line of code
\begin{CodeChunk}
\begin{CodeInput}
R> dosearch(data, "p(y,b,e,x,a)", graph, control = list(heuristic = TRUE))
\end{CodeInput}
\begin{CodeOutput}
The query p(y,b,e,x,a) is non-identifiable. 
\end{CodeOutput}
\end{CodeChunk}
we learn that the joint distribution $P(Y,B,E,X,A)$ is not identifiable. 
This rules out the possibility to solve the problem with \pkg{causaleffect}
which requires the joint distribution as an input.

In the second example we consider identifiability of $P(Y_1,Y_2 \mid
\doo(X_1,X_2))$ in the graph of
Figure~\ref{fig:obsandexp_examples}(\subref{fig:obsandexp_1}) from $P(\+
V)$, $P(Y_1,Y_2 \mid \doo(X_1),Z,W,X_2)$, $P(W \mid \doo(X_1, X_2))$ and
$P(Z \mid \doo(X_2))$.  The target quantity is identifiable and
\dosearch{} produces the following formula for it
\[
\sum_{Z,W} P(Y_1,Y_2 \mid \doo(X_1), Z, W, X_2)P(Z \mid \doo(X_2))P(W
\mid \doo(X_2, X_1))
\]
In the third example we consider identifiability of $P(Y_1,Y_2 \mid
\doo(X_1,X_2))$ in the graph of
Figure~\ref{fig:obsandexp_examples}(\subref{fig:obsandexp_2}) from $P(\+
V)$, $P(Y_1 \mid \doo(X_1),Y_2,W,Z,X_2)$, $P(X_2,W \mid \doo(X_1))$,
$P(X_2 \mid \doo(X_1,W))$, $P(Y_2 \mid \doo(X_1),Z,W,X_2)$, $P(Y_2
\mid \doo(Z),X_1,W,X_2)$, and $P(Y_1,Y_2 \mid \doo(Z),W,X_1,X_2)$. 
Again, the target quantity is identifiable and \dosearch{} outputs the
following formula
\begin{align*}
\sum_{W}&\left(P(W \mid \doo(X_1),X_2) \sum_{X_2} P(X_2 \mid
\doo(X_1,W)) \times \right.  \\
&\left.\frac{\sum_{Z}\!  P(X_2,W,Z \mid X_1)P(Y_1,Y_2 \mid
\doo(X_1),X_2,W,Z)}{\sum_{Y^\prime_1,Y^\prime_2,Z}\!  P(X_2,W,Z \mid
X_1)P(Y^\prime_1,Y^\prime_2 \mid \doo(X_1),X_2,W,Z)} \right).
\end{align*}
This example shows that a heuristic approach can also help us to find
shorter formulas.  If we run \dosearch{} again without the heuristic in this
instance, the output formula is instead
\begin{align*}
&\sum_{W,Z} \left(P(Z)P(W \mid X_2,X_1,Z) \sum_{X_2} P(X_2 \mid X_1,Z)
\sum_{Y_2} P(Y_2 \mid \doo(X_1),X_2,W,Z)\right.  \, \times \\
&\quad\left.  P(Y_1 \mid \doo(X_1),X_2,Y_2,W,Z) \frac{P(Y_2 \mid
\doo(X_1),X_2,W,Z)P(Y_1 \mid \doo(X_1),X_2,Y_2,W,Z)}{\sum_{Y^\prime_2}
P(Y^\prime_2 \mid \doo(X_1),X_2,W,Z)P(Y_1 \mid
\doo(X_1),X_2,Y^\prime_2,W,Z)} \right).
\end{align*}
We can run these examples in \proglang{R} by writing
\begin{CodeChunk}
\begin{CodeInput}
R> data <- "
+    p(x_1,y_1,x_2,y_2,z,w)
+    p(y_1,y_2|z,w,x_2,do(x_1))
+    p(y_2|y_1,z,w,x_2,do(x_1))
+    p(w|do(x_1,x_2))
+    p(z|do(x_2))
+  "
R> query <- "p(y_1,y_2|do(x_1,x_2))"
R> graph <- "
+    z -> y_1
+    w -> y_1
+    y_1 -> y_2
+    x_2 -> z
+    x_1 -> w
+    y_1 <-> x_1
+    y_1 <-> y_2
+    y_2 <-> z
+    y_1 <-> w
+    y_2 <-> w
+  "
R> dosearch(data, query, graph, control = list(heuristic = TRUE))
\end{CodeInput}
\begin{CodeOutput}
\sum_{z,w}\left(p(y_1,y_2|do(x_1),z,w,x_2)
  \left(p(z|do(x_2))p(w|do(x_2,x_1))\right)\right)
\end{CodeOutput}
\end{CodeChunk}
and
\begin{CodeChunk}
\begin{CodeInput}
R> data <- "
+    p(x_1,y_1,x_2,y_2,z,w)
+    p(y_1,y_2|w,x_1,x_2,do(z))
+    p(y_1|y_2,w,z,x_2,do(x_1))
+    p(y_2|x_1,w,x_2,do(z))
+    p(x_2,w|do(x_1))
+    p(x_2|do(x_1,w))
+    p(y_2|z,w,x_2,do(x_1))
+  "
R> query <- "p(y_1,y_2|do(x_1,x_2))"
R> graph <- "
+    y_2 -> y_1
+    w -> y_1
+    x_1 -> x_2
+    x_1 -> y_2
+    z -> y_2
+    w -> y_2
+    x_2 -> w
+    x_1 <-> y_1
+    x_1 <-> y_2
+    y_1 <-> z
+    x_2 <-> z
+  "
R> dosearch(data, query, graph, control = list(heuristic = TRUE))
\end{CodeInput}
\begin{CodeOutput}
\sum_{w}\left(p(w|do(x_1),x_2)\sum_{x_2}\left(p(x_2|do(w,x_1))
  \frac{\sum_{z}\left(p(x_2,w,z|x_1)p(y_1,y_2|do(x_1),x_2,w,z)\right)}
  {\sum_{y_1,y_2} \sum_{z}\left(p(x_2,w,z|x_1)p(y_1,y_2|do(x_1),x_2,w,z)
  \right)}\right)\right)
\end{CodeOutput}
\end{CodeChunk}
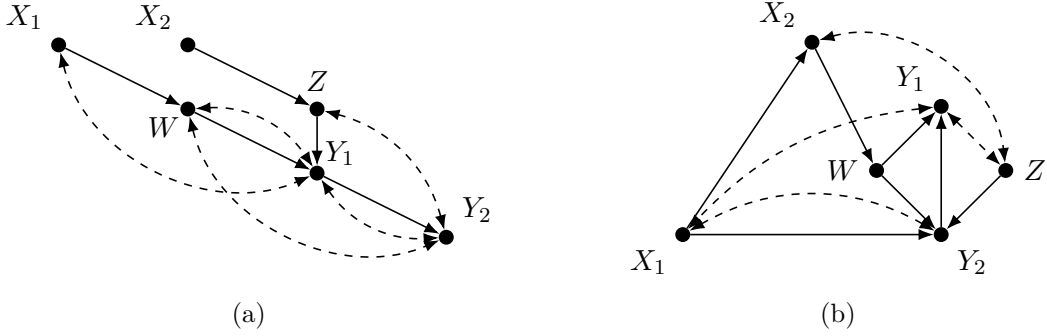
\begin{figure}[t!]
  \begin{subfigure}[t]{0.49\textwidth}
    \centering
    \begin{tikzpicture}[scale=1.7]
    \node [dot = {0}{0}{X_1}{above left}] at (0,0) {};
    \node [dot = {0}{0}{X_2}{above left}] at (1,0) {};
    \node [dot = {-0.1}{-0.1}{Y_1}{above right}] at (2,-1) {};
    \node [dot = {0}{0}{Y_2}{above right}] at (3,-1.5) {};
    \node [dot = {0}{0}{Z}{above}] at (2,-0.5) {};
    \node [dot = {0.1}{0.1}{W}{below left}] at (1,-0.5) {};

    \draw [->] (X_1) -- (W);
    \draw [->] (X_2) -- (Z);
    \draw [->] (W) -- (Y_1);
    \draw [->] (Z) -- (Y_1);
    \draw [->] (Y_1) -- (Y_2);

    \draw [<->,dashed] (X_1) to [bend right=50] (Y_1);
    \draw [<->,dashed] (W) to [bend right=50] (Y_2);
    \draw [<->,dashed] (W) to [bend left=30] (Y_1);
    \draw [<->,dashed] (Y_1) to [bend right=30] (Y_2);
    \draw [<->,dashed] (Y_2) to [bend right=30] (Z);

    \end{tikzpicture}
    \caption{}
    \label{fig:obsandexp_1}
  \end{subfigure}
  \begin{subfigure}[t]{0.49\textwidth}
    \centering
    \begin{tikzpicture}[scale=1.7]
    \node [dot = {0}{0}{X_1}{below left}] at (0,0) {};
    \node [dot = {0}{0}{X_2}{above left}] at (1,1.5) {};
    \node [dot = {0}{0}{Y_1}{above left}] at (2,1) {};
    \node [dot = {0}{0}{Y_2}{below right}] at (2,0) {};
    \node [dot = {0}{0}{Z}{right}] at (2.5,0.5) {};
    \node [dot = {0}{0}{W}{left}] at (1.5,0.5) {};

    \draw [->] (X_2) -- (W);
    \draw [->] (X_1) -- (Y_2);
    \draw [->] (X_1) -- (X_2);
    \draw [->] (W) -- (Y_1);
    \draw [->] (W) -- (Y_2);
    \draw [->] (Y_2) -- (Y_1);
    \draw [->] (Z) -- (Y_2);

    \draw [<->,dashed] (Y_2) to [bend right=30] (X_1);
    \draw [<->,dashed] (X_2) to [bend left=50] (Z);
    \draw [<->,dashed] (Y_1) to (Z);
    \draw [<->,dashed] (Y_1) to [bend right=20] (X_1);
    \end{tikzpicture}
    \caption{}
    \label{fig:obsandexp_2}
  \end{subfigure}
  \caption{Graphs for the examples on identifiability problems combining
    both observational and experimental distributions.}
  \label{fig:obsandexp_examples}
\end{figure}
\subsection{Combining transportability and selection bias}

Input distributions that originate from multiple sources while being
simultaneously affected by selection bias can be considered with
\dosearch{}.  This kind of problem cannot be solved with algorithms RC or
TR$^{\textrm{mz}}$ of Table~\ref{tab:stateofart}.  As an example we consider
one source domain and a target domain with two input data sets: a biased
distribution $P(X,Y,Z \mid S)$ from the target domain and an unbiased
experimental distribution $P(Y,Z \mid \doo(X), T)$ from the source domain. 
We evaluate the query $P(Y \mid \doo(X))$ in the graph of
Figure~\ref{fig:combine} using these inputs.  In the figure transportability
node $T$ is depicted as a gray square and selection bias node $S$ is
depicted as an open double circle.
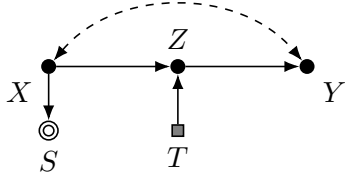
\begin{figure}[t!]
  \centering
  \begin{tikzpicture}[scale=1.7]
    \node [dot = {0}{0}{X}{below left}] at (0,0) {};
    \node [dot = {0}{0}{Z}{above}] at (1,0) {};
    \node [dot = {0}{0}{Y}{below right}] at (2,0) {};
    \node [sb = {0}{0}{S}{below}] at (0,-0.5) {};
    \draw [black, inner sep = 0.5pt] (0,-0.5) circle (1.15pt);
    \node [tr = {0}{0}{T}{below}] at (1,-0.5) {};
    \draw [->] (X) -- (Z);
    \draw [->] (Z) -- (Y);
    \draw [->] (X) -- (S);
    \draw [->] (T) -- (Z);
    \draw [<->,dashed] (X) to [bend left=50]  (Y);
  \end{tikzpicture}
  \caption{Graph that contains both selection bias and transportability nodes.}
  \label{fig:combine}
\end{figure}
\noindent
The query is identifiable and \dosearch{} outputs the following formula for
it
\[
 P(Y \mid \doo(X)) = \sum_{Z} P(Y \mid \doo(X),Z,T) \sum_{Y^\prime}
 P(Z,Y^\prime \mid X,S).
\]
In \proglang{R} we may write
\begin{CodeChunk}
\begin{CodeInput}
R> data <- "
+    p(x,z,y|s)
+    p(y,z|t,do(x))
+  "
R> query <- "p(y|do(x))"
R> graph <- "
+    x -> z
+    z -> y
+    x -> s
+    t -> z
+    x <-> y
+  "
R> dosearch(data, query, graph, 
+    transportability = "t", selection_bias = "s"
+    control = list(heuristic = TRUE))
\end{CodeInput}
\begin{CodeOutput}
\sum_{z}\left(p(y|do(x),z,t)\sum_{y}p(z,y|x,s)\right)
\end{CodeOutput}
\end{CodeChunk}

\subsection{Recovering from multiple sources of selection bias}

We present an example where bias originates from two sources with two input
data sets: a distribution affected by both biasing mechanisms
$P(X,Y,Z,W_1,W_2 \mid S_1,S_2)$ and a distribution affected only by a
single bias source $P(Z \mid S_1)$.  We evaluate the query $P(Y \mid
\doo(X))$ in the graph of Figure~\ref{fig:twobias} using the inputs.
\begin{figure}[t!]
  \centering
  \begin{tikzpicture}[scale=1.35]
    \node [dot = {0}{0}{X}{below}] at (0,0) {};
    \node [dot = {0}{0}{Z}{above}] at (2,2) {};
    \node [dot = {0}{0}{Y}{below right}] at (3,0) {};
    \node [dot = {0}{0}{W_2}{below right}] at (1,1) {};
    \node [dot = {0}{0}{W_1}{above left}] at (-0.5,1.5) {};
    \node [sb = {0}{0}{S_1}{below}] at (-1,0) {};
    \draw [black, inner sep = 0.5pt] (-1,-0) circle (1.40pt);
    \node [sb = {0}{0}{S_2}{below}] at (2,1) {};
    \draw [black, inner sep = 0.5pt] (2,1) circle (1.40pt);
    \draw [->] (X) -- (Y);
    \draw [->] (W_2) -- (X);
    \draw [->] (Z) -- (W_2);
    \draw [->] (Z) -- (Y);
    \draw [->] (Z) -- (S_2);
    \draw [->] (W_1) -- (X);
    \draw [->] (W_1) -- (W_2);
    \draw [->] (W_1) -- (S_1);
  \end{tikzpicture}
  \caption{Graph where two selection bias nodes are present.}
  \label{fig:twobias}
\end{figure}
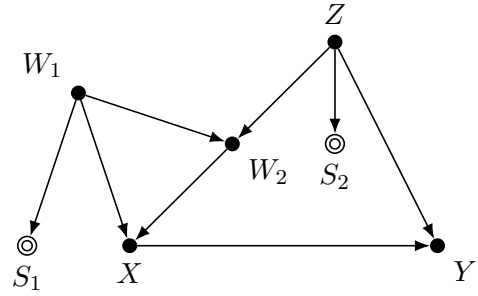
\noindent
The query is identifiable and the following formula is obtained by
\dosearch{}
\[
  \sum_{Z} P(Z \mid S_1)P(Y \mid X,Z,W_1,W_2,S_1,S_2).
\]
The result can be obtained in \proglang{R} as follows.  In this case,
slightly nicer expression is produced when not using the heuristic
than when using the heuristic.
\begin{CodeChunk}
\begin{CodeInput}
R> data <- "
+    p(x,y,z,w_1,w_2|s_1,s_2)
+    p(z|s_1)
+  "
R> query <- "p(y|do(x))"
R> graph <- "
+    w_1 -> w_2
+    z -> w_2
+    x -> y
+    z -> y
+    z -> s_2
+    w_1 -> x
+    w_2 -> x
+    w_1 -> s_1
+  "
R> dosearch(data, query, graph, selection_bias = "s_1, s_2")
\end{CodeInput}
\begin{CodeOutput}
\sum_{z}\left(p(z|s_1)p(y|w_2,x,w_1,z,s_1,s_2)\right)
\end{CodeOutput}
\end{CodeChunk}

\subsection{Systematic analysis of bivariate missing data problems}
\label{sect:systematic}

We apply \dosearch{} using the extended rule set of Table~\ref{tab:md_rules}
for all identifiability problems in bivariate missingness graphs.  By
bivariate missingness graphs we mean semi-Markovian graphs for two
variables, $X$ and $Y$, and their missingness indicators, $R_X$ and $R_Y$. 
Noting that edges from $\{R_X, R_Y \}$ to $\{X, Y \}$ are not allowed, there
are 9216 such graphs.  We consider only 6144 graphs of which 3072 have the
edge $X \rightarrow Y$ and 3072 do not have an edge between $X$ and $Y$. 
Graphs with the edge  $Y \rightarrow X$ are obtained from the studied graphs
by swapping the roles of $X$ and $Y$.  The maximum number of edges in a
bivariate missingness graph is 12 (when a bidirected edge is counted as a
single edge).

The available theoretical results for missing data problems include a
theorem by \citet{Mohan2013} that gives a sufficient and necessary condition
for the identifiability of the joint distribution $P(\+ V)$ but is
restricted to graphs that do not have edges between the missingness
indicators (Row~9 of Table~\ref{tab:stateofart}).  In our example, 5120
graphs out of 6144 have such edges.  The algorithm by \citet{Shpitser2015}
does not have this restriction but it is not complete as shown by
\citet{bhattacharya2019} (Row~10 of Table~\ref{tab:stateofart}).  It follows
from the results of \citet{bhattacharya2019} that the rules of
Table~\ref{tab:md_rules} are not complete for missing data problems. 
Differently from the theorem by \citet{Mohan2013} and the algorithms by
\citet{Shpitser2015} and \citet{bhattacharya2019}, \dosearch{} can however
address missing data problems where we consider identification of a marginal
or a conditional distribution.  In addition, \dosearch{} can address missing
data problems with multiple input distributions.

The queries $P(X,Y)$, $P(X)$, $P(Y)$, $P(Y \mid X)$ and $P(Y \mid
\doo(X))$ were evaluated using \dosearch{} in these 6144 graphs with the
input distribution $P(X^*, Y^*, R_X, R_Y)$.  The results are summarized by
Venn diagrams in Figure~\ref{fig:m2venn}.  The results are also available as
a data set \texttt{bivariate\_missingness} in the \proglang{R}~package
\pkg{dosearch}.  Using this data set we are able to showcase examples on
non-identifiability and find interesting special cases by direct evaluation
of all possible bivariate missingness graphs.  The first example relates
non-identifiability to the of the number of edges present in the graph.
\begin{example}
Let $K$ denote the number of edges in a bivariate missingness graph that
does not have edge $Y \rightarrow X$.  The joint distribution $P(X,Y)$ is
not identifiable by \dosearch{} if $K>5$, marginal distribution $P(X)$ is
not identifiable by \dosearch{} if \mbox{$K>9$}, marginal distribution
$P(Y)$ and conditional distribution $P(Y \mid X)$ are not identifiable by
\dosearch{} if \mbox{$K>8$}.
\end{example}
The next example specifies the graph with the largest number of edges where
both the joint distribution of $X$ and $Y$ and the causal effect of $X$ on
$Y$ can be identified.
\begin{example} \label{th:jointanddo5edges}
The graph in Figure~\ref{fig:m2examples}(\subref{fig:jointanddo5edges}) is
the only bivariate missingness graph that (i) has edge $X \rightarrow Y$,
(ii) has five edges, and (iii) allows for the identification of $P(X,Y)$ and
$P(Y \mid \doo(X))$ by \dosearch{}.
\end{example}
The third example specifies the graph with the largest number of edges where
the marginal distributions are identifiable while the joint distribution and
the causal effect of $X$ on $Y$ are non-identifiable.
\begin{example} \label{th:marginalsonly5edges}
The graph in Figure~\ref{fig:m2examples}(\subref{fig:marginalsonly5edges})
is the only bivariate missingness graph that (i) has five edges, and (ii)
allows for the identification of $P(X)$ and $P(Y)$, and (iii) does not allow
for the identification of $P(X,Y)$ or $P(Y \mid \doo(X))$ by \dosearch{}. 
No bivariate missingness graph that has more than five edges fulfills the
conditions (ii) and (iii).
\end{example}
\begin{figure}[t!]
  \begin{subfigure}[t]{0.49\textwidth}
    \centering
    \includegraphics[width=1.0\textwidth]{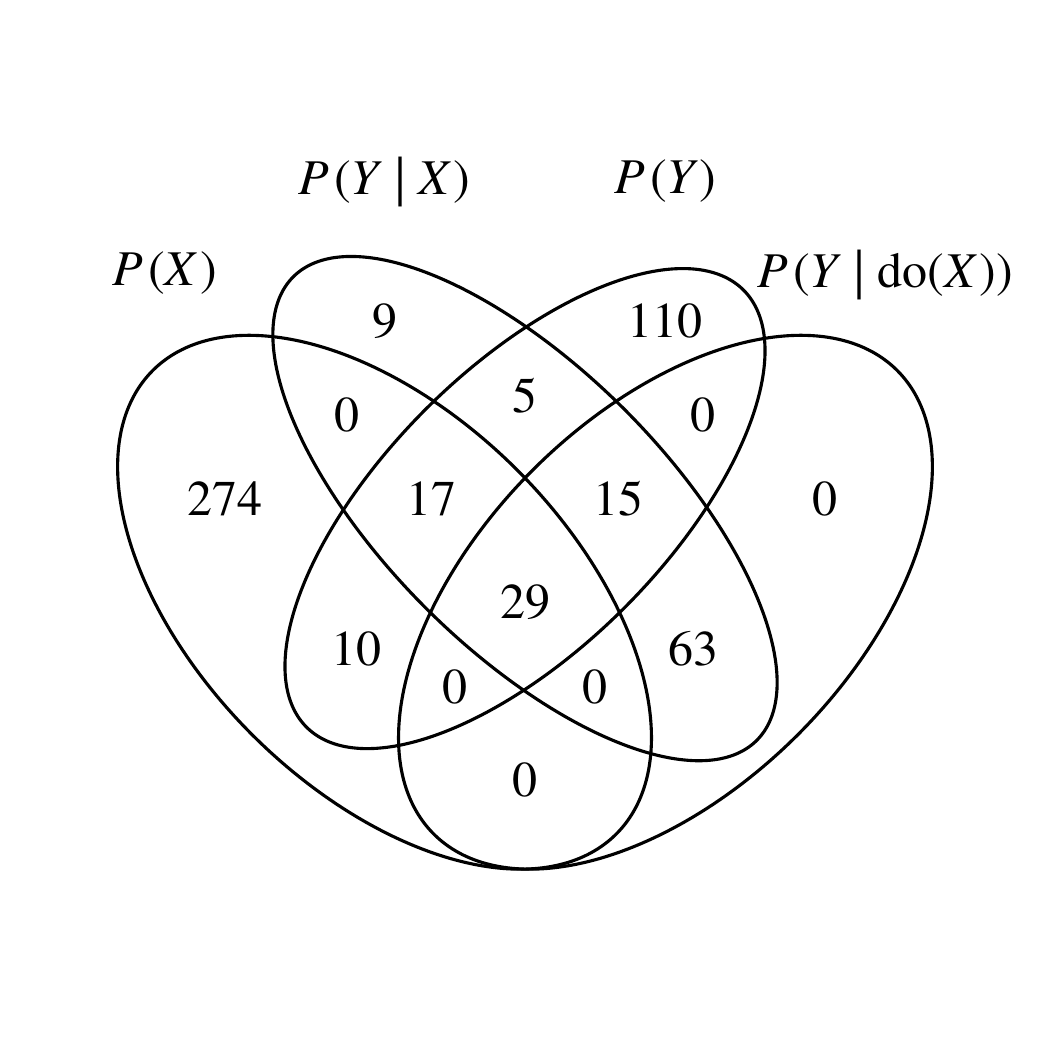}
    \caption{Graphs with arrow $X \rightarrow Y$.}
    \label{fig:m2venn_arrowXtoY}
  \end{subfigure}
  \begin{subfigure}[t]{0.49\textwidth}
    \centering
    \includegraphics[width=1.0\textwidth]{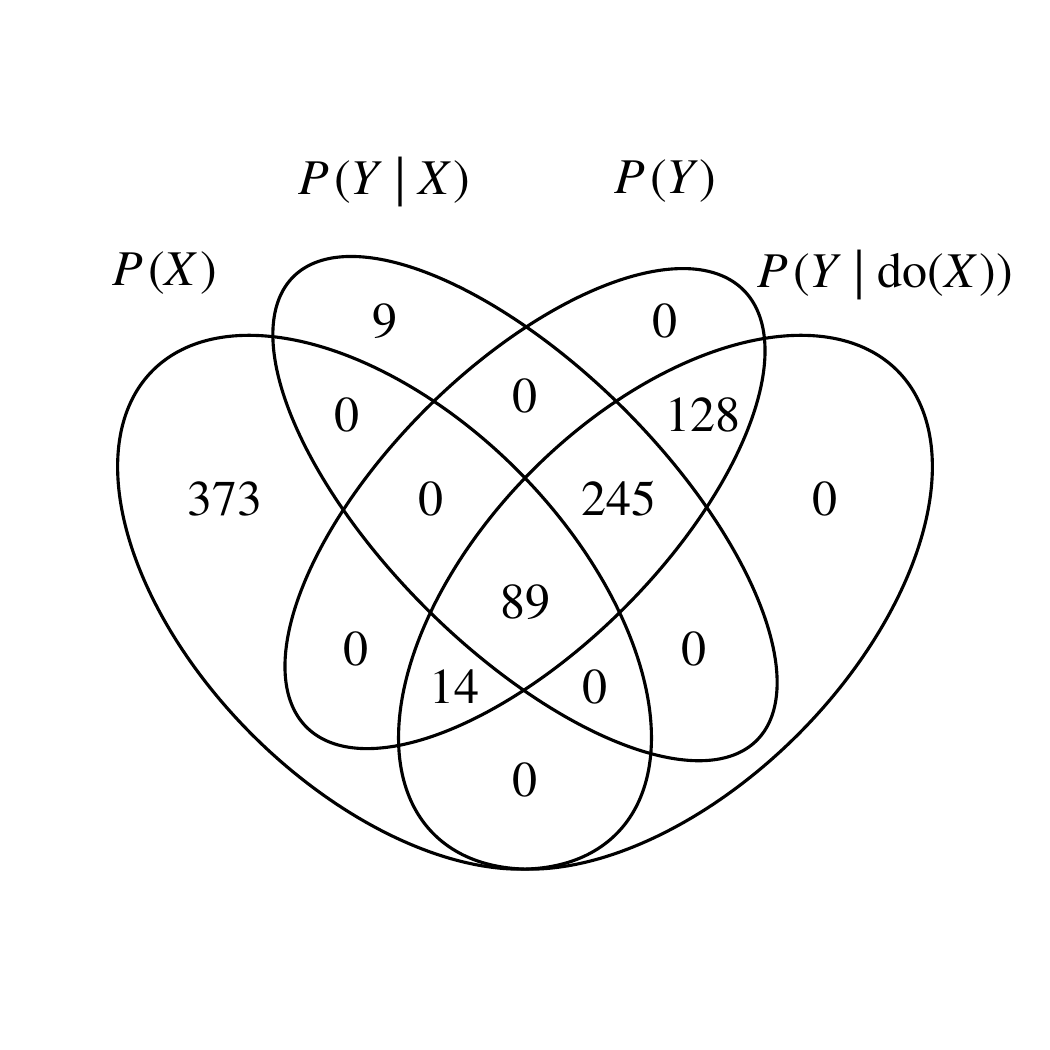}
    \caption{Graphs where there is neither $X \rightarrow Y$ nor $Y \rightarrow X$.}
    \label{fig:m2venn_noarrowXtoY}
  \end{subfigure}
  \caption{Venn diagrams indicating the number of graphs were different
  distributions can be identified \dosearch{}.  The intersection of $P(X)$
  and $P(Y \mid X)$ shows the number of graphs were $P(X,Y)$ can be
  identified.  The total number of possible graphs is 3072 in both cases.}
  \label{fig:m2venn}
\end{figure}
Some interesting examples are shown in Figure~\ref{fig:m2examples}. 
Graphs~(\subref{fig:jointanddo5edges})
and (\subref{fig:marginalsonly5edges}) are the unique graphs that fulfill the
conditions specified in
Examples~\ref{th:jointanddo5edges} and \ref{th:marginalsonly5edges},
respectively.  Graph~(\subref{fig:minimal}) is the graph with the
smallest number of edges
where marginals $P(X)$ and $P(Y)$ can be identified but the joint
distribution $P(X,Y)$ or causal effect $P(Y \mid \doo(X))$ cannot be
identified by \dosearch{}.  In Graph~(\subref{fig:cross1}), $P(X)$, $P(Y)$, $P(X,Y)$ and $P(Y
\mid \doo(X))$ are not identifiable by \dosearch{} but the conditional
distribution $P(Y \mid X)$ can be identified as follows
\begin{equation} \label{eq:cross1and2}
 P(Y \mid X) = \frac{ P(Y \mid R_Y=1) P(X \mid Y, R_X=1, R_Y=1) }
   {\sum_{Y^\prime} P(Y^\prime \mid R_Y=1) P(X \mid Y^\prime, R_X=1,
   R_Y=1)}.
\end{equation}
In Equation~\ref{eq:cross1and2}, the numerator resembles the joint
distribution $P(X,Y \mid R_X = 1,R_Y = 1)$ but is different because $Y$
and $R_X$ are not independent.  The denominator is the marginal of this
pseudo joint distribution.
In Graph~(\subref{fig:cross2}), $P(X)$, $P(Y)$ and $P(X,Y)$ are not identifiable by
\dosearch{} but $P(Y \mid X)$ and $P(Y \mid \doo(X))$ are identifiable
and can be both estimated with Equation~\ref{eq:cross1and2}.  In Graph~(\subref{fig:openbox}),
$P(X,Y)$, $P(X)$ and $P(Y \mid \doo(X))$ are not identifiable by
\dosearch{} but $P(Y)$ and $P(Y \mid X)$ can be identified as follows
\begin{align}
 P(Y) &= \sum_{R_X,X^*} P(Y\mid X^*, R_X, R_Y=1) P(R_X,X^*),
  \label{eq:openboxY}\\
 P(Y \mid X) &= P(Y \mid X, R_X=1, R_Y=1) \nonumber
\end{align}
In Equation~\ref{eq:openboxY}, the summation also goes over the cases
where $X^* = \textrm{NA}$ and the distribution of $Y$ must be estimated also
on the condition that $X$ is not observed.
\begin{figure}[t!]
  \centering
  \begin{subfigure}[t]{0.3\textwidth}
    \centering
    \begin{tikzpicture}[scale=1.0]
    \node [dot = {0}{0}{X}{below left}] at (0,0) {};
    \node [dot = {0}{0}{Y}{below right}] at (2,0) {};
    \node [dot = {0}{0}{R_X}{above left}] at (0,2) {};
    \node [dot = {0}{0}{R_Y}{above right}] at (2,2) {};
    \draw [->] (R_X) -- (R_Y) ;
    \draw [->] (X) -- (R_Y) ;
    \draw [->] (X) -- (Y) ;
    \draw [<->,dashed] (X) to [bend right=30] (R_Y) ;
    \draw [<->,dashed] (R_X) to [bend left=30] (R_Y) ;
    \end{tikzpicture}
    \caption{}
    \label{fig:jointanddo5edges}
  \end{subfigure}
  \begin{subfigure}[t]{0.3\textwidth}
    \centering
    \begin{tikzpicture}[scale=1.0]
    \node [dot = {0}{0}{X}{below left}] at (0,0) {};
    \node [dot = {0}{0}{Y}{below right}] at (2,0) {};
    \node [dot = {0}{0}{R_X}{above left}] at (0,2) {};
    \node [dot = {0}{0}{R_Y}{above right}] at (2,2) {};
    \draw [->] (R_Y) -- (R_X) ;
    \draw [->] (X) -- (Y) ;
    \draw [<->,dashed] (X) to [bend right=30] (Y) ;
    \draw [<->,dashed] (Y) to [bend left=30] (R_X) ;
    \draw [<->,dashed] (R_X) to [bend left=30] (R_Y) ;
    \end{tikzpicture}
    \caption{}
    \label{fig:marginalsonly5edges}
  \end{subfigure}
  \begin{subfigure}[t]{0.3\textwidth}
    \centering
    \begin{tikzpicture}[scale=1.0]
    \node [dot = {0}{0}{X}{below left}] at (0,0) {};
    \node [dot = {0}{0}{Y}{below right}] at (2,0) {};
    \node [dot = {0}{0}{R_X}{above left}] at (0,2) {};
    \node [dot = {0}{0}{R_Y}{above right}] at (2,2) {};
    \draw [->] (X) -- (Y) ;
    \draw [<->,dashed] (Y) to [bend left=30] (R_X) ;
    \end{tikzpicture}
    \caption{}
    \label{fig:minimal}
  \end{subfigure}

  \begin{subfigure}[t]{0.3\textwidth}
    \centering
    \begin{tikzpicture}[scale=1.0]
    \node [dot = {0}{0}{X}{below left}] at (0,0) {};
    \node [dot = {0}{0}{Y}{below right}] at (2,0) {};
    \node [dot = {0}{0}{R_X}{above left}] at (0,2) {};
    \node [dot = {0}{0}{R_Y}{above right}] at (2,2) {};
    \draw [->] (R_Y) -- (R_X) ;
    \draw [->] (X) -- (R_Y) ;
    \draw [->] (Y) -- (R_X) ;
    \draw [->] (X) -- (Y) ;
    \draw [<->,dashed] (X) to [bend right=30] (Y) ;
    \end{tikzpicture}
    \caption{}
    \label{fig:cross1}
  \end{subfigure}
  \begin{subfigure}[t]{0.3\textwidth}
    \centering
    \begin{tikzpicture}[scale=1.0]
    \node [dot = {0}{0}{X}{below left}] at (0,0) {};
    \node [dot = {0}{0}{Y}{below right}] at (2,0) {};
    \node [dot = {0}{0}{R_X}{above left}] at (0,2) {};
    \node [dot = {0}{0}{R_Y}{above right}] at (2,2) {};
    \draw [->] (R_Y) -- (R_X) ;
    \draw [->] (X) -- (R_Y) ;
    \draw [->] (Y) -- (R_X) ;
    \draw [->] (X) -- (Y) ;
    \draw [<->,dashed] (X) to [bend right=30] (R_Y) ;
    \end{tikzpicture}
    \caption{}
    \label{fig:cross2}
  \end{subfigure}
  \begin{subfigure}[t]{0.3\textwidth}
    \centering
    \begin{tikzpicture}[scale=1.0]
    \node [dot = {0}{0}{X}{below left}] at (0,0) {};
    \node [dot = {0}{0}{Y}{below right}] at (2,0) {};
    \node [dot = {0}{0}{R_X}{above left}] at (0,2) {};
    \node [dot = {0}{0}{R_Y}{above right}] at (2,2) {};
    \draw [->] (R_X) -- (R_Y) ;
    \draw [->] (X) -- (R_X) ;
    \draw [->] (X) -- (Y) ;
    \draw [<->,dashed] (X) to [bend right=30] (Y) ;
    \end{tikzpicture}
    \caption{}
    \label{fig:openbox}
  \end{subfigure}
  \caption{Missingness graphs used as example cases.  Proxy variables are
    omitted for clarity.}
   \label{fig:m2examples}
\end{figure}
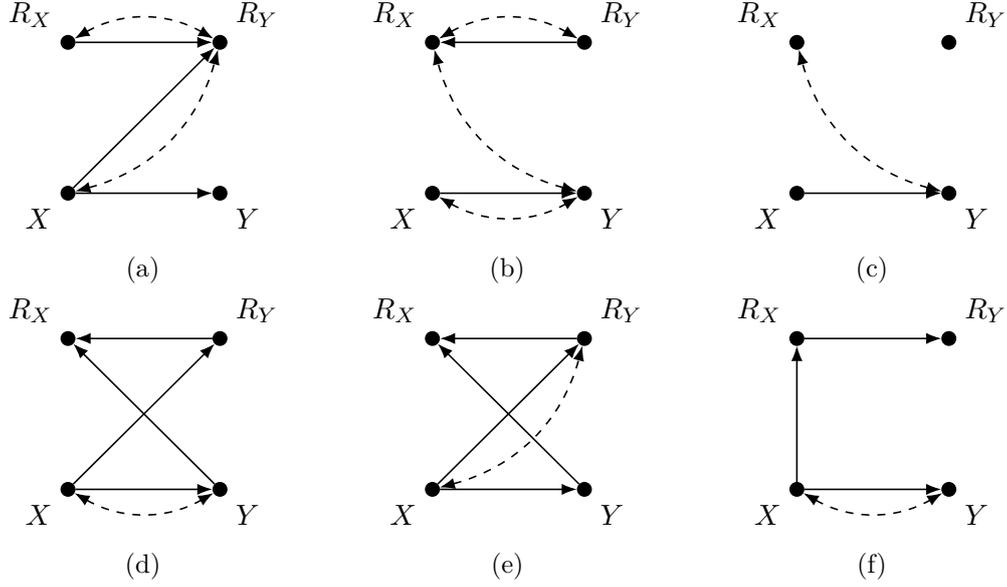

\subsection{Causal inference under case-control design}

Case-control design \citep{Breslow1996casecontrol} is commonly used in
epidemiology to study risk factors of rare diseases.  In the basic setup, a
fixed number of disease cases and a fixed number of controls are selected
for the risk factor measurements.  When the disease is rare, this design
leads to substantial savings in the sample size compared to simple random
sampling.  Figure~\ref{fig:ccexample}(\subref{fig:simplecc}) shows the
missingness graph for a situation where the inclusion to the study
(indicator $R_Y$) depends on the disease endpoint $Y$.  The risk factors $X$
are measured for the subset $R_Y=1$ but occasionally the values are missing
(indicator $R_X$).  It is immediately seen that neither the causal effect
$P(Y \mid \doo(X))$ nor conditional distribution $P(Y \mid X)$ can be
identified because of the edge $Y \rightarrow R_Y$.  However, if the
prevalence of the disease in the population, i.e.,~the marginal distribution
$P(Y)$, is known, the causal effect $P(Y \mid \doo(X))$ can be identified. 
The result is provided by \dosearch{}
\begin{equation} \label{eq:simplecc}
 P(Y \mid \doo(X)) = \frac{P(Y)P(X \mid Y, R_Y=1,
    R_X=1)}{\sum_{Y^\prime} P(Y^\prime)P(X \mid Y^\prime, R_Y=1, R_X = 1)}.
\end{equation}
In typical applications response $Y$ is binary but in the non-parametric
formula of Equation~\ref{eq:simplecc} response can be discrete or
continuous.
\begin{figure}[t!]
\centering
\begin{subfigure}[t]{0.40\textwidth}
 \centering
  \begin{tikzpicture}[scale=1.1]
\node [dot = {0}{0}{X}{below}] at (0,0) {};
\node [dot = {0}{0}{Y}{below}] at (1.6,0) {};
\node [dot = {0}{0}{R_Y}{left}] at (1.8,1) {};
\node [dot = {0}{0}{R_X}{left}] at (2.0,2) {};
\draw [->] (X) -- (Y) ;
\draw [->] (Y) -- (R_Y) ;
\draw [->] (R_Y) -- (R_X) ;
\end{tikzpicture}
  \caption{Basic case-control design.}
  \label{fig:simplecc}
\end{subfigure}
\hfill
\begin{subfigure}[t]{0.55\textwidth}
\centering
 \begin{tikzpicture}[scale=1.1]
\node [dot = {0}{0}{X}{below}] at (0,0) {};
\node [dot = {0}{0}{Z}{below}] at (1.6,0) {};
\node [dot = {0}{0}{Y}{below}] at (3.2,0) {};
\node [dot = {-0.1}{0}{R_Y}{left}] at (3.4,1) {};
\node [dot = {-0.1}{0}{R_X}{left}] at (3.6,2) {};
\node [dot = {0}{0}{R_Z}{right}] at (4.6,2) {};
\draw [->] (X) -- (Z) ;
\draw [->] (Z) -- (Y) ;
\draw [->] (Y) -- (R_Y) ;
\draw [->] (R_Y) -- (R_X) ;
\draw [->] (R_Y) -- (R_Z) ;
\draw [<->,dashed] (X) to [bend left=30] (Y) ;
\draw [<->,dashed] (R_X) to [bend right=30] (R_Y) ;
\draw [<->,dashed] (R_X) to [bend left=30] (R_Z) ;
\draw [<->,dashed] (R_Z) to [bend left=30] (R_Y) ;
\end{tikzpicture}
 \caption{Case-control design for the front-door situation.}
  \label{fig:frontdoorcc}
\end{subfigure}
 \caption{Missingness graph for the case-control examples.}
  \label{fig:ccexample}
\end{figure}
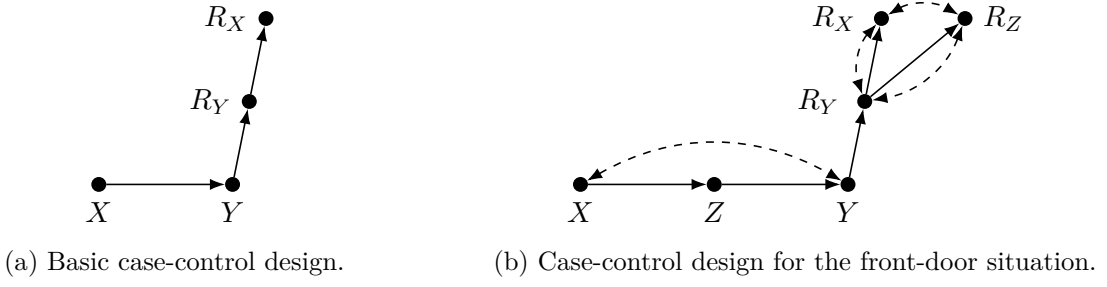
A more complicated example is shown in
Figure~\ref{fig:ccexample}(\subref{fig:frontdoorcc}) where the causal effect
of risk factor $X$ on disease endpoint $Y$ fulfills the front-door criterion
\citep{pearl1995causal} with respect to mediator $Z$ and the data are
collected from a case-control design where the selection depends $Y$ and
there is occasional item non-response in $X$ and $Z$.  We observe data
$P(Y^*,X^*,Z^*,R_Y,R_X,R_Z)$ and know the marginal distribution $P(Y)$ from
other sources.  Applying \dosearch{} we obtain the result
\begin{equation}
\begin{aligned}
&  P(Y \mid \doo(X)) = \\
&\quad \sum_Z \left[ \frac{\sum_{Y^\prime} P(Y^\prime)P(X,Z \mid
Y^\prime,R_X = 1, R_Y = 1, R_Z = 1)}{\sum_{Z^\prime,Y^\prime}
P(Y^\prime)P(X,Z^\prime \mid Y^\prime,R_X = 1, R_Y = 1, R_Z = 1)}\,
\right.  \times \\
&\qquad \sum_{X^\prime} \left( \sum_{Y^\prime,Z^\prime}
P(Y^\prime)P(X^\prime,Z^\prime \mid Y^\prime,R_X = 1, R_Y = 1, R_Z = 1)\,
\right.  \times \\
&\quad\qquad \left.  \left.  \frac{P(Y)P(X^\prime,Z \mid Y, R_X = 1, R_Y =
1,R_Z = 1)}{\sum_{Y^\prime} P(Y^\prime)P(X^\prime,Z \mid Y^\prime,R_X =
1,R_Y = 1,R_Z = 1)} \vphantom{\sum_{Z^\prime}}\right) \right].
\end{aligned}
\label{eq:frontdoorcc}
\end{equation}
Expression~\ref{eq:frontdoorcc} follows the general structure of the
front-door adjustment
\[
 P(Y \mid \doo(X)) = \sum_Z P(Z \mid X) \sum_{X^\prime} P(X^\prime) P(Y
   \mid X^\prime,Z),
\]
where
\begin{align*}
  P(Z \mid X) &= \frac{\sum_{Y^\prime} P(Y^\prime)P(X,Z \mid
     Y^\prime,R_X = 1, R_Y = 1, R_Z = 1)}{\sum_{Z^\prime,Y^\prime}
     P(Y^\prime)P(X,Z^\prime \mid Y^\prime,R_X = 1, R_Y = 1, R_Z = 1)}, \\
  P(X) &= \sum_{Y^\prime,Z^\prime} P(Y^\prime)P(X,Z^\prime \mid
     Y^\prime,R_X = 1, R_Y = 1, R_Z = 1), \\
  P(Y \mid X,Z) &= \frac{P(Y)P(X,Z \mid Y,R_X = 1,R_Y = 1,R_Z =
     1)}{\sum_{Y^\prime} P(Y^\prime)P(X,Z \mid Y^\prime,R_X = 1,R_Y = 1,R_Z =
     1)}.
\end{align*}
Note that $P(X,Y,Z) = P(Y)P(X,Z \mid Y,R_X = 1,R_Y = 1,R_Z = 1)$.  In
\citep{Karvanen2015studydesign}, a similar example was studied assuming that
$X$, $Z$ and $Y$ are binary but in Expression~\ref{eq:frontdoorcc} there
are no such restrictions.  This factorization can be obtained in
\proglang{R} as follows
\begin{CodeChunk}
\begin{CodeInput}
R> data <- "
+    p(x*,y*,z*,r_x,r_y,r_z)
+    p(y)
+  "
R> graph <- "
+    x -> z
+    z -> y
+    y -> r_y
+    x <-> y
+    r_y -> r_x
+    r_y -> r_z
+    r_y <-> r_x
+    r_y <-> r_z
+    r_z <-> r_x
+  "
R> md <- "r_x : x, r_y : y, r_z : z"
R> query1 <- "p(z|x)"
R> query2 <- "p(x)"
R> query3 <- "p(y|x,z)"
R> dosearch(data, query1, graph, missing_data = md)
\end{CodeInput}
\begin{CodeOutput}
\frac{\sum_{y}\left(p(y)p(x,z|r_x = 1,y,r_y = 1,r_z = 1)\right)}
  {\sum_{z} \sum_{y}\left(p(y)p(x,z|r_x = 1,y,r_y = 1,r_z = 1)\right)}
\end{CodeOutput}
\begin{CodeInput}
R> dosearch(data, query2, graph, missing_data = md)
\end{CodeInput}
\begin{CodeOutput}
\sum_{y,z}\left(p(y)p(x,z|r_x = 1,y,r_y = 1,r_z = 1)\right)
\end{CodeOutput}
\begin{CodeInput}
R> dosearch(data, query3, graph, missing_data = md)
\end{CodeInput}
\begin{CodeOutput}
\frac{\left(p(y)p(x,z|r_x = 1,y,r_y = 1,r_z = 1)\right)}
  {\sum_{y} \left(p(y)p(x,z|r_x = 1,y,r_y = 1,r_z = 1)\right)} 
\end{CodeOutput}
\end{CodeChunk}

\section{Discussion} \label{sect:discussion}

The presented algorithm, \dosearch{}, removes the need for manual
application of do-calculus, which is time-consuming and prone to errors. 
Systematic analyses such as the one in Section~\ref{sect:systematic} are
practically unreachable with manual application of do-calculus.  Superiority
of \dosearch{} over a simple forwards breadth-first search was attained
through a combination of a search heuristic and a reduction of the search
space.  Some further approaches were attempted but later discarded as
non-beneficial.  These include caching separation criteria that hold in the
graph after they are first evaluated, pre-computing valid subsets for each
subset size and enumerating subsets in an order of increasing cardinality.

As the simulations showed, our intuitive heuristic yielded significant
improvements in search performance.  The proximity function defined in
Section~\ref{sect:improvements} uses only the information contained in the
distributions themselves.  One approach could be to also take the structure
of the graph into account in the proximity function.  Further study is
needed for finding a heuristic that performs well when missing data
mechanisms are present in the graph.

The scalability of \dosearch{} is limited due to vast search space of
possibly identified causal effects.  Currently, algorithms with polynomial
complexity currently exist only for the simpler problems (see
Table~\ref{tab:stateofart}).  However, based on the simulation results,
\dosearch{} solves identifiability problems in graphs of ten~vertices 
in under two minutes on average.  By our observation, graphs typically analyzed
in literature related to identifiability problems have fewer vertices.  
The theoretical computational complexity of the general form of the causal
identifiability problem defined in Section~\ref{sect:problem} remains an
important and interesting question.

The search could also be used to obtain formulas that are in some sense
simpler than those produced by existing identifiability algorithms.  A
simplification algorithm by \citet{Tikka:simplifying} functions as a
post-processing step after the identifying formula has already been obtained
by the ID algorithm.  Given a measure of simplicity, the search heuristic
could be adjusted to find simple formulas directly without resorting to
separate simplification procedures.  In some specific scenarios, such as the
standard causal effect identifiability problem, an approach known as pruning
\citep{Tikka:pruning} could be incorporated into the search.  Pruning refers
to the removal of vertices from the graph, that are not required for
determining identifiability.

Finally we note that identifiability has also been studied under the
assumption that the functional relationships depicted by the causal model
are linear \citep{angrist1996,zander2016,chen2017} or non-parametric with
additive error terms \citep{peters2014,pena2017} and when the causal graph
is not completely known
\citep{IDA,ENTNER,hyttinen2015,perkovic2015,MALINSKY,jaber2018}.  Extending
the search in these directions is an interesting line of future research.

\section{Conclusion} \label{sect:conclusion}

We presented \dosearch{}: a do-calculus based search capable of solving
identifiability problems for which no known solutions exist.  This
contribution is especially useful for researchers working in the field of
causal inference to confirm theoretical results or to find counterexamples
to identifiability claims.  In practical terms, the search can also provide
solutions to complicated problems such as combining transportability and
selection bias, recovering from multiple bias sources or identifying causal
quantities in the presence of missing data that cannot be solved by any
other existing method.  The \proglang{R}~package \pkg{dosearch} providing an
implementation of \dosearch{} is available on CRAN.

\section*{Acknowledgments}

This work belongs to the thematic research area ``Decision analytics
utilizing causal models and multiobjective optimization'' (DEMO) supported
by Academy of Finland (grant number 311877).  AH was supported by Academy of
Finland through grant 295673.  The authors wish to acknowledge CSC -- IT
Center for Science, Finland, for computational resources.

\bibliography{ref}

\begin{thebibliography}{58}
\newcommand{\enquote}[1]{``#1''}
\providecommand{\natexlab}[1]{#1}
\providecommand{\url}[1]{\texttt{#1}}
\providecommand{\urlprefix}{URL }
\expandafter\ifx\csname urlstyle\endcsname\relax
  \providecommand{\doi}[1]{doi:\discretionary{}{}{}#1}\else
  \providecommand{\doi}{doi:\discretionary{}{}{}\begingroup
  \urlstyle{rm}\Url}\fi
\providecommand{\eprint}[2][]{\url{#2}}

\bibitem[{Angrist \emph{et~al.}(1996)Angrist, Imbens, and Rubin}]{angrist1996}
Angrist JD, Imbens GW, Rubin DB (1996).
\newblock \enquote{Identification of Causal Effects Using Instrumental
  Variables.}
\newblock \emph{Journal of the American Statistical Association},
  \textbf{91}(434), 444--455.
\newblock \doi{10.2307/2291629}.

\bibitem[{Bareinboim and
  Pearl(2012{\natexlab{a}})}]{Bareinboim:zidentifiability}
Bareinboim E, Pearl J (2012{\natexlab{a}}).
\newblock \enquote{Causal Inference by Surrogate Experiments:
  $z$-Identifiability.}
\newblock In N~{de Freitas}, K~Murphy (eds.), \emph{Proceedings of the 28th
  Conference on Uncertainty in Artificial Intelligence}, pp. 113--120. AUAI
  Press.

\bibitem[{Bareinboim and
  Pearl(2012{\natexlab{b}})}]{Bareinboim2012:selectionbias}
Bareinboim E, Pearl J (2012{\natexlab{b}}).
\newblock \enquote{Controlling Selection Bias in Causal Inference.}
\newblock In ND~Lawrence, M~Girolami (eds.), \emph{Proceedings of the 15th
  International Conference on Artificial Intelligence and Statistics},
  volume~22, pp. 100--108.

\bibitem[{Bareinboim and Pearl(2013)}]{bareinboim2013general}
Bareinboim E, Pearl J (2013).
\newblock \enquote{A General Algorithm for Deciding Transportability of
  Experimental Results.}
\newblock \emph{Journal of Causal Inference}, \textbf{1}, 107--134.
\newblock \doi{10.1515/jci-2012-0004}.

\bibitem[{Bareinboim and Pearl(2014)}]{bareinboim2014transportability}
Bareinboim E, Pearl J (2014).
\newblock \enquote{Transportability from Multiple Environments with Limited
  Experiments: Completeness Results.}
\newblock In \emph{Proceedings of the 27th Annual Conference on Neural
  Information Processing Systems}, pp. 280--288.

\bibitem[{Bareinboim and Tian(2015)}]{bareinboim2015recovering}
Bareinboim E, Tian J (2015).
\newblock \enquote{Recovering Causal Effects from Selection Bias.}
\newblock In \emph{Proceedings of the 29th AAAI Conference on Artificial
  Intelligence}, pp. 3475--3481.

\bibitem[{Bareinboim \emph{et~al.}(2014)Bareinboim, Tian, and
  Pearl}]{Bareinboim2014:selectionbias}
Bareinboim E, Tian J, Pearl J (2014).
\newblock \enquote{Recovering from Selection Bias in Causal and Statistical
  Inference.}
\newblock In \emph{Proceedings of the 28th AAAI Conference on Neural
  Information Processing Systems}.

\bibitem[{Bhattacharya \emph{et~al.}(2019)Bhattacharya, Nabi, Shpitser, and
  Robins}]{bhattacharya2019}
Bhattacharya R, Nabi R, Shpitser I, Robins JM (2019).
\newblock \enquote{Identification in Missing Data Models Represented by
  Directed Acyclic Graphs.}
\newblock In \emph{Proceedings of the 35th Conference on Uncertainty in
  Artificial Intelligence}.

\bibitem[{Breslow(1996)}]{Breslow1996casecontrol}
Breslow NE (1996).
\newblock \enquote{Statistics in Epidemiology: The Case-Control Study.}
\newblock \emph{Journal of the American Statistical Association},
  \textbf{91}(433), 14--28.
\newblock \doi{10.2307/2291379}.

\bibitem[{Chen \emph{et~al.}(2017)Chen, Kumor, and Bareinboim}]{chen2017}
Chen B, Kumor D, Bareinboim E (2017).
\newblock \enquote{Identification and Model Testing in Linear Structural
  Equation Models Using Auxiliary Variables.}
\newblock In D~Precup, YW~Teh (eds.), \emph{Proceedings of the 34th
  International Conference on Machine Learning}, volume~70, pp. 757--766.

\bibitem[{Correa and Bareinboim(2017)}]{Correa2017}
Correa J, Bareinboim E (2017).
\newblock \enquote{Causal Effect Identification by Adjustment under Confounding
  and Selection Biases.}
\newblock In \emph{Proceedings of the 31st AAAI Conference on Artificial
  Intelligence}.

\bibitem[{Correa \emph{et~al.}(2018)Correa, Tian, and Bareinboim}]{Correa2018}
Correa J, Tian J, Bareinboim E (2018).
\newblock \enquote{Generalized Adjustment under Confounding and Selection
  Biases.}
\newblock In \emph{Proceedings of the 32nd AAAI Conference on Artificial
  Intelligence}.

\bibitem[{Csardi and Nepusz(2006)}]{igraph}
Csardi G, Nepusz T (2006).
\newblock \enquote{The \pkg{igraph} Software Package for Complex Network
  Research.}
\newblock \emph{InterJournal}, \textbf{Complex Systems}, 1695.

\bibitem[{Danks \emph{et~al.}(2009)Danks, Glymour, and
  Tillman}]{danks2009integrating}
Danks D, Glymour C, Tillman RE (2009).
\newblock \enquote{Integrating Locally Learned Causal Structures with
  Overlapping Variables.}
\newblock In \emph{Advances in Neural Information Processing Systems}, pp.
  1665--1672.

\bibitem[{Dawid(2002)}]{dawid2002influence}
Dawid AP (2002).
\newblock \enquote{Influence Diagrams for Causal Modelling and Inference.}
\newblock \emph{International Statistical Review}, \textbf{70}(2), 161--189.
\newblock \doi{10.2307/1403901}.

\bibitem[{Eddelbuettel and Francois(2011)}]{Rcpp}
Eddelbuettel D, Francois R (2011).
\newblock \enquote{\pkg{Rcpp}: Seamless \proglang{R} and \proglang{C++}
  Integration.}
\newblock \emph{Journal of Statistical Software}, \textbf{40}(8), 1--18.
\newblock \doi{10.18637/jss.v040.i08}.

\bibitem[{Entner \emph{et~al.}(2013)Entner, Hoyer, and Spirtes}]{ENTNER}
Entner D, Hoyer P, Spirtes P (2013).
\newblock \enquote{Data-Driven Covariate Selection for Nonparametric Estimation
  of Causal Effects.}
\newblock In \emph{Proceedings of the 16th International Conference on
  Artificial Intelligence and Statistics}, volume~31, pp. 256--264. PMLR.

\bibitem[{Greenland \emph{et~al.}(1999)Greenland, Robins, and
  Pearl}]{greenland1999}
Greenland S, Robins JM, Pearl J (1999).
\newblock \enquote{Confounding and Collapsibility in Causal Inference.}
\newblock \emph{Statistical Science}, \textbf{14}(1), 29--46.
\newblock \doi{10.1214/ss/1009211805}.

\bibitem[{Huang and Valtorta(2006{\natexlab{a}})}]{HUANGVALTORTA}
Huang Y, Valtorta M (2006{\natexlab{a}}).
\newblock \enquote{Identifiability in Causal {B}ayesian Networks: A Sound and
  Complete Algorithm.}
\newblock In \emph{Proceedings of the 21st National Conference on Artificial
  Intelligence -- Volume 2}, pp. 1149--1154. AAAI Press.

\bibitem[{Huang and Valtorta(2006{\natexlab{b}})}]{huangvaltorta:complete}
Huang Y, Valtorta M (2006{\natexlab{b}}).
\newblock \enquote{Pearl's Calculus of Intervention Is Complete.}
\newblock In \emph{Proceedings of the 22nd Conference on Uncertainty in
  Artificial Intelligence}, pp. 217--224. AUAI Press.

\bibitem[{Hyttinen \emph{et~al.}(2012)Hyttinen, Eberhardt, and
  Hoyer}]{heh2012uai}
Hyttinen A, Eberhardt F, Hoyer PO (2012).
\newblock \enquote{Causal Discovery of Linear Cyclic Models from Multiple
  Experimental Data Sets with Overlapping Variables.}
\newblock In \emph{Proceedings of the 28th Conference on Uncertainty in
  Artificial Intelligence}, pp. 387--396.

\bibitem[{Hyttinen \emph{et~al.}(2015)Hyttinen, Eberhardt, and
  Järvisalo}]{hyttinen2015}
Hyttinen A, Eberhardt F, Järvisalo M (2015).
\newblock \enquote{Do-Calculus When the True Graph Is Unknown.}
\newblock In \emph{Proceedings of the 31st Conference on Uncertainty in
  Artificial Intelligence}, pp. 395--404. AUAI Press.

\bibitem[{Jaber \emph{et~al.}(2018)Jaber, Zhang, and Bareinboim}]{jaber2018}
Jaber A, Zhang J, Bareinboim E (2018).
\newblock \enquote{Causal Identification under {M}arkov Equivalence.}
\newblock In \emph{Proceedings of the 34th Conference on Uncertainty in
  Artificial Intelligence}, pp. 978--987. AUAI Press.

\bibitem[{Kalisch \emph{et~al.}(2012)Kalisch, Mächler, Colombo, Maathuis, and
  Bühlmann}]{pcalg}
Kalisch M, Mächler M, Colombo D, Maathuis MH, Bühlmann P (2012).
\newblock \enquote{Causal Inference Using Graphical Models with the
  \proglang{R} Package \pkg{pcalg}.}
\newblock \emph{Journal of Statistical Software}, \textbf{47}(11), 1--26.
\newblock \doi{10.18637/jss.v047.i11}.

\bibitem[{Karvanen(2015)}]{Karvanen2015studydesign}
Karvanen J (2015).
\newblock \enquote{Study Design in Causal Models.}
\newblock \emph{Scandinavian Journal of Statistics}, \textbf{42}(2), 361--377.
\newblock \doi{10.1111/sjos.12110}.

\bibitem[{Koller and Friedman(2009)}]{Koller09}
Koller D, Friedman N (2009).
\newblock \emph{Probabilistic Graphical Models: Principles and Techniques}.
\newblock MIT Press.

\bibitem[{Lauritzen(2000)}]{lauritzen2000causal}
Lauritzen SL (2000).
\newblock \enquote{Causal Inference from Graphical Models.}
\newblock In OE~Barndorff-Nielsen, DR~Cox, C~Klüppelberg (eds.), \emph{Complex
  Stochastic Systems}, pp. 67--107. CRC Press.
\newblock \doi{10.1201/9781420035988}.

\bibitem[{Lee \emph{et~al.}(2019)Lee, Correa, and Bareinboim}]{lee2019general}
Lee S, Correa J, Bareinboim E (2019).
\newblock \enquote{General Identifiability with Arbitrary Surrogate
  Experiments.}
\newblock In \emph{Proceedings of the 35th Conference on Uncertainty in
  Artificial Intelligence}.

\bibitem[{Little and Rubin(1986)}]{missing}
Little RJA, Rubin DB (1986).
\newblock \emph{Statistical Analysis with Missing Data}.
\newblock John Wiley \& Sons, New York, NY, USA.

\bibitem[{Maathuis and Colombo(2015)}]{MAATHUIS_GBD}
Maathuis MH, Colombo D (2015).
\newblock \enquote{A Generalized Backdoor Criterion.}
\newblock \emph{The Annals of Statistics}, pp. 1060--1088.
\newblock \doi{10.1214/14-aos1295}.

\bibitem[{Maathuis \emph{et~al.}(2009)Maathuis, Kalisch, and Bühlmann}]{IDA}
Maathuis MH, Kalisch M, Bühlmann P (2009).
\newblock \enquote{Estimating High-Dimensional Intervention Effects from
  Observational Data.}
\newblock \emph{The Annals of Statistics}, \textbf{37}(6A), 3133--3164.
\newblock \doi{10.1214/09-aos685}.

\bibitem[{Malinsky and Spirtes(2017)}]{MALINSKY}
Malinsky D, Spirtes P (2017).
\newblock \enquote{Estimating Bounds on Causal Effects in High-Dimensional and
  Possibly Confounded Systems.}
\newblock \emph{International Journal of Approximate Reasoning}, \textbf{88},
  371--384.
\newblock \doi{10.1016/j.ijar.2017.06.005}.

\bibitem[{Mohan and Pearl(2018)}]{mohan2018}
Mohan K, Pearl J (2018).
\newblock \enquote{Graphical Models for Processing Missing Data.}
\newblock {arXiv:} 1801.03583 [stat.ME],
  \urlprefix\url{https://arxiv.org/abs/1801.03583}.

\bibitem[{Mohan \emph{et~al.}(2013)Mohan, Pearl, and Tian}]{Mohan2013}
Mohan K, Pearl J, Tian J (2013).
\newblock \enquote{Graphical Models for Inference with Missing Data.}
\newblock In \emph{Proceedings of the 26th International Conference on Neural
  Information Processing Systems}, pp. 1277--1285.

\bibitem[{Pearl(1995)}]{pearl1995causal}
Pearl J (1995).
\newblock \enquote{Causal Diagrams for Empirical Research.}
\newblock \emph{Biometrika}, \textbf{82}(4), 669--688.
\newblock \doi{10.2307/2337329}.

\bibitem[{Pearl(2009)}]{Pearl:book2009}
Pearl J (2009).
\newblock \emph{Causality: Models, Reasoning, and Inference}.
\newblock 2nd edition. Cambridge University Press.

\bibitem[{Perković \emph{et~al.}(2015)Perković, Textor, Kalisch, and
  Maathuis}]{perkovic2015}
Perković E, Textor J, Kalisch M, Maathuis M (2015).
\newblock \enquote{A Complete Generalized Adjustment Criterion.}
\newblock In \emph{Proceedings of the 31st Conference on Uncertainty in
  Artificial Intelligence}, pp. 682--691. AUAI Press.

\bibitem[{Peters \emph{et~al.}(2014)Peters, Mooij, Janzing, and
  Schölkopf}]{peters2014}
Peters J, Mooij JM, Janzing D, Schölkopf B (2014).
\newblock \enquote{Causal Discovery with Continuous Additive Noise Models.}
\newblock \emph{Journal of Machine Learning Research}, \textbf{15}, 2009--2053.

\bibitem[{Peña and Bendtsen(2017)}]{pena2017}
Peña JM, Bendtsen M (2017).
\newblock \enquote{Causal Effect Identification in Acyclic Directed Mixed
  Graphs and Gated Models.}
\newblock \emph{International Journal of Approximate Reasoning}, \textbf{90},
  56--75.
\newblock \doi{10.1016/j.ijar.2017.06.015}.

\bibitem[{{\proglang{R} Core Team}(2021)}]{rsoft}
{\proglang{R} Core Team} (2021).
\newblock \emph{\proglang{R}: A Language and Environment for Statistical
  Computing}.
\newblock \proglang{R} Foundation for Statistical Computing, Vienna, Austria.
\newblock \urlprefix\url{https://www.R-project.org/}.

\bibitem[{Richardson(2003)}]{msep}
Richardson T (2003).
\newblock \enquote{Markov Properties for Acyclic Directed Mixed Graphs.}
\newblock \emph{Scandinavian Journal of Statistics}, \textbf{30}(1), 145--157.
\newblock \doi{10.1111/1467-9469.00323}.

\bibitem[{Sharma \emph{et~al.}(2019)Sharma, Kiciman \emph{et~al.}}]{dowhy}
Sharma A, Kiciman E, \emph{et~al.} (2019).
\newblock \emph{\pkg{Do{W}hy}: {A \proglang{Python} Package for Causal
  Inference}}.
\newblock \urlprefix\url{https://github.com/microsoft/dowhy}.

\bibitem[{Shpitser \emph{et~al.}(2015)Shpitser, Mohan, and
  Pearl}]{Shpitser2015}
Shpitser I, Mohan K, Pearl J (2015).
\newblock \enquote{Missing Data as a Causal and Probabilistic Problem.}
\newblock In M~Meila, T~Heskes (eds.), \emph{Proceedings of the 31st Conference
  on Uncertainty in Artificial Intelligence}, pp. 802--811. AUAI Press.

\bibitem[{Shpitser and Pearl(2006{\natexlab{a}})}]{Shpitser_conditional}
Shpitser I, Pearl J (2006{\natexlab{a}}).
\newblock \enquote{Identification of Conditional Interventional Distributions.}
\newblock In \emph{Proceedings of the 22nd Conference on Uncertainty in
  Artificial Intelligence}, pp. 437--444. AUAI Press.

\bibitem[{Shpitser and Pearl(2006{\natexlab{b}})}]{Shpitser}
Shpitser I, Pearl J (2006{\natexlab{b}}).
\newblock \enquote{Identification of Joint Interventional Distributions in
  Recursive Semi-{M}arkovian Causal Models.}
\newblock In \emph{Proceedings of the 21st National Conference on Artificial
  Intelligence -- Volume 2}, pp. 1219--1226. AAAI Press.

\bibitem[{Shpitser and Pearl(2008)}]{shpitser2008}
Shpitser I, Pearl J (2008).
\newblock \enquote{Complete Identification Methods for the Causal Hierarchy.}
\newblock \emph{Journal of Machine Learning Research}, \textbf{9}, 1941--1979.

\bibitem[{Spirtes \emph{et~al.}(1993)Spirtes, Glymour, and Scheines}]{SGS}
Spirtes P, Glymour C, Scheines R (1993).
\newblock \emph{Causation, Prediction, and Search}.
\newblock 2nd edition. Springer-Verlag, New York, NY.
\newblock \doi{10.1007/978-1-4612-2748-9}.

\bibitem[{Textor \emph{et~al.}(2016)Textor, {van der Zander}, Gilthorpe,
  Liśkiewicz, and Ellison}]{dagitty}
Textor J, {van der Zander} B, Gilthorpe MS, Liśkiewicz M, Ellison GT (2016).
\newblock \enquote{Robust Causal Inference Using Directed Acyclic Graphs: The
  \proglang{R}~package \pkg{dagitty}.}
\newblock \emph{International Journal of Epidemiology}, \textbf{45}(6),
  1887--1894.
\newblock \doi{10.1093/ije/dyw341}.

\bibitem[{Tikka \emph{et~al.}(2020)Tikka, Hyttinen, and Karvanen}]{dosearch}
Tikka S, Hyttinen A, Karvanen J (2020).
\newblock \emph{\pkg{dosearch}: Causal Effect Identification from Multiple
  Incomplete Data Sources}.
\newblock \proglang{R}~package version~1.0.5,
  \urlprefix\url{https://CRAN.R-project.org/package=dosearch}.

\bibitem[{Tikka and Karvanen(2017{\natexlab{a}})}]{Tikka:identifying}
Tikka S, Karvanen J (2017{\natexlab{a}}).
\newblock \enquote{Identifying Causal Effects with the \proglang{R} Package
  \pkg{causaleffect}.}
\newblock \emph{Journal of Statistical Software}, \textbf{76}(12), 1--30.
\newblock \doi{10.18637/jss.v076.i12}.

\bibitem[{Tikka and Karvanen(2017{\natexlab{b}})}]{Tikka:simplifying}
Tikka S, Karvanen J (2017{\natexlab{b}}).
\newblock \enquote{Simplifying Probabilistic Expressions in Causal Inference.}
\newblock \emph{Journal of Machine Learning Research}, \textbf{18}(36), 1--30.

\bibitem[{Tikka and Karvanen(2018)}]{Tikka:pruning}
Tikka S, Karvanen J (2018).
\newblock \enquote{Enhancing Identification of Causal Effects by Pruning.}
\newblock \emph{Journal of Machine Learning Research}, \textbf{18}(194), 1--23.

\bibitem[{Tikka and Karvanen(2019)}]{Tikka:surrogate}
Tikka S, Karvanen J (2019).
\newblock \enquote{Surrogate Outcomes and Transportability.}
\newblock \emph{International Journal of Approximate Reasoning}, \textbf{108},
  21--37.
\newblock \doi{10.1016/j.ijar.2019.02.007}.

\bibitem[{Tillman and Spirtes(2011)}]{tillman2011learning}
Tillman R, Spirtes P (2011).
\newblock \enquote{Learning Equivalence Classes of Acyclic Models with Latent
  and Selection Variables from Multiple Datasets with Overlapping Variables.}
\newblock In \emph{Proceedings of the 14th International Conference on
  Artificial Intelligence and Statistics}, pp. 3--15.

\bibitem[{Triantafillou and Tsamardinos(2015)}]{overlapping_sofia}
Triantafillou S, Tsamardinos I (2015).
\newblock \enquote{Constraint-Based Causal Discovery from Multiple
  Interventions over Overlapping Variable Sets.}
\newblock \emph{Journal of Machine Learning Research}, \textbf{16}, 2147--2205.

\bibitem[{Triantafillou \emph{et~al.}(2010)Triantafillou, Tsamardinos, and
  Tollis}]{triantafillou2010learning}
Triantafillou S, Tsamardinos I, Tollis I (2010).
\newblock \enquote{Learning Causal Structure from Overlapping Variable Sets.}
\newblock In \emph{Proceedings of the 13th International Conference on
  Artificial Intelligence and Statistics}, pp. 860--867.

\bibitem[{{van der Zander} and Liskiewicz(2016)}]{zander2016}
{van der Zander} B, Liskiewicz M (2016).
\newblock \enquote{On Searching for Generalized Instrumental Variables.}
\newblock In \emph{Proceedings of the 19th International Conference on
  Artificial Intelligence and Statistics}.

\bibitem[{{van der Zander} \emph{et~al.}(2019){van der Zander}, Liśkiewicz,
  and Textor}]{VANDERZANDER20191}
{van der Zander} B, Liśkiewicz M, Textor J (2019).
\newblock \enquote{Separators and Adjustment Sets in Causal Graphs: Complete
  Criteria and An Algorithmic Framework.}
\newblock \emph{Artificial Intelligence}, \textbf{270}, 1--40.
\newblock \doi{10.1016/j.artint.2018.12.006}.

\end{thebibliography}

\end{document}